\newtheorem*{definition*}{Definition}
\DeclareMathOperator*{\argmax}{arg\,max}
\DeclareMathOperator*{\argmin}{arg\,min}
\definecolor{expert}{HTML}{008000}
\definecolor{error}{HTML}{f96565}
\newcommand{\tikzAngleOfLine}{\tikz@AngleOfLine}
\def\tikz@AngleOfLine(#1)(#2)#3{%
\pgfmathanglebetweenpoints{%
\pgfpointanchor{#1}{center}}{%
\pgfpointanchor{#2}{center}}
\pgfmathsetmacro{#3}{\pgfmathresult}%
}
\declaretheoremstyle[
    headfont=\normalfont\bfseries, 
    bodyfont = \normalfont\itshape]{mystyle}
\newcommand*{\BraceAmplitude}{0.4em}%
\newcommand*{\VerticalOffset}{0.5ex}%  
\newcommand*{\HorizontalOffset}{0.0em}% 
\newcommand*{\blocktextwid}{3.0cm}%
\NewDocumentCommand{\InsertLeftBrace}{%
	O{} % #1 = draw options
	O{\HorizontalOffset,\VerticalOffset} % #2 = optional brace shift options
	O{\blocktextwid} % #3 = optional text width
	m   % #4 = top tikzmark
	m   % #5 = bottom tikzmark
	m   % #6 = node text
}{%
	\begin{tikzpicture}[overlay,remember picture]
	\coordinate (Brace Top)    at ($(#4.north) + (#2)$);
	\coordinate (Brace Bottom) at ($(#5.south) + (#2)$);
	\draw [decoration={brace, amplitude=\BraceAmplitude}, decorate, thick, draw=black, #1]
	(Brace Bottom) -- (Brace Top) 
	node [pos=0.5, anchor=east, align=left, text width=#3, color=black, xshift=\BraceAmplitude] {#6};
	\end{tikzpicture}%
}%
\NewDocumentCommand{\InsertRightBrace}{%
	O{} % #1 = draw options
	O{\HorizontalOffset,\VerticalOffset} % #2 = optional brace shift options
	O{\blocktextwid} % #3 = optional text width
	m   % #4 = top tikzmark
	m   % #5 = bottom tikzmark
	m   % #6 = node text
}{%
	\begin{tikzpicture}[overlay,remember picture]
	\coordinate (Brace Top)    at ($(#4.north) + (#2)$);
	\coordinate (Brace Bottom) at ($(#5.south) + (#2)$);
	\draw [decoration={brace, amplitude=\BraceAmplitude}, decorate, thick, draw=black, #1]
	(Brace Top) -- (Brace Bottom) 
	node [pos=0.5, anchor=west, align=left, text width=#3, color=black, xshift=\BraceAmplitude] {#6};
	\end{tikzpicture}%
}%
\NewDocumentCommand{\InsertTopBrace}{%
	O{} % #1 = draw options
	O{\HorizontalOffset,\VerticalOffset} % #2 = optional brace shift options
	O{\blocktextwid} % #3 = optional text width
	m   % #4 = top tikzmark
	m   % #5 = bottom tikzmark
	m   % #6 = node text
}{%
	\begin{tikzpicture}[overlay,remember picture]
	\coordinate (Brace Top)    at ($(#4.west) + (#2)$);
	\coordinate (Brace Bottom) at ($(#5.east) + (#2)$);
	\draw [decoration={brace, amplitude=\BraceAmplitude}, decorate, thick, draw=black, #1]
	(Brace Top) -- (Brace Bottom) 
	node [pos=0.5, anchor=south, align=left, text width=#3, color=black, xshift=\BraceAmplitude] {#6};
	\end{tikzpicture}%
}%
\definecolor{cof}{RGB}{219,144,71}
\definecolor{pur}{RGB}{186,146,162}
\definecolor{greeo}{RGB}{91,173,69}
\definecolor{greet}{RGB}{52,111,72}
\def\1{\mathbbm{1}}
\newenvironment{keywords}
{\bgroup\leftskip 20pt\rightskip 20pt \small\noindent{\bfseries
Keywords:} \ignorespaces}%
{\par\egroup\vskip 0.25ex}
\newlength\aftertitskip     \newlength\beforetitskip
\newlength\interauthorskip  \newlength\aftermaketitskip
\newcommand{\stepa}[1]{\overset{\rm (a)}{#1}}
\newcommand{\stepb}[1]{\overset{\rm (b)}{#1}}
\newcommand{\stepc}[1]{\overset{\rm (c)}{#1}}
\definecolor{myblue}{rgb}{.8, .8, 1}
\definecolor{mathblue}{rgb}{0.2472, 0.24, 0.6} % mathematica's Color[1, 1--3]
\definecolor{mathred}{rgb}{0.6, 0.24, 0.442893}
\definecolor{mathyellow}{rgb}{0.6, 0.547014, 0.24}
\crefname{lemma}{Lemma}{Lemmas}
\crefname{proposition}{Proposition}{Propositions}
\crefname{remark}{Remark}{Remarks}
\crefname{corollary}{Corollary}{Corollaries}
\crefname{definition}{Definition}{Definitions}
\crefname{conjecture}{Conjecture}{Conjectures}
\crefname{figure}{Fig.}{Figures}
\crefname{assumption}{Assumption}{Assumptions}
\theoremstyle{plain}
\newtheorem{theorem}{Theorem}[section]
\newtheorem{lemma}[theorem]{Lemma}
\newtheorem{corollary}[theorem]{Corollary}
\theoremstyle{definition}
\newtheorem{assumption}[theorem]{Assumption}
\theoremstyle{remark}
\newtheorem{remark}[theorem]{Remark}
\newtheorem{abstraction}{Abstraction}
\def\ECRepeatTheorems{%
\theoremstyle{THkey}%
\newtheorem{repeattheorem}{Theorem}
\newtheorem{repeatlemma}{Lemma}

\theoremstyle{EXkey}

}
\newcommand {\Bcal} {{\mathcal{B}}}
\newcommand {\Ocal} {{\mathcal{O}}}
\newcommand {\Fcal} {{\mathcal{F}}}
\newcommand {\widetildeO} {{\widetilde{O}}}
\newcommand{\indic}{\mathds{1}}
\newcommand{\indt}{\mathrel{\text{\scalebox{1.07}{$\perp\mkern-10mu\perp$}}}}
\newcommand{\Tr}{\mathsf{tr}}
\newcommand{\Var}{\mathsf{Var}}
\newcommand{\btheta}{{\boldsymbol{\theta}}}
\newcommand{\bhtheta}{{\boldsymbol{\widehat\theta}}}
\newcommand{\bx}{{\boldsymbol{x}}}
\newcommand{\bA}{{\boldsymbol{A}}}
\newcommand{\bD}{{\boldsymbol{D}}}
\newcommand{\bZ}{{\boldsymbol{Z}}}
\newcommand{\bI}{{\boldsymbol{I}}}
\newcommand{\bV}{{\boldsymbol{V}}}
\newcommand{\bz}{{\boldsymbol{z}}}
\newcommand{\hr}{\widehat{r}}
\newcommand{\br}{\overline{r}}
\newcommand{\hb}{\widehat{b}}
\newcommand {\E} {{\mathbb{E}}}
\newcommand {\R} {{\mathbb{R}}}
\newcommand{\Reglte}{{R}}
\newcommand\numberthis{\addtocounter{equation}
{1}\tag{\theequation}}
\DeclarePairedDelimiter{\abs}{\lvert}{\rvert}
\DeclarePairedDelimiter{\nor}{\|}{\|}
\DeclarePairedDelimiter{\parr}{(}{)}
\DeclarePairedDelimiter{\parq}{[}{]}
\DeclarePairedDelimiter{\bra}{\lbrace}{\rbrace}
\begin{document}
%\begin{frontmatter}

% "Title of the paper"
\title{Joint Value Estimation and Bidding in Repeated First-Price Auctions}

\author{Yuxiao Wen, Yanjun Han, Zhengyuan Zhou\thanks{Yuxiao Wen is with the Courant Institute of Mathematical Sciences, New York University, email: \url{yuxiaowen@nyu.edu}. Yanjun Han is with the Courant Institute of Mathematical Sciences and the Center for Data Science, New York University, email: \url{yanjunhan@nyu.edu}. Zhengyuan Zhou is with the Stern School of Business, New York University, and Arena Technologies, email: \url{zz26@stern.nyu.edu}.}}

\maketitle
%\thankstext{T1}{Footnote to the title with the ``thankstext'' command.}

\begin{abstract}%
We study regret minimization in repeated first-price auctions (FPAs), where a bidder observes only the realized outcome after each auction---win or loss. This setup reflects practical scenarios in online display advertising where the actual value of an impression depends on the difference between two potential outcomes, such as clicks or conversion rates, when the auction is won versus lost. We incorporate causal inference into this framework and analyze the challenging case where only the treatment effect admits a simple dependence on observable features. Under this framework, we propose algorithms that jointly estimate private values and optimize bidding strategies under two different feedback types on the highest other bid (HOB): the full-information feedback where the HOB is always revealed, and the binary feedback where the bidder only observes the win-loss indicator. Under both cases, our algorithms are shown to achieve near-optimal regret bounds. Notably, our framework enjoys a unique feature that the treatments are actively chosen, and hence eliminates the need for the overlap condition commonly required in causal inference.
\end{abstract}

\begin{keywords}%
Digital advertising, Repeated auctions, Online learning, Contextual bandits, Minimax regret %
\end{keywords}

\section{Introduction}
\label{sec:intro}
Recent years have seen an industry-wide shift from second-price to first-price auctions (FPAs) in online advertising for display ads, beginning with platforms like OpenX, AppNexus, and Index Exchange \citep{Exchange}, and extending to Google AdX \citep{Google} and AdSense \citep{wong2021moving}, the world's largest ad exchanges. Unlike second-price auctions, FPAs are non-truthful, introducing critical challenges for real-time algorithmic bidding. One of these challenges is to learn the distribution of the highest other bids (HOBs) for developing the optimal bid shading strategy. To address this, \citep{balseiro2023contextual,han2020learning,han2024optimal} introduced the framework of regret minimization in repeated FPAs, developing algorithms that achieve near-optimal regrets under various feedback mechanisms. This framework has since been extended to accommodate many other practical considerations, such as latency \citep{zhang2021meow}, additional side information \citep{zhang2022leveraging}, budget constraints \citep{wang2023learning}, return on investment (ROI) constraints \citep{aggarwal2024no}, non-stationarity \citep{hu2025learning}, to name a few. We refer to a recent survey \citep{aggarwal2024auto} for an overview of these developments. 

\begin{figure*}[ht]
% \vskip 0.2in
\begin{center}
\centerline{\includegraphics[width=0.8\columnwidth]{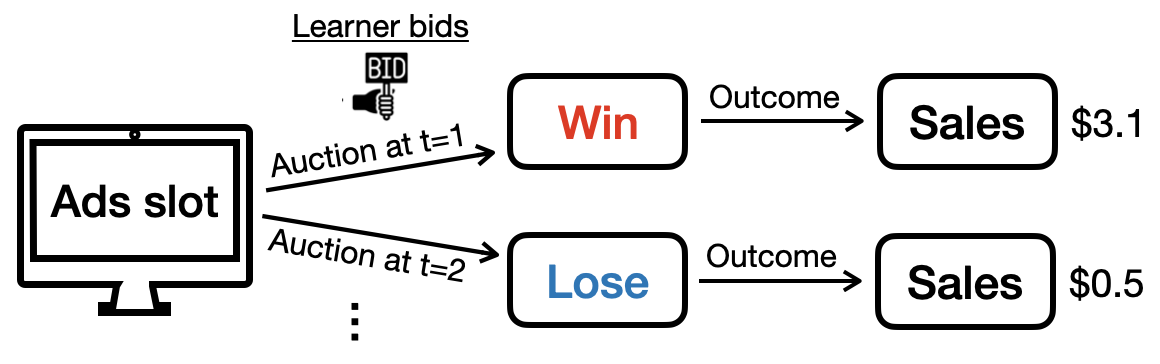}}
\vskip -0.1in
\caption{The learner repeatedly bids for advertisement slots and observes the (random) outcome sales after winning or losing the auctions.}
\label{fig:FPA}
\end{center}
\vskip -0.2in
\end{figure*}

Meanwhile, another important challenge in FPAs is the estimation of the actual value of an ad impression (i.e. advertising slot). In the above line of literature, this task of value estimation is assumed to be independent of bidding, so that a black-box access to these value estimates can be assumed. In practice, however, this value depends on bidding outcomes as it is the difference between two potential outcomes, commonly clicks or conversion rates, when the auction is won versus lost. As shown in \cref{fig:FPA}, an advertiser can observe the outcome of winning the advertising slot when he/she wins the auction, and the outcome of doing nothing when losing the auction; the value of the advertising slot is the difference between these two potential outcomes. Consequently, as \citep{waisman2024online} observes, the problem of value estimation can be framed as a specific instance of causal inference, where the target quantity is the treatment effect, and the treatment is determined by the bidding outcome. This introduces the following challenges:
\begin{itemize}
    \item First, as in all causal inference problems, only one potential outcome is observable for each subject, making direct estimation of the treatment effect inherently difficult. 
    \item Second, unique to bidding in FPAs, the treatment (winning or losing the auction) is determined by the bidding strategy, which, in turn, depends on past outcomes and the current estimate of the actual value. This interdependence complicates both the estimation and the bidding process, and this is a structure overlooked by the existing literature. 
\end{itemize}

To address the aforementioned challenges, practitioners often use randomized controlled experiments for value estimation, or pretend that value estimation and bidding were independent processes so as to decouple these tasks. However, as \citep{waisman2024online} shows, the first approach can be very cost inefficient, while the latter approach may introduce significant biases (e.g. significantly overestimate/underestimate some values) into the final bidding outcome. 

In this paper, we provide a first systematic study for joint value estimation and bidding in FPAs, by formulating a specific causal inference problem and applying a bandit framework to it. Specifically, we make the following contributions:
\begin{itemize}
    \item \textbf{Joint value estimation and bidding:} We formulate the problem of joint value estimation and bidding (henceforth abbreviated as JVEB) in repeated FPAs, by incorporating a causal inference problem into the bandit learning framework. We represent the value as the difference between two potential outcomes observed upon winning and losing, respectively, and introduce a realistic model where the treatment effect takes a simpler form compared to the potential outcomes.  

    \item \textbf{General framework of HOB estimation:} We introduce a flexible black-box framework for HOB estimation that captures a broad class of estimation procedures and supports final bidding guarantees derived solely from this abstraction. This framework cleanly separates HOB estimation from the core tasks of JVEB, and it accommodates diverse feedback settings, including full-information HOB feedback, where HOB estimation is inherently decoupled, and binary HOB feedback, where the remaining interdependence can be carefully managed.

    \item \textbf{Regret guarantees:} Within our JVEB framework, we design a bandit learning algorithm that attains regret $\widetilde{O}(\sqrt{\Delta dT})$ against an oracle who makes optimal dynamic bids. Here $d$ is the feature dimension, and $\Delta$ is a parameter arising from our black-box HOB estimation framework. Under full-information HOB feedback, this bound specializes to $\widetilde{O}(\sqrt{dT})$ regret under i.i.d. HOBs and $\widetilde{O}(d\sqrt{T})$ regret for linear contextual HOBs. Under binary HOB feedback, it yields $\widetilde{O}(d^{\frac{1}{3}}T^{\frac{2}{3}})$ regret. 

    \item \textbf{Algorithmic insights:} We provide several core algorithmic insights for the JVEB problem. For value estimation, building on classical causal inference tools such as the inverse propensity weighting (IPW) estimator, we employ a carefully designed weighted least squares (WLS) procedure to tightly control estimation variance. For bidding, beyond standard bandit techniques like arm elimination, we introduce a new decision rule, the ``better of two UCBs,'' which plays a central role in reducing variance and achieving a small regret. For the joint task, we develop techniques to handle potential interdependencies among value estimation, HOB estimation, and bidding. The resulting algorithm combines causal inference and bandit learning in a deliberate way that avoids costly randomized experiments and unrealistic overlap assumptions.
\end{itemize}

\subsection{Related Work}

\paragraph{Bidding in FPAs.} FPAs, also known as descending-price or Dutch auctions, have a long history. Early study of FPAs typically took a game-theoretic view dating back to the foundational work of \citep{vickrey1961counterspeculation, myerson1981optimal}, and there has been significant progress in understanding the computational complexity of Bayesian Nash equilibrium for online FPAs \citep{wang2020bayesian,filos2021complexity,bichler2021learning,bichler2023computing,chen2023complexity,filos2024computation}. A different line of study takes the perspective of online learning where a single bidder would like to learn the ``environment'', mainly the behaviors of other bidders in the literature. Dating back to \citep{kleinberg2003value} in a different pricing context, this perspective has inspired a flurry of work \citep{han2020learning,zhang2022leveraging,balseiro2019learning,balseiro2023contextual,badanidiyuru2023learning,wang2023learning,han2024optimal,aggarwal2024no, hu2025learning, han2026semi} which establishes sublinear regrets under various feedback structures and environments in FPAs; we refer to the survey \citep{aggarwal2024auto} for an overview. However, this line of work assumes a known valuation and mainly focuses on learning HOBs or handling other constraints (budget, side information, etc), and the problem of joint value estimation and bidding is explicitly listed as an open direction in \citep{han2024optimal}.

\paragraph{Unknown valuation.} There is also a rich line of literature on learning unknown valuations in auctions. In second-price auctions, \citep{weed2016online} studies a regret minimization setting where the bidder observes the valuation only if he/she wins the auction; this setting is then generalized to other auction formats \citep{feng2018learning} including FPAs \citep{achddou2021fast}. A recent work \citep{cesa2024role} provides a systematic study of feedback mechanisms (termed as \emph{transparency}) and unknown valuations in FPAs and provides tight regret bounds in various settings. However, all these studies do \emph{not} view the valuation of a bidder as a treatment effect, while an empirical study \citep{waisman2024online} stresses the importance of a causal inference modeling. Although \citep{waisman2024online} does not directly study the regret minimization problem, they identify a unique challenge in FPAs that the goals of regret minimization and causal inference are not well-aligned (as opposed to second-price auctions where these tasks are aligned). This challenge precisely motivates the problem formulation of joint value estimation and bidding in the current work. 

We provide a more detailed comparison with \citep{cesa2024role} as it is closest to ours. First, an obvious difference is that while the unknown valuation is the potential outcome of winning in \citep{cesa2024role}, in our work it is the difference between two potential outcomes. Our scenario is more challenging for two reasons: 1) the feedback structure of the valuation is more involved, as the bidder needs both wins and losses to observe both sides of the coin; 2) because there are \emph{two} potential outcomes in our model, the problem of \emph{directly} estimating the treatment effect without strong assumptions on the potential outcomes is technically more challenging and draws connections to the causal inference literature. Second, compared with \citep{cesa2024role}, our model involves features and thus competes with a stronger oracle than the best fixed bid. In addition, thanks to the presence of features, we may adapt the standard unconfoundedness assumption in causal inference and assume a conditional independence between valuations and HOBs given the features (cf. \cref{ass:noconfounding}). By contrast, \citep{cesa2024role} assumes a general dependence structure without features, a setting not directly comparable to ours. 
% Finally, we also acknowledge that \citep{cesa2024role} considers a richer set of feedback mechanisms for HOBs, whereas we always assume full information and will leave the consideration of other feedback to future work. 

\paragraph{Causal inference.} The causal inference formulation of value estimation in FPAs in \citep{waisman2024online} assumes the unconfoundedness and overlap conditions; both conditions are standard in causal inference \citep{imbens2015causal}, and are essential for controlling the estimation error in policy learning \citep{athey2021policy,zhou2023offline}. There have also been some efforts in relaxing or removing the overlap condition, such as trimming \citep{yang2018asymptotic,branson2023causal}, policy shifts \citep{zhao2024positivity}, and bypassing the uniform requirements \citep{jin2025policy}. For example, the policy learning result in \citep{jin2025policy} only involves the overlap condition for the selected policy; by contrast, thanks to the special nature of the joint value estimation and bidding problem, we do not even require the overlap condition for the selected bid.

We also mention that assuming the treatment effect to be more structured and simpler than potential outcomes is a prevalent assumption in causal inference \citep{robins1994correcting,van2006statistical}. For CATE estimation, this structure also proves to be challenging and attracts quite a few theoretical interests, e.g. \citep{chernozhukov2018double,cui2020value,gao2020minimax,nie2021quasi,foster2023orthogonal,kennedy2023towards,kennedy2024minimax}. This work also joins this line of literature and assumes only a linear model for the treatment effect in online bidding problem.

\subsection{Notations}
\label{sec:notation}
For positive integers $n\in\mathbb{N}$, we let $[n] = \{1,2,\dots,n\}$. For an event $E$, we defined the indicator function $\indic[E]$ to be 1 if the event occurs and 0 otherwise. The standard asymptotic notations $O(\cdot), \Omega(\cdot),$ and $\Theta(\cdot)$ are used to suppress constant factors, and $\widetilde{O}(\cdot), \widetilde\Omega(\cdot),$ and $\widetilde\Theta(\cdot)$ to suppress poly-logarithmic factors. For a random variable $X$, $\E[X]$ and $\Var(X)$ denote its mean and variance respectively. Throughout, we use the filtration $\Fcal_t$ to denote all available history up to the beginning of round $t$, and write $\E_t[X] = \E[X|\Fcal_t]$ for simplicity. For a vector $x\in \R^d$ and a PSD matrix $A\in \R^{d\times d}$, let $\|x\|_A = \sqrt{x^\top Ax}$. For a cumulative distribution function $G$ on $[0,1]$, let its inverse be $G^{-1}(z) = \inf\bra{x\in[0,1]: G(x)\ge z}$.

\subsection{Organization}
The rest of the paper is organized as follows. We first formally state our problem and present the main results in Section \ref{sec:formulation_and_results}. Then Section \ref{sec:linear_te} focuses on the full-information HOB feedback and introduces the core ideas of variance reduction. Section \ref{sec:binary_hob} extends our bidding algorithms and ideas to the binary HOB feedback with more involved analysis, due to the fundamentally more difficult feedback. Numerical experiments are provided in Section \ref{sec:numericals} to validate our theoretical findings, and Section \ref{sec:conclusion} concludes this paper. The technical proofs and extended discussions are deferred to the appendices.

\section{Problem Formulation and Main Results}
\label{sec:formulation_and_results}

In this section, we formulate the JVEB problem in FPAs. \Cref{sec:formulation} lays out the causal inference and bandit learning components that define JVEB. \Cref{sec:HOB} introduces a flexible black-box framework for HOB estimation, a necessary ingredient but not the central challenge of JVEB. \Cref{sec:main_results_full} presents the regret guarantees under the various forms of HOB feedback. These results are summarized in \cref{tab:regrets}.

% Table
\begin{table}%
	\caption{Minimax Regret Characterizations under Two HOB Feedbacks}
	\label{tab:regrets}
	\begin{minipage}{\columnwidth}
		\begin{center}
			\begin{tabular}{cccc}
				\toprule
				   & Plug-and-play bound & I.i.d. HOBs & Linear HOBs\\
                   \midrule
				Full-information HOB  & $\widetildeO(\sqrt{\Delta dT})$ & $\widetildeO(\sqrt{dT})$ & $\widetildeO(d\sqrt{T})$\\
                \midrule
			     Binary HOB  & $\widetildeO(\sqrt{\Delta dT} + \xi T)$ & $\widetildeO(d^{\frac{1}{3}}T^{\frac{2}{3}})$ & \\
				\bottomrule
			\end{tabular}
		\end{center}
		\bigskip\centering
		\footnotesize\emph{Note:} The quantities $\Delta$ and $\xi$ concern the performance of the HOB estimation in \cref{ass:est_oracle_bern} and \ref{ass:est_oracle_weaker_bern}. 
	\end{minipage}
\end{table}%

\subsection{JVEB Formulation}
\label{sec:formulation}

The JVEB problem considers a single bidder that jointly estimates her private values and bids against a population of other bidders in repeated FPAs over a time horizon $T$. At the beginning of each time $t\in [T]$, the bidder observes contextual information $\bx_t\in\R^d$. The bidder then submits a bid $b_t\in \R_+$, while the highest other bid (HOB) in that round is denoted by $M_t\in \R_+$. If $b_t\ge M_t$, the bidder wins the auction, and receives an outcome value $v_{t,1}\in \mathbb{R}$, representing the potential outcome associated with winning. If $b_t < M_t$, the bidder loses the auction, and receives an outcome value $v_{t,0}\in \mathbb{R}$ representing the potential outcome for losing. These potential outcomes may, for example, correspond to the final sales generated when an ad is shown versus when it is not. 

The final realized payoff for the learner is
\begin{equation}\label{eq:payoff_formulation}
r_t(b_t) = \indic[b_t\geq M_t](v_{t,1}-b_t) + \indic[b_t<M_t]v_{t,0} 
=\indic[b_t\geq M_t](v_{t,1}-v_{t,0}-b_t) + v_{t,0}. 
\end{equation}
Throughout, we adopt the standard scaling assumption that $v_{t,1}, v_{t,0}, b_t, M_t\in [0,1]$, and $\|\bx_t\|_2\le 1$. Since the dependence of the maximizer of \eqref{eq:payoff_formulation} on the potential outcomes $(v_{t,1}, v_{t,0})$ only factors through the \emph{treatment effect} $v_{t,1}-v_{t,0}$, we only need to model the treatment effect. In this paper, we adopt a simple linear model for the treatment effects:
\begin{assumption}[Linear model]\label{assump:linear}
$\E[v_{t,1}- v_{t,0} \mid \bx_t] = \btheta_*^{\top}\bx_t$ for unknown $\btheta_*\in\R^d$ with $\|\btheta_*\|_2\leq 1$. 
\end{assumption}

Despite its simplicity, a linear model is a meaningful and practical starting point that captures the key algorithmic ideas. More importantly, \Cref{assump:linear} imposes a linear structure \emph{only} on the treatment effects, while the potential outcomes $(v_{t,1}, v_{t,0})$ may follow far more complex distributions. This aligns with standard practice in causal inference, where treatment effects often admit cleaner structure than the underlying potential outcomes \citep{kennedy2024minimax}. 

In addition to distributional properties, the dependence structure among different random variables is of utmost importance in causal inference. The next two assumptions concern the dependence between the treatment effect $v_{t,1}-v_{t,0}$ and the HOB $M_t$, and across different time $t$. 
\begin{assumption}[Unconfoundedness]\label{ass:noconfounding}
It holds that $v_{t,1}- v_{t,0} \indt M_t \mid \bx_t$. 
\end{assumption}

\begin{assumption}[Conditional independence]\label{ass:context}
Conditioned on the contexts $\{\bx_t\}_{t=1}^T$, the random vectors $\{(v_{t,1}, v_{t,0}, M_t)\}_{t=1}^T$ are conditionally independent over time. 
\end{assumption}

\cref{ass:noconfounding} mimics the standard nonconfoundedness assumption in causal inference \citep{imbens2015causal}. It only requires the conditional independence on the treatment effect but not on the potential outcomes, hence weaker than the counterpart adopted in \citep[Assumption 1]{waisman2024online} which pioneers the study of JVEB in auctions. Intuitively, the context $\bx_t$ should encode all necessary information for a given item to both the bidder and competitors. This assumption is also somewhat necessary: even if $v_{t,0}=0$ and $\bx_t=\varnothing$, dependence between $v_{t,1}$ and $M_t$ could lead to an $\Omega(T)$ regret (cf. \citep[Theorem 8]{han2024optimal} and \citep[Theorem 8]{cesa2024role}). 

\cref{ass:context} is a standard independence assumption over time, stating that that both $(v_{t,1},v_{t,0})$ and $M_t$ depend only on the current context $\bx_t$. While one could argue that the HOB $M_t$ should also depend on past history, modeling such dependence is typically intractable in practice. As a practical alternative, prior work on auctions, including \citep{badanidiyuru2023learning, han2024optimal}, adopts this independence assumption, and we follow the same approach in this paper.
% essentially states that the adversary choosing the context sequence $\bra{\bx_t}$ is \emph{oblivious}, i.e. $\bx_t$ does not depend on the historic auction outcomes. 

Having set up the causal inference framework, we now turn to the bandit-learning component. Let $G_t$ denote the time-varying cumulative distribution function (CDF) of $M_t$ conditioned on the context $\bx_t$, i.e., $G_t(b) = \mathbb{P}(M_t\le b|\bx_t)$. Then the expected payoff for submitting a bid $b_t$ is
\begin{equation}\label{eq:expected_payoff_formulation}
\br_t(b_t) \coloneqq \E[r_t(b_t)] 
= G_t(b_t)(\btheta_*^\top \bx_t - b_t) + \E[v_{t,0}].
\end{equation}
We postpone the modeling of $G_t$ and its estimation, as well as the feedback models for the HOB, to the next section. Finally, we can adopt a bandit learning framework and define the bidder's regret by competing against a powerful dynamic oracle:
\begin{equation}\label{eq:reg_def_lte}
\Reglte(\pi) = \sup_{\{G_t,\bx_t\}_{t=1}^T, \btheta_{*}} \sum_{t=1}^T \parr*{\br_t\parr*{b_t^*} - \E[\br_t(b_t)]}
\end{equation}
where the bidder's policy $\pi=(b_t)_{t=1}^T$ is a sequence of bids, and $b_t^*$ is the optimal bid of the oracle who has full knowledge of the true parameters $\btheta_{*}$ and the CDF $G_t$:
\begin{equation}\label{eq:optimal-bid}
b_t^*  = \argmax_{b\in[0,1]}\br_t(b) = \argmax_{b\in[0,1]}G_t(b)\parr*{\btheta_{*}^{\top}\bx_t -b}.
\end{equation}
The bidder's target is to devise a bidding policy $\pi$ with the smallest possible regret $R(\pi)$, ideally satisfying $R(\pi) = o(T)$, indicating that the bidder asymptotically matches the oracle’s performance.

\subsection{HOB Modeling and Estimation}\label{sec:HOB}
The JVEB is already a nontrivial problem even when the HOBs (that is, $G_t$) are perfectly known to the bidder, since the bidder must still submit bids based on value estimates and update those estimates using bidding outcomes. In practice, however, the HOBs are never known exactly, so it remains necessary to model this component and, ideally, to propose a flexible framework for estimating the HOBs that can be applied to the JVEB in a black-box manner.

Throughout this paper, we make the following mild assumption on the CDF $G_t$. 

\begin{assumption}[Bounded density]\label{ass:Gt}
The CDF $G_t$ is $L$-Lipschitz, i.e., it admits a density $g_t$ with $\|g_t\|_\infty\le L$. 
\end{assumption}

The performance of HOB estimation depends on additional structural properties of $G_t$, as well as the HOB feedback model. In practice, there are two common types of HOB feedback: 
\begin{itemize}
    \item \textit{Full-information}: the HOB $M_t$ is always revealed to the bidder at the end of time $t$ \citep{han2020learning}. This is called the ``minimum-bid-to-win'' in the Google Ad Exchange.

    \item \textit{Binary}: the bidder only observes the auction outcome, i.e., the win-loss indicator $\indic[M_t\le b_t]$, at the end of time $t$ \citep{cesa2024role}. 
\end{itemize}
Both feedback models will be studied in this paper. For each HOB feedback model, rather than imposing explicit structural assumptions on $G_t$, we will adopt a flexible black-box framework and express the required HOB estimation guarantees in terms of estimation abstractions.

\subsubsection{Black-box Abstraction under Full-information Feedback.}\label{sec:blackbox_abstraction_fullinfo}
Under full-information HOB feedback, we state the HOB estimation guarantee in terms of the following black-box abstraction.  
\begin{abstraction}[Estimation oracle I]\label{ass:est_oracle_bern}
Based on past observations $\{\bx_s, M_s\}_{s<t}$, there exists a valid CDF estimator $\widehat{G}_t$ such that with probability at least $1-T^{-2}$, 
\begin{equation*}
\abs*{\widehat{G}_t(b) - G_t(b)} \le \delta_t\sqrt{G_t(b)(1-G_t(b))} + \delta_t^2
\end{equation*}
for some known sequence $\{\delta_t\}_{t\in [T]}$, every $b\in[0,1]$ and $t\in[T]$. 
\end{abstraction}
In other words, rather than imposing explicit structural assumptions on $G_t$ and constructing an explicit estimator $\widehat{G}_t$, \Cref{ass:est_oracle_bern} provides a black-box abstraction that can later be applied to our JVEB problem. For specific HOB models, \Cref{ass:est_oracle_bern} still needs to be verified with explicit oracle implementation; several examples are given in \Cref{sec:main_results_full}. Mathematically, \Cref{ass:est_oracle_bern} states a Bernstein-type concentration guarantee in which the estimation error becomes smaller when $G_t(b)$ is close to $0$ or $1$, a structure that proves useful in our final regret guarantee.

We also explain why such an abstraction is reasonable under full-information feedback. In fact, the HOB estimation problem is \emph{completely decoupled from} JVEB: regardless of the bidder’s strategy for JVEB, the observed trajectory $\{\bx_t, M_t\}_{t=1}^T$ remains unchanged, and this trajectory is the only information relevant for estimating the HOBs. Given this decoupled structure, it is natural to treat HOB estimation separately from JVEB and to introduce an abstraction for this component.

% Finally, we define a quantity of $\{\delta_t\}_{t\in [T]}$ that will play a critical role in the final regret guarantee. 
% \begin{definition}\label{def:Delta}
% Given the non-negative sequence $\{\delta_t\}_{t\in [T]}$ in \Cref{ass:est_oracle_bern}, we define
% \begin{align*}
% \Delta = 1 + \sum_{t=1}^T \delta_t^2. 
% \end{align*}
% \end{definition}

\subsubsection{Black-box Abstraction under Binary Feedback.} 

Under binary HOB feedback, we apply a slightly different abstraction for HOB estimation. %When the notation is clear from the context, we write $\delta_t(b)$ for $\delta_t(b;b^{t-1})$ for the sake of simplicity.
\begin{abstraction}[Estimation oracle II]\label{ass:est_oracle_weaker_bern}
Based on past observations $\{\bx_s, b_s, \indic[M_s\le b_s]\}_{s<t}$, there exists a valid CDF estimator $\widehat{G}_t$ such that with probability at least $1-T^{-2}$, 
\begin{equation*}
\abs*{\widehat{G}_t(b) - G_t(b)} \le \delta_t(b; b^{t-1}) \sqrt{G_t(b)(1-G_t(b))} + \delta_t(b; b^{t-1})^2 + \xi
\end{equation*}
holds for every $b\in [0,1]$, every $t\in[T]$, some known functions $\{\delta_t(\cdot)\}_{t\in [T]}$ and bias parameter $\xi\ge 0$. 

% Based on past observations $\{\bx_s, b_s, \indic[M_s\le b_s]\}_{s<t}$, there exists a valid CDF estimator $\widehat{G}_t$ such that with probability at least $1-T^{-2}$, 
% \begin{equation*}
% \abs*{\widehat{G}_t(b) - G_t(b)} \le \delta_t(b; b^{t-1}) \sqrt{G_t(b)(1-G_t(b))} + \delta_t(b; b^{t-1})^2 + \xi
% \end{equation*}
% holds for every $b\in [0,1]$ \YH{remove: such that $G_t(b)\in [5\xi, 1-5\xi]$}, every $t\in[T]$, and some known functions $\{\delta_t(\cdot)\}_{t\in [T]}$ and bias parameter $\xi\ge 0$. 
\end{abstraction}

The above abstraction highlights several generalizations. First, the error parameter $\delta_t(b;b^{t-1})$ of $\widehat{G}_t(b)$ now depends on the bid $b$ and the historic bidding trajectory. It reflects the nature under binary HOB feedback that past bids $b^{t-1}$ affect the observations that are used to estimate the HOB distribution $G_t$. Second, while the upper bound in \Cref{ass:est_oracle_bern} mainly represents the stochastic term in Bernstein's inequality, in \Cref{ass:est_oracle_weaker_bern} we allow an additional bias term $\xi\ge 0$ that arises naturally from the bias-variance trade-off in nonparametric regressions with binary outcomes. Finally, we note that unlike the full-information feedback where the HOB estimation problem is completely decoupled from JVEB, the bidding strategy affects the HOB estimation under binary feedback. However, our master algorithm in \Cref{sec:binary_hob} still manages to decouple this dependence, so that the overall bidding algorithm works under this abstraction. Similar to \Cref{ass:est_oracle_bern}, we also realize an estimation oracle satisfying \Cref{ass:est_oracle_weaker_bern} in a real example in \Cref{sec:main_results_binary}.

\subsection{Main Results}
We are now ready to present our main regret bounds under both abstractions, and show how these HOB estimation oracles are implemented in specific examples. 

\subsubsection{Full-information HOB Feedback.}\label{sec:main_results_full}
As discussed above, when the HOB is always revealed, the problems of JVEB and HOB estimation are completely decoupled. Given an estimation oracle satisfying \cref{ass:est_oracle_bern} and the knowledge of $\delta_t$ at time $t$, we develop a bidding algorithm for JVEB with the following regret guarantee. 

%thanks to \cref{ass:noconfounding}, \cref{ass:context}, and the full-information feedback, the estimation problem of $M_t$ becomes \emph{independent} of both value estimation and bidding: regardless of the bidder's strategies, the bidder always observes the same trajectory $\{(\bx_t, M_t)\}_{t=1}^T$, which is the only useful information for predicting future HOBs. By contrast, the problem of joint value estimation and bidding lies in the interdependence between $(v_{t,1},v_{t,0})$ and $b_t$: the bidder chooses the bid $b_t$ based on the current estimate of $v_{t,1}-v_{t,0}$, while the choice of $b_t$ affects the bidder's likelihood of observing the potential outcomes $v_{t,1}$ or $v_{t,0}$ and thereby the treatment effect estimation. To focus primarily on the joint process in the latter, we adopt the black-box abstraction in \cref{sec:blackbox_abstraction_fullinfo}, which allows our result to handle general modelings of $M_t$. 

% As the HOB estimation is entirely independent from value estimation or bidding, we use an abstraction by assuming a general estimation oracle which outputs an estimator $\widehat{G}_t$ of $G_t$, based on the available and relevant information $\{x_s, M_s\}_{s<t} \cup \{x_t\}$. Motivated by the Bernstein's concentration (\cref{lem:bernstein}) that a smaller estimation error is available for events with small probability, we make the following assumption for the estimation oracle. 

\begin{theorem}[Upper bound I]\label{thm:main-lte}
Suppose the bidder implements Abstraction \ref{ass:est_oracle_bern}. Under Assumptions \ref{assump:linear}--\ref{ass:Gt}, there is an algorithm $\pi_{\mathrm{lte}}$ (with the knowledge of the confidence bound $\delta_t$ at time $t$ and the knowledge of $L$) that achieves the expected regret
\begin{equation*}
\Reglte(\pi_{\mathrm{lte}}) = \widetilde{O}\parr*{\sqrt{\Delta dT}}, 
\end{equation*}
where $\Delta := 1 + \sum_{t=1}^T \delta_t^2$. 
\end{theorem}

At a high level, \cref{thm:main-lte} is a ``plug-and-play'' bound which translates any HOB estimation guarantee in the form of \cref{ass:est_oracle_bern} to a final regret guarantee. We list a few examples of estimation oracles with specific values of $\Delta$ and final regret guarantee. 

\paragraph{I.i.d. HOBs.} When $M_t$ is i.i.d., Bernstein's concentration (\cref{lem:bernstein}) shows that \cref{ass:est_oracle_bern} holds for the empirical CDF with $\delta_t=\widetilde{O}(1/\sqrt{t})$. Then $\Delta=\widetilde{O}(1)$, and \cref{thm:main-lte} achieves a regret upper bound of $\widetilde{O}(\sqrt{dT})$ for i.i.d. HOBs. This regret is near-optimal in view of the lower bound in \Cref{thm:lower-bound} below. 

\paragraph{Linear HOBs.} In the linear model $M_t = \varphi_*^\top \bx_t + \eta_t$ where $M_t$ depends linearly on $\bx_t$ \citep{badanidiyuru2023learning}, we assume that $\varphi_*\in\R^d$ is an unknown parameter with $\|\varphi_*\|_2\le 1$ and impose the following assumptions on the i.i.d. noise $\eta_t$ with CDF $Q$: 
\begin{assumption}\label{ass:flat_noise_tail}
There exist constants $\sigma, a_0, a_1, a_2>0$, such that the following holds:
\begin{enumerate}
    \item[1)] $\eta_t$ is zero-mean and $\sigma^2$-sub-Gaussian;

    \item[2)] If $Q(v)\in [0,a_0]\cup[1-a_0,1]$, then $Q'(v)\le a_1\sqrt{Q(v)(1-Q(v))}$ for all $v\in \R$;

    \item[3)] The second derivative is bounded, i.e. $|Q''(v)| \le a_2$ for all $v\in \R$. 
\end{enumerate}
\end{assumption}

Note that \cref{ass:flat_noise_tail} is weaker than \cite{badanidiyuru2023learning} and does not require the log-concavity there, and is satisfied for Gaussian or truncated Gaussian $\eta_t$ (by the well-known Mills ratio results \cite[Eq (9)]{gordon1941values}). In this case the oracle in \cref{ass:est_oracle_bern} also exists: 

\begin{lemma}[Linear HOB Estimation]\label{lem:linear_hob_error}
Let Assumptions \ref{ass:noconfounding}--\ref{ass:flat_noise_tail} hold. Under full-information HOB feedback, there exists a CDF estimator $\widehat{G}_t$ that satisfies \cref{ass:est_oracle_bern} with the HOB error parameter $\Delta=\widetildeO(d)$.
\end{lemma}

Therefore, \cref{thm:main-lte} achieves a regret upper bound of $\widetilde{O}(d\sqrt{T})$ when the HOBs admit such a linear model. This recovers the regret upper bound of \cite{badanidiyuru2023learning} under weaker assumptions. 

\paragraph{Known HOBs.} When $G_t$ is perfectly known, we have $\delta_t\equiv 0$ and thus $\Delta = 1$. In this case, \Cref{thm:main-lte} shows an upper bound of $\widetilde{O}(\sqrt{dT})$, which is near-optimal by the following matching lower bound. 
\begin{theorem}[Lower bound]\label{thm:lower-bound}
Under Assumptions \ref{assump:linear}--\ref{ass:Gt}, even if $G_t$ is perfectly known, the following lower bounds hold: when $T\ge d^2$,
\begin{equation*}
\inf_\pi \Reglte(\pi) = \Omega(\sqrt{dT}).
\end{equation*}
\end{theorem}

\subsubsection{Main Results under Binary HOB Feedback.}
\label{sec:main_results_binary}

Under the binary HOB feedback, the HOB estimation is no longer a standalone problem, and we face a more difficult ``joint value estimation, HOB estimation, and bidding'' problem. Nevertheless, we show that the estimation of $G_t$ can still be abstracted from the JVEB problem in the form of \cref{ass:est_oracle_weaker_bern}, and establish the following regret guarantee. 

\begin{theorem}[Upper bound II]\label{thm:main-lte-binary}
Suppose there is some $T_0$ such that for every $t\ge T_0$, the bidder implements Abstraction \ref{ass:est_oracle_weaker_bern} with $\|\delta_t\|_\infty \le c$ for a small constant $c$ depending only on $L$.
Under Assumptions \ref{assump:linear}--\ref{ass:Gt}, there is an algorithm $\pi_{\mathrm{bin}}$ that achieves the expected regret
\begin{equation*}
\Reglte(\pi_{\mathrm{bin}}) = \widetilde{O}\parr*{T_0 + \sqrt{\Delta dT} + \xi T}, 
\end{equation*}
where $\Delta = 1+\sup_{b^T}\sum_{t=1}^T\delta_t(b_t;b^{t-1})^2$. 
\end{theorem}

Once again, \Cref{thm:main-lte-binary} establishes a ``plug-and-play'' regret bound. Compared with \Cref{thm:main-lte}, the new regret bound involves a burn-in period $T_0$, a worst-case definition of $\Delta$ over all possible bid trajectories $b^T$, and the additional bias parameter $\xi$ in \Cref{ass:est_oracle_weaker_bern}, due to the additional complication caused by binary HOB feedback. We give an example of how \Cref{thm:main-lte-binary} is applied. 

\paragraph{I.i.d. HOBs.} For i.i.d. HOBs, we develop an implementation of \cref{ass:est_oracle_weaker_bern} that leads to following regret guarantee: 

% Since the HOB estimation and bidding processes are tightly coupled under binary feedback, developing an estimation oracle that satisfies \cref{ass:est_oracle_weaker_bern} with a nontrivial upper bound $\Delta$ becomes substantially more challenging. To illustrate the key difficulties, we focus on the case of \textit{i.i.d. HOBs}, which already includes a sufficiently rich family of problems and does not simplify the dependence issue. We develop an implementation of \cref{ass:est_oracle_weaker_bern} that leads to a near-optimal bidding algorithm:

\begin{theorem}[Upper bound III]\label{thm:main-lte-binary-iid}
Consider the binary HOB feedback and i.i.d. HOBs. Under Assumptions \ref{assump:linear}--\ref{ass:Gt}, there is an oracle implementation of \cref{ass:est_oracle_weaker_bern} with 
\begin{equation*}
T_0 = \widetilde{O}(T^{2/3}), \quad \Delta = \widetilde{O}(d^{-1/3}T^{1/3}), \quad \xi = \widetilde{O}(d^{1/3}T^{-1/3}). 
\end{equation*}
Consequently, there is an algorithm $\pi_{\mathrm{bin}}$ achieving the expected regret 
% \YH{I remember previously we only have $\widetilde{O}(dT^{2/3})$ right? I think in the current form, it should be easy to prove a matching lower bound, as it is a direct sum of $d$ subproblems?}
\begin{equation*}
\Reglte(\pi_{\mathrm{bin}}) = \widetilde{O}\parr*{d^{\frac{1}{3}}T^{\frac{2}{3}}}. 
\end{equation*}
\end{theorem}
Even in the special case with no contexts, $v_{t,0}\equiv 0$, and $v_{t,1}\equiv v$, a lower bound $\Omega(T^{\frac{2}{3}})$ was shown in \cite{balseiro2023contextual} under the binary HOB feedback. It is rather straightforward to develop a complementary lower bound $\Omega(d^{\frac{1}{3}}T^{\frac{2}{3}})$ by splitting the horizon into $d$ independent sub-problems and embedding a lower bound instance for each sub-problem (c.f. Appendix \ref{app:lin_lower_bound}). Therefore, our black-box abstraction still achieves an optimal regret.

\section{Full-information HOB Feedback}
\label{sec:linear_te}
In this section, we introduce our bidding algorithm that achieves the regret upper bound in \cref{thm:main-lte}, assuming access to an estimation oracle satisfying \cref{ass:est_oracle_bern}. Throughout the algorithms in this section, we use $\Bcal = \bra*{\frac{j}{\sqrt{T}}: j\in[\lfloor\sqrt{T}\rfloor]}$ to denote a discretization of the continuous bid space $[0,1]$ to avoid potentially inefficient optimization problems; this discretization only incurs an additional regret of $O(\sqrt{T})$.

\subsection{Truncated IPW Estimator}\label{sec:truncated_IPW}
In order to estimate the unknown parameter $\btheta_*$ in the linear model, the first question is to find a sound estimator for the mean treatment effect $\E[v_{t,1}-v_{t,0}]$ without modeling the potential outcomes $v_{t,1}$ and $v_{t,0}$. When bidding price $b$ at time $t$, if the HOB distribution $G_t$ \emph{were} perfectly known, a natural estimator would be the inverse propensity weighting (IPW) estimator:
\begin{equation}\label{eq:weighted_estimator_te}
\widehat{e}_t(b) = \frac{\indic[b\geq M_t]v_{t,1}}{G_t(b)} - \frac{\indic[b< M_t]v_{t,0}}{1-G_t(b)}. 
\end{equation}
However, since we only have access to an estimator $\widehat{G}_t$ of $G_t$, we will apply a truncated IPW estimator defined as
\begin{equation}\label{eq:truncated_estimator_te}
\widetilde{e}_t(b) = \frac{\indic[b\geq M_t]}{\max\bra*{\delta_t^2 , \widehat{G}_t(b)}}v_{t,1} - \frac{\indic[b< M_t]}{\max\bra*{\delta_t^2, 1-\widehat{G}_t(b)}}v_{t,0},
\end{equation}
where $\delta_t$ is the concentration parameter in \cref{ass:est_oracle_bern} which is assumed to be known. Unlike $\widehat{e}_t(b)$, the truncated estimator $\widetilde{e}_t(b)$ is biased. The following lemma summarizes the bias-variance properties of $\widetilde{e}_t(b)$. 
\begin{lemma}\label{lem:bias-variance}
Suppose $|\widehat{G}_t(b) - G_t(b)|\le \delta_t\sqrt{G_t(b)(1-G_t(b))} + \delta_t^2$. Then there exist absolute constants $c, c'>0$ such that
\begin{equation}\label{eq:sigma_t}
\begin{cases}
    \abs*{\E[\widetilde{e}_t(b)] - \btheta_*^\top \bx_t} \le c\delta_t\sigma_t(b) \\
    \Var(\widetilde{e}_t(b)) \le c'\sigma_t(b)^2
\end{cases} \qquad \text{where }\sigma_t(b) \coloneqq \parr*{ \widehat{G}_t(b)(1-\widehat{G}_t(b))}^{-\frac{1}{2}}.
\end{equation}
% In particular, $c\le 20$ and $c'\le 16$.
\end{lemma}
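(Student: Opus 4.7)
Let $p := G_t(b)$, $\hat p := \widehat{G}_t(b)$, $\tilde p := \max\{\delta_t^2, \hat p\}$, $\tilde q := \max\{\delta_t^2, 1-\hat p\}$, and $\mu_i := \E[v_{t,i}]$. The plan is to first write the bias and an upper bound on the variance in closed form, then reduce both claims to a single comparison inequality between $p$ and $\tilde p$ (and symmetrically between $1-p$ and $\tilde q$).

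I would begin by taking the conditional expectation of \eqref{eq:truncated_estimator_te}; under \cref{ass:noconfounding}, this gives $\E[\widetilde{e}_t(b)] = p\mu_1/\tilde p - (1-p)\mu_0/\tilde q$. Subtracting $\theta_*^\top x_t = \mu_1 - \mu_0$ and using $|\mu_i|\le 1$ yields
\[
\bigl|\E[\widetilde{e}_t(b)] - \theta_*^\top x_t\bigr| \le \frac{|p - \tilde p|}{\tilde p} + \frac{|(1-p) - \tilde q|}{\tilde q}.
\]
For the variance, the two indicator terms in \eqref{eq:truncated_estimator_te} have disjoint supports, so
\[
\Var(\widetilde{e}_t(b)) \le \E[\widetilde{e}_t(b)^2] \le \frac{p}{\tilde p^2} + \frac{1-p}{\tilde q^2}.
\]

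The heart of the argument is the following comparison inequality, which I would derive from \cref{ass:est-oracle-2}: $p \le 4\tilde p$ and $|p - \tilde p|\le 4\delta_t\sqrt{\tilde p}$, together with the symmetric statements obtained by swapping $(p,\tilde p)$ with $(1-p,\tilde q)$. The first follows by bounding $\sqrt{p(1-p)}\le\sqrt{p}$ and using $\hat p\le \tilde p$, $\delta_t^2\le \tilde p$ to get $p \le 2\tilde p + \delta_t\sqrt{p}$, a quadratic inequality in $\sqrt{p}$ whose solution satisfies $\sqrt{p}\le 2\sqrt{\tilde p}$ (using $\delta_t^2 + 8\tilde p\le 9\tilde p$). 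The second follows from $|p - \hat p|\le \delta_t\sqrt{p} + \delta_t^2 \le 3\delta_t\sqrt{\tilde p}$ (applying $p\le 4\tilde p$ and $\delta_t\le\sqrt{\tilde p}$), combined with $|\hat p - \tilde p|\le \delta_t^2\le \delta_t\sqrt{\tilde p}$ via the triangle inequality.

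To conclude, I would plug these into the Step 1 expressions. For the bias, $|p-\tilde p|/\tilde p \le 4\delta_t/\sqrt{\tilde p}$, and a short case split shows $\sqrt{\hat p(1-\hat p)/\tilde p}\le 1$ in both regimes: if $\hat p\ge\delta_t^2$ then $\tilde p=\hat p$ and the ratio equals $\sqrt{1-\hat p}\le 1$; if $\hat p<\delta_t^2$ then $\tilde p=\delta_t^2\ge\hat p$ and the ratio is $\le \sqrt{\hat p}/\delta_t\le 1$. Hence $|p-\tilde p|/\tilde p \le 4\delta_t\sigma_t(b)$, and the $\tilde q$-term is symmetric, giving the bias bound. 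For the variance, $p/\tilde p^2\le 4/\tilde p$, and the same case split yields $\hat p(1-\hat p)/\tilde p\le 1$, hence $p/\tilde p^2\le 4\sigma_t(b)^2$; likewise for $(1-p)/\tilde q^2$, and summing produces $\Var(\widetilde{e}_t(b))\le 8\sigma_t(b)^2$.

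I expect the main obstacle to be the comparison inequality in Step 2: one must simultaneously cover the regime $\hat p<\delta_t^2$, where the truncation floor drives the error, and the regime $\hat p\ge\delta_t^2$, where the Bernstein-type concentration in \cref{ass:est-oracle-2} dominates. The clean uniform control $\sqrt{\hat p(1-\hat p)/\tilde p}\le 1$ is what makes both regimes collapse into a single $\sigma_t(b)$-scaled bound.
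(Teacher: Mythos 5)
Your proof is correct and in substance follows the paper's route: the same decomposition of the bias into two per-arm error terms $\lvert p-\tilde p\rvert/\tilde p$ and $\lvert (1-p)-\tilde q\rvert/\tilde q$, the same second-moment bound for the variance, and the same underlying two-regime phenomenon (truncation floor dominates when $\widehat G_t$ is tiny; Bernstein concentration dominates otherwise). The bookkeeping is organized a bit differently and, I think, more cleanly: you isolate a reusable comparison inequality ($p\le 4\tilde p$ and $\lvert p-\tilde p\rvert\le 4\delta_t\sqrt{\tilde p}$, derived by solving a quadratic in $\sqrt{p}$) and then close the argument with the uniform observation $\widehat G_t(b)(1-\widehat G_t(b))\le\tilde p$ and $\le\tilde q$, so that $1/\sqrt{\tilde p}$ and $1/\sqrt{\tilde q}$ are both dominated by $\sigma_t(b)$. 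The paper instead bounds $\sigma_t(b)^{-2}\xi_{t,1}(b)^2$ directly and case-splits on whether $\widehat G_t(b)\gtrless 6\delta_t^2$, giving somewhat ad hoc numeric constants in each case. Your abstraction avoids that case split for the main bound (you only retain a trivial one to verify $\widehat G_t(1-\widehat G_t)\le\tilde p$), which I'd call a genuine, if modest, streamlining rather than a different proof.
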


\cref{lem:bias-variance} essentially says that the quantity $\widetilde{e}_t(b_t)$, computable at the end of time $t$, approximately satisfies a linear model
\begin{equation}\label{eq:linear-model}
\widetilde{e}_t(b_t) = \btheta_*^\top \bx_t + \varepsilon_t, \quad \text{with } \E[\varepsilon_t] \approx 0, \Var(\epsilon_t) \le c'\sigma_t(b_t)^2. 
\end{equation}
Henceforth, we will use $\sigma_t(b_t)^2$ in \eqref{eq:sigma_t} as the variance proxy for the quantity $\widetilde{e}_t(b_t)$. Since $\widehat{G}_t(b_t)$ could be close to $0$ or $1$ (i.e. no overlap condition), the quantity $\sigma_t(b_t)$ can be prohibitively large. 

\subsection{Base Algorithm}
\label{sec:base_lin_te}

\begin{algorithm}[!t]\caption{LIN-TE (Base algorithm for linear treatment effects)}
\label{alg:base_alg_te}
\textbf{Input:} Time indices $\Phi_t\subseteq [t-1]$, subspace $B_t\subseteq \Bcal$, estimated CDF $\widehat{G}_t$, time horizon $T$.

$\sigma_\tau \gets \sigma_\tau(b_\tau)$ in \eqref{eq:sigma_t} for $\tau\in \Phi_t$;

$\bA_{t}\leftarrow \bI + \sum_{\tau\in \Phi_t} \sigma_\tau^{-2}\bx_\tau \bx_\tau^{\top}$;

$\bz_{t} \leftarrow \sum_{\tau\in\Phi_t}\sigma_\tau^{-2}\bx_\tau \widetilde{e}_\tau$ where $\widetilde{e}_\tau = \widetilde{e}_\tau(b_\tau)$ is defined in \eqref{eq:truncated_estimator_te};

Compute estimator $\bhtheta_{t} \leftarrow \bA_{t}^{-1}\bz_{t}$.

Set $\gamma\gets 1 + c_1\log(2T)+c_2\sqrt{\sum_{\tau\in\Phi_t}\delta_\tau^2}$, with $\delta_\tau$ given in Assumption~\ref{ass:est_oracle_bern} and absolute constants $c_1=14,c_2=20$.

\For{$b\in B_t$}
{
% Compute width $w_t(b) \leftarrow c_3L\gamma\parr*{\widehat{G}_t(b)\parr{1-\widehat{G}_t(b)}\|\bx_t\|_{\bA_t^{-1}} + \gamma\|\bx_t\|_{\bA_t^{-1}}^2} + 2\delta_t$ with an absolute constant $c_3=80$. \label{line:base_te_width}

Compute width $w_{t,0}(b) \leftarrow c_3L\gamma\|\bx_t\|_{\bA_t^{-1}}\parr*{\widehat{G}_t(b) + c_3L\parr*{\gamma\|\bx_t\|_{\bA_t^{-1}} + 4\delta_t}} + 4\delta_t$ with an absolute constant $c_3=120$. 

Compute width $w_{t,1}(b) \leftarrow c_3L\gamma\|\bx_t\|_{\bA_t^{-1}}\parr*{\parr{1-\widehat{G}_t(b)} + c_3L\parr*{\gamma\|\bx_t\|_{\bA_t^{-1}} + 4\delta_t}} + 4\delta_t$. 

Compute $u_{t,0}(b)\leftarrow \widehat{G}_t(b)\parr*{\bhtheta_{t}^{\top}\bx_t - b + \gamma\|\bx_t\|_{\bA_t^{-1}}} + 2\delta_t$.

Compute $u_{t,1}(b)\leftarrow \widehat{G}_t(b)\parr*{\bhtheta_{t}^{\top}\bx_t - b - \gamma\|\bx_t\|_{\bA_t^{-1}}} - \parr*{\bhtheta_t^{\top}\bx_t - \gamma\|\bx_t\|_{\bA_t^{-1}}} + 2\delta_t$.
}

Invoke \cref{alg:ucb_selection} with perturbation $c = \frac{1}{60L}$, CDF $\widehat{G}_t$, and value $\bhtheta_t^\top \bx_t$ to get $\parq{b_{\mathrm{left}},b_{\mathrm{right}}}$ and index $i\in\bra{0,1}$.
\end{algorithm}

Next we present the base algorithm LIN-TE in \cref{alg:base_alg_te} based on $\widetilde{e}_t(b)$. Based on the approximate linear model \eqref{eq:linear-model}, we apply linear regression of $\widetilde{e}_t(b_t)$ against $\bx_t$ to estimate $\btheta_*$, and use a weighted ridge regression (with weights inversely proportional to $\sigma_t(b_t)^2$) in Line 6 of \cref{alg:base_alg_te} to account for the heteroskedasticity. This choice of the weight is inspired by the form of the linear estimator with a smallest variance, and is important in reducing variance in the estimation of $\btheta_*$. 
% but note that the ordinary least squares will suffer from heteroskedasticity. To address this problem, in Line 6 of \cref{alg:base_alg_te} we apply the weighted Ridge regression, i.e. solving $\sigma_t(b_t)^{-1}\theta^\top \bx_t = \E\parq*{\sigma_t(b_t)^{-1}\widetilde{e}_t(b_t)}$ with $\ell_2$ regularization, with $\widetilde{e}_t(b_t)$ and $\sigma_t(b_t)$ given in \eqref{eq:truncated_estimator_te} and \eqref{eq:sigma_t} respectively. 
The estimation performance of $\bhtheta_t$ is summarized in the following lemma. 

\begin{lemma}\label{lem:WLS}
Suppose $(v_{\tau,1},v_{\tau,0})_{\tau\in \Phi_t}$ are conditionally independent given $(\bx_\tau,b_\tau,M_\tau)_{\tau\in \Phi_t}$, then with probability at least $1-T^{-2}$, it holds that
\begin{equation*}
\abs*{\bhtheta^{\top}_t\bx_t - \btheta_*^{\top}\bx_t} \leq \gamma \|\bx_t\|_{\bA_t^{-1}}, 
\end{equation*}
where $\gamma$ is defined in Line 6 of \cref{alg:base_alg_te}. 
\end{lemma}

Starting from \cref{lem:WLS}, two more steps are necessary to turn a good estimator $\bhtheta_t$ of $\btheta_*$ into a good bidding strategy. First, the estimation guarantee in \cref{lem:WLS} requires a conditional independence condition, which is violated for adaptive bids; this dependence issue is discussed and mitigated via a master algorithm in \cref{sec:lte_dependence_issue}. Second, and more importantly, the bidder needs to translate an estimator $\bhtheta_t$ into a bid $b_t$. To this end, \cref{alg:base_alg_te} introduces two quantities $(u_{t,1}(b), u_{t,0}(b))$ for each bid $b$ (on Lines 9 and 10), related to the expected reward $\br_t(b)$ defined in \eqref{eq:expected_payoff_formulation}.

Specifically, the quantities $u_{t,0}(b)$ and $u_{t,1}(b)$ are two different upper confidence bounds (UCBs) of $\E_t[r_t(b)]$, up to additive constants independent of $b$. To see so, conditioned on the high-probability event in \cref{lem:WLS}, for every $b\in [0,1]$ we have
\begin{align*}
u_{t,0}(b) &\ge \br_t(b) - \E[v_{t,0}] = G_t(b)(\btheta_*^\top \bx_t - b)
\eqqcolon \br_{t,0}(b), \numberthis\label{eq:UCB0} \\
u_{t,1}(b) &\ge \br_t(b) - \E[v_{t,1}] = G_t(b)(\btheta_*^\top \bx_t - b) - \btheta_*^\top \bx_t  \\
&=  (1-G_t(b))(-\btheta_*^\top \bx_t) -G_t(b)b \eqqcolon \br_{t,1}(b). \numberthis\label{eq:UCB1}
\end{align*}
In addition, conditioned on the events in \cref{lem:WLS} and \cref{ass:est_oracle_bern}, the widths $w_{t,0}(b)$ and $w_{t,1}(b)$ defined in \cref{alg:base_alg_te} satisfy
\begin{equation*}
u_{t,0}(b) \le  \br_{t,0}(b) + w_{t,0}(b),\qquad u_{t,1}(b) \le  \br_{t,1}(b) + w_{t,1}(b).
\end{equation*}
The critical decision step would be choosing the ``better of two UCBs'' through invoking \cref{alg:ucb_selection} in \cref{sec:better-of-two-UCB}.

\subsection{``Better of Two UCBs''}\label{sec:better-of-two-UCB}

\begin{algorithm}[!t]\caption{UCB selection routine}
\label{alg:ucb_selection}
\textbf{Input:} perturbation $c>0$, estimated CDF $\widehat{G}_t$, estimated treatment value $\widehat{v}$.

Compute $b_{+} \gets \argmax_{b\in \Bcal} \widehat{G}_t(b)(\widehat{v} + 2c - b)$ and $b_{-} \gets \argmax_{b\in \Bcal} \widehat{G}_t(b)(\widehat{v} - 2c - b)$.

Define set $U \gets \bra{b\in \Bcal: \widehat{G}_t(b_-) - c \le \widehat{G}_t(b) \le \widehat{G}_t(b_+) + c}$.

Set $b_{\mathrm{left}}\gets \min U$ and $b_{\mathrm{right}}\gets \max U$.

Output interval $\parq{b_{\mathrm{left}}, b_{\mathrm{right}}}$ and UCB index $i=\indic[\widehat{G}_t(b_{\mathrm{left}}) \ge c]$.
\end{algorithm}

% \begin{algorithm}[!t]\caption{UCB selection routine}
% \label{alg:ucb_selection}
% \textbf{Input:} \YH{remove: Bid space $\Bcal\subseteq [0,1]$}, estimated CDF $\widehat{G}_t$, estimated treatment value $\widehat{v}$, error bound $\eta$.

% Compute $b_{+} \gets \argmax_{b\in \Bcal} \widehat{G}_t(b)(\widehat{v} + \frac{1}{30L} - b)$ and $b_{-} \gets \argmax_{b\in \Bcal} \widehat{G}_t(b)(\widehat{v} - \frac{1}{30L} - b)$.

% Define set $U \gets \bra{b\in \Bcal: \widehat{G}_t(b_-) - 60L\eta \le \widehat{G}_t(b) \le \widehat{G}_t(b_+) + 60L\eta}$.

% Set $b_{\mathrm{left}}\gets \min U$ and $b_{\mathrm{right}}\gets \max U$.

% Output interval $\parq{b_{\mathrm{left}}, b_{\mathrm{right}}}$ and UCB index $i=\indic[\widehat{G}_t(b_{\mathrm{left}}) \ge \frac{1}{60L}]$.
% \end{algorithm}

\begin{figure}[h!]
\centering

\begin{tikzpicture}[thick, >=stealth, scale=1.1]

% Axes
\draw[->] (0,0) -- (5.3,0) node[right] {$\widehat{G}_t(b)$};
\draw[->] (0,-1.2) -- (5.2,-1.2) node[right] {$b$};
\draw[->] (0,0) -- (0,3) node[above] {UCB width};
\node[below] at (0,0) {$0$};
\node[left] at (0,2.7) {$1$};
\node[left] at (0,0.3) {$O(\delta_t)$};
\node[above, purple] at (4.6,-0.05) {\scriptsize $\widehat{G}_t(b_t^*)$};
\node[above, purple] at (4,-1.25) {\scriptsize $b_t^*$};
\draw[dashed] (0,2.7) -- (4.8,2.7);

% --- Piecewise linear curves ---
% UCB_0: increasing linearly, capped at 1
\draw[blue, ultra thick]
  (0,0.3) -- (4,2.7) -- (5,2.7);

% UCB_1: decreasing linearly, capped at 0
\draw[orange!90!brown, ultra thick]
  (0,2.7) -- (1,2.7) -- (5,0.3);

% Labels for curves
\node[blue] at (4.7,2.3) {$w_{t,0} \propto \widehat{G}_t$};
\node[orange!90!brown] at (5,1.1) {$w_{t,1} \propto 1-\widehat{G}_t$};

% --- Bracket region [Ghat(b*_0), Ghat(b*_1)] ---
\draw[very thick, purple] (3.3,0) -- (3.3,-0.15);
\draw[very thick, purple] (4.9,0) -- (4.9,-0.15);
\draw[decorate,decoration={brace,amplitude=4pt},purple] (4.9,-0.15) -- (3.3,-0.15);
\node[purple, below] at (4.1,-0.2)
{\scriptsize $[\widehat{G}_t(b_{\mathrm{left}}),\,\widehat{G}_t(b_{\mathrm{right}})]$};

\node[cross out, draw=purple, thick, minimum size=3pt, inner sep=0pt] at (4.7,0) {};

\draw[very thick, purple] (1.8,-1.2) -- (1.8,-1.35);
\draw[very thick, purple] (4.7,-1.2) -- (4.7,-1.35);
\draw[decorate,decoration={brace,amplitude=4pt},purple] (4.7,-1.35) -- (1.8,-1.35);
\node[purple, below] at (3.25,-1.33)
{\scriptsize $[b_{\mathrm{left}},\,b_{\mathrm{right}}]$};

% --- Arrow and annotation ---
% \draw[double,purple, ->, thick] (4.1,-0.1) .. controls (4.1,0.4) .. (4.1,0.7);

\draw[->,>=stealth,thick] (2.5,-1.15) -- (3,-0.05);

\node[cross out, draw=purple, thick, minimum size=3pt, inner sep=0pt] at (4,-1.2) {};
\node[left] at (2.75,-0.65) {\scriptsize $\widehat{G}_t$};

% \node[purple, align=left, right] at (4.2,0.4)
% {$[\hat{G}_t(b_*^0),\,\hat{G}_t(b_*^1)]$\\
% close to 1 $\Rightarrow$ use $\mathrm{UCB}_1$};

\end{tikzpicture}
    
\caption{\raggedright \textbf{The better choice of two UCBs.} 
%For simplicity, suppose $\Bcal=[0,1]$. 
We restrict our bid selection to the interval $[b_{\mathrm{left}},\,b_{\mathrm{right}}]$ returned by \cref{alg:ucb_selection}, which contains the hindsight optimal bid $b_t^*$ marked by \textcolor{purple}{\textbf{$\times$}}. Then we select the UCB with the \textbf{tighter} width $w_{t,i}$ over this bid interval by recognizing which endpoint ($0$ or $1$) the CDF interval $[\widehat{G}_t(b_{\mathrm{left}}),\,\widehat{G}_t(b_{\mathrm{right}})]$ is closer to. In this example, the CDF interval is closer to $1$, so the UCB index $i=1$ will be returned from \cref{alg:ucb_selection}, and we will use UCB $u_{t,1}$ (from width $w_{t,1}$) for further bid selection and elimination in \cref{alg:master_alg_te}. 
}
\label{fig:twoucb}
\end{figure}

This section describes the critical step in our decision process. The key structure is that $\br_{t,0}(b)$ and $\br_{t,1}(b)$, defined in \eqref{eq:UCB0} and \eqref{eq:UCB1} respectively, depend on the estimation error of $\btheta_*^\top \bx_t$ in different ways: the coefficient of $\btheta_*^\top \bx_t$ is $G_t(b)$ in $\br_{t,0}(b)$, and has magnitude $1-G_t(b)$ in $\br_{t,1}(b)$. Therefore, a possibly large estimation error of $\btheta_*^\top \bx_t$ when $G_t(b)$ is close to $0$ (resp. $1$) does not transform into a large estimation error in $\br_{t,0}(b)$ (resp. $\br_{t,1}(b)$); this is how the potentially unbounded variance $\sigma_t(b)^2$ in \eqref{eq:sigma_t} of the IPW estimator can be neutralized in the decision process. 

Motivated by this idea, our final bid is determined using the ``better of two UCBs'': we invoke a UCB selection subroutine to estimate whether $\widehat{G}_t(b_t^*)$ is closer to $0$ (resp. $1$) and focus only on the bids with $\widehat{G}_t(b)$ close to $0$ (resp. 1). 
%In particular, since the learner may want to discretize the bid space $[0,1]$ at the start to reduce computational burden, we allow an arbitrary bid space $\Bcal\subseteq [0,1]$ that may not contain $b_t^*$. The following steps will focus on $\argmax_{b^*\in \Bcal}\br_t(b^*)$ that is optimal within the given (sub-)space $\Bcal$. 
Specifically, \cref{alg:ucb_selection} finds an interval $[b_{\mathrm{left}},b_{\mathrm{right}}]$ in the given bid space that satisfy two key properties. First, the interval contains the hindsight optimal bid $\argmax_{b^*\in \Bcal}\br_t(b^*)$ of the given bid space. Second, the corresponding estimated CDF over this interval $[\widehat{G}_t(b_{\mathrm{left}}), \widehat{G}_t(b_{\mathrm{right}})]$ is bounded away from either $0$ or $1$ by a constant $\frac{1}{30L}$.

\begin{lemma}[Good CDF interval]\label{lem:good_cdf_interval}
Let $c=\frac{1}{60L}$ and $[b_{\mathrm{left}},b_{\mathrm{right}}]$ be the interval found in \cref{alg:ucb_selection}. It holds that
\begin{enumerate}
    \item[(1)] $\argmax_{b^*\in \Bcal}\br_t(b^*) \in [b_{\mathrm{left}},b_{\mathrm{right}}]$;

    \item[(2)] $\widehat{G}_t(b_{\mathrm{right}}) - \widehat{G}_t(b_{\mathrm{left}}) \le 1 - 2c$ when $c^2 \ge \abs*{\widehat{v}-\btheta_*^\top \bx_t} + \|\widehat{G}_t - G_t\|_\infty$.\footnote{The numerical constants are not optimized.}
\end{enumerate}
\end{lemma}

The first claim indicates that it suffices to restrict our attention to the interval $[b_{\mathrm{left}},b_{\mathrm{right}}]$ found by \cref{alg:ucb_selection}, for it contains the optimal bid. The second claim then suggests that the indicator $i=\indic\parq*{\widehat{G}_t(b_{\mathrm{left}}) \ge c}$ can be used to select one of the two UCBs $u_{t,0}, u_{t,1}$. Indeed, $i=1$ implies that $\widehat{G}_t(b)\ge c$ for every bid $b\in [b_{\mathrm{left}},b_{\mathrm{right}}]$. Therefore, UCB $u_{t,1}$ is more favorable since its coefficient in front of $\btheta_*^\top \bx_t$ is approximately $(1-\widehat{G}_t(b))\le c^{-1}\widehat G_t(b)(1-\widehat G_t(b)) = (c\sigma_t(b)^2)^{-1}$. In other words, $u_{t,1}$ is the better of the two UCBs (up to a constant $c^{-1}$) with a tighter width $w_{t,1}$. Similarly, if $i=0$, it implies that $w_{t,0}(b)= O( \min\bra{w_{t,0}(b),w_{t,1}(b)})$ over the interval and UCB $u_{t,0}$ is the better one. Intuitively, this choice $i\in\bra{0,1}$ favors $\br_{t,0}$ for a small CDF, and favors $\br_{t,1}$ for a large CDF.

For this data-driven choice $i\in\bra{0,1}$, the next result shows that the resulting reward $\br_{t,i}(b)$ enjoys a small width $w_{t}(b) = \min\bra{w_{t,0}(b), w_{t,1}(b)}$. It thereby yields a small estimation-to-decision error in the subsequent bid selection in \cref{alg:master_alg_te}.
\begin{lemma}[Small width for selected UCB]\label{lem:TE_width}
Suppose the event in Lemma~\ref{lem:WLS} holds and $\widehat{G}_t$ satisfies \cref{ass:est_oracle_bern}. Let $\br_{t,0}$ and $\br_{t,1}$ be defined as in \eqref{eq:UCB0} and \eqref{eq:UCB1}. For the index $i\in\{0,1\}$ selected in \cref{alg:base_alg_te}, it holds that 
\begin{equation*}
u_{t,i}(b) - 2\min\bra{w_{t,0}(b), w_{t,1}(b)}\le \br_{t,i}(b)  \le u_{t,i}(b) 
\end{equation*}
for all $b\in [b_{\mathrm{left}}, b_{\mathrm{right}}]$. Here the interval $[b_{\mathrm{left}}, b_{\mathrm{right}}]$ is defined in \cref{alg:ucb_selection} and the widths $w_{t,1},w_{t,0}$ are defined on Line 8 of \cref{alg:base_alg_te}.
\end{lemma}

The main benefit is that the leading term of the confidence width $w_t(b)=\min\bra{w_{t,0}(b), w_{t,1}(b)}$ now scales with $\widehat{G}_t(b)(1-\widehat{G}_t(b))= 1/\sigma_t(b)^2$, so it helps neutralize possibly large $\sigma_t(b)$. Consequently, this adaptive selection of ``better of two UCBs'' is the key to eliminating the need for an overlap condition in our framework.

\subsection{Dependence Issue and Master Algorithm}\label{sec:lte_dependence_issue}

\begin{algorithm}[h!]\caption{SUP-LIN-TE (Master algorithm for FPA with linear treatment effects)}
\label{alg:master_alg_te}
\textbf{Input:} Time horizon $T$, Lipschitz constant $L$, estimation oracle $\Ocal$.

\textbf{Initialize:} set $S=\lceil \log_2 \sqrt{T} \rceil$, $\Phi^{(s)}_1=\varnothing$ for $s\in[S]$, and discretization $\Bcal = \bra*{\frac{j}{\sqrt{T}}: j\in [\lfloor \sqrt{T}\rfloor]}$.

\For{$t=1$ \KwTo $T$}
{   
Observe the context vector $\bx_t\in\R^d$.

Estimate the CDF and confidence level with $(\widehat{G}_t,\delta_t)\leftarrow \Ocal\parr*{\{M_\tau, \bx_\tau\}_{\tau<t}; \bx_t}$.

Initialize $B_1\leftarrow \Bcal$.

\For{$s=1$ \KwTo $S$}
{

Invoke Algorithm~\ref{alg:base_alg_te} with time indices $\Phi^{(s)}_t$, space $B_s$, and CDF $\widehat{G}_t$ to compute UCBs $\bra{u^{(s)}_{t,j}(b):j=0,1}_{b\in B_s}$, widths $\bra{w^{(s)}_{t,j}(b): j=0,1}_{b\in B_s}$, index $i_s\in\bra{0,1}$, and interval $\parq{b^{(s)}_{\mathrm{left}},b^{(s)}_{\mathrm{right}}}$.

Define $w_t^{(s)}(b) \gets \min\bra{w_{t,0}^{(s)}(b), w_{t,1}^{(s)}(b)}$ for $b\in B_s$.

\uIf{$\exists b\in B_s$ such that $w^{(s)}_t(b) > 2^{-s}$}{
Choose this $b_t\leftarrow b$.\label{line:master_te_select_bt}

Update: $\Phi^{(s)}_{t+1} \leftarrow \Phi^{(s)}_t\cup \{t\}$ and $\Phi^{(s')}_{t+1} \leftarrow \Phi^{(s')}_t$ for $s'\neq s$. Break the inner for loop.
}\uElseIf{$w^{(s)}_t(b)\leq \frac{1}{\sqrt{T}}$ for all $b\in B_s$}{
Choose $b_t \leftarrow \argmax_{b\in B_s}u^{(s)}_{t,i_s}(b)$ under the selected UCB $u_{i_s}$.
\label{line:master_te_exploit_bt}

Do not update: $\Phi^{(s')}_{t+1}\leftarrow \Phi^{(s')}_t$ for all $s'\in[S]$. Break the inner for loop.
}\Else{
We have $w^{(s)}_t(b)\leq 2^{-s}$ for all $b\in B_s$. \label{line:master_te_elim_step}

Eliminate bids: $\displaystyle B_{s+1}\leftarrow \bra*{b\in B_s : u^{(s)}_{t,i_s}(b) \geq \max_{b'\in B_s} u^{(s)}_{t,i_s}\parr*{b'} - 2\cdot 2^{-s}}\cap \parq*{b^{(s)}_{\mathrm{left}}, b^{(s)}_{\mathrm{right}}}$.
}
}

\tcc{Truncate the bid for nonzero propensity scores:}
Set $\gamma_t \gets 1 + c_1\log(2T) + c_2\sqrt{\sum_{\tau\in\Phi_t^{(s)}}\delta_\tau^2}$ where $s\in[S]$ is the stage $b_t$ is selected.

Let $z\gets \min\bra*{\gamma_t\sqrt{\frac{d}{T}}+4\delta_t, \frac{1}{2}}$, and bidder bids the truncated bid\label{line:master_bid_truncate}
$$b_t \gets \min\Big\{ \max\Big\{b_t, \widehat{G}_t^{-1}(z)\Big\}, \widehat{G}_t^{-1}(1-z)\Big\}.$$

Observe HOB $M_t$ and auction outcome $\indic[b_t\geq M_t]v_{t,1}$ and $\indic[b_t<M_t]v_{t,0}.$
}
\end{algorithm}

In this section we discuss why the condition of \cref{lem:WLS} may not hold because of a dependence issue, and how to mitigate it. Specifically, if $\Phi_t=[t-1]$ in \cref{alg:base_alg_te} and we select the bid based on UCBs $u_{t,0}(b)$ or $u_{t,1}(b)$, the selection of $b_t$ will depend on the realized outcomes
\begin{equation*}
\{\indic[b_s\geq M_s]v_{s,1}, \indic[b_s < M_s] v_{s,0}\}_{s<t};
\end{equation*}
therefore, conditioned on such a future bid $b_t$, the past potential outcomes $(v_{s,0},v_{s,1})_{s<t}$ may become dependent. To address this (subtle) dependence issue, we propose to use a master algorithm SUP-LIN-TE (\cref{alg:master_alg_te}). The idea behind this master algorithm is taken from \citep{auer2002using}, and has been subsequently used to address similar dependence issues in \citep{chu2011contextual,han2024optimal}. 

The purpose of this master algorithm is to partition the time indices $[t-1]$ into multiple stages, eliminate the suboptimal bids in a hierarchical manner, and select the bid $b_t$ only through the confidence widths $w_{t,0}(b), w_{t,1}(b)$ in LIN-TE, instead of the point estimates $u_{t,0}(b)$ or $u_{t,1}(b)$. Crucially, the widths $w_{t,0}(b), w_{t,1}(b)$ are computed based on the contexts $\bra{(\bx_\tau,b_\tau,M_\tau):\tau\in\Phi_t^{(s)}}$ on the current $s$-th stage, but do not involve the potential outcomes $\bra{(v_{\tau,1},v_{\tau,0}): \tau\in\Phi_t^{(s)}}$ on this stage. Here $\Phi_t^{(s)}\subseteq [t-1]$ denotes the partition belonging to the $s$-th stage. By contrast, the point estimates $u_{t,0}(b)$ and $u_{t,1}(b)$ (which depend on the potential outcomes $(v_{\tau,1},v_{\tau,0})$ on this stage) are only revealed when we decide to move to the next stage (Line 17). This information structure turns out to be the key idea to ensure the conditional independence in \cref{lem:WLS}: 

\begin{lemma}[Lemma 14 of \citep{auer2002using}]
\label{lem:ind_rewards}
Under \cref{ass:noconfounding} and  \ref{ass:context}, for every $s\in [S]$ and $t\in [T]$, $(v_{\tau,1},v_{\tau,0})_{\tau\in \Phi_t^{(s)}}$ are conditionally independent given $(\bx_\tau,b_\tau,M_\tau)_{\tau\in \Phi_t^{(s)}}$ in \cref{alg:master_alg_te}.
\end{lemma}

\begin{remark}
We remark that \cref{alg:master_alg_te} is applicable to either continuous or discrete bid space $\Bcal \subseteq [0,1]$. In particular, we can discretize $\Bcal$ a priori to any fine enough resolution that trades off the computational burden and the discretization error in the regret analysis. In addition, Line 13 of \cref{alg:master_alg_te} ensures that we never reach stage $s=S+1$ in the inner for loop. 
\end{remark}

\subsection{Regret Upper Bound}\label{sec:te_regret_analysis}
\begin{theorem}[Upper bound of \cref{alg:master_alg_te}]\label{thm:linear_te_reg_upper}
Let $\pi$ be the bidding strategy \cref{alg:master_alg_te}. Then under Assumptions \ref{assump:linear}--\ref{ass:Gt}, the policy $\pi$ achieves an expected regret
\begin{equation*}
\Reglte(\pi) = O\parr*{\sqrt{\Delta dT}\log^3 T}. 
\end{equation*}
\end{theorem}

In the sequel, we provide a proof sketch. WLOG we assume \cref{ass:est_oracle_bern} and the high-probability event in \cref{lem:WLS} holds almost surely. We also assume that $2\delta_t < 0.1$ for all $t\in[T]$, for there are at most $20\sum_{t=1}^T\delta_t \le \sqrt{\Delta T}$ times when this assumption is violated, contributing $O(\sqrt{\Delta T})$ regret. First, the following lemma proves that the bids in stage $s$ are only $O(2^{-s})$-suboptimal under $\br_t$.
% Let $\hb^*_t = \argmin_{b\in\Bcal}\abs{b_t^* - b}$ denote the closest bid in the discretized space to the hindsight optimal.
\begin{lemma}\label{lem:master_confidence_bound_te}
In \cref{alg:master_alg_te}, for every $t\in[T]$ and $s\in[S]$, it holds that $\br_t(b_t^*)-\br_t(b) \leq 17\cdot 2^{-s}$ for all $b\in B_s$.
\end{lemma}
Let $b^{\circ}_t$ denote the selected bid before the truncation in Line 19 in \cref{alg:master_alg_te} and $b_t$ denote the final bid after truncation. Since \cref{ass:est_oracle_bern} implies $\|G_t-\widehat{G}_t\|_{\infty}\leq 2\delta_t$, by \cref{lem:truncation}, the truncation step does not hurt the reward too much:
\begin{equation}\label{eq:bounded_truncation_loss}
\br_t(b^{\circ}_t) - \br_t(b_t) \leq \gamma_t\sqrt{\frac{d}{T}} + 8\delta_t
\end{equation}
where $\gamma_t$ denotes the coefficient $\gamma$ define in Line 6 of \cref{alg:base_alg_te}. Then we have the following regret decomposition:
\begin{align*}
\Reglte(\pi) &= \sum_{t\in[T]} \E \parq{r_t(b^*_t)} - \E\parq{r_t(b_t)}
= \sum_{t\in[T]}\underbrace{\parr*{\br_t(b^*_t) - \br_t\parr*{b^{\circ}_t}}}_\text{(A)} + \underbrace{\parr*{\br_t\parr*{b^\circ_t} - \br_t(b_t)}}_\text{(B)}.
\end{align*}
By \eqref{eq:bounded_truncation_loss}, $\text{(B)}\leq \gamma_T\sqrt{dT} + 8\sum_{t=1}^T\delta_t = O\parr{\sqrt{\Delta dT}\log T}$ by Cauchy-Schwartz inequality. It remains to upper bound (A).

Denote by $\Phi^{(s)}\subseteq [T]$ the set of time indices belonging to stage $s\in [S]$, and $\Phi^{(S+1)}\subseteq [T]$ the set of time indices where all widths are below $\frac{1}{\sqrt{T}}$. A decomposition into different stages $s=1,\dots,S+1$ now gives
\begin{align*}
\sum_{t\in[T]}\parr*{\br_t\parr{b^*_t} - \br_t(b^\circ_t)} &= \sum_{t\in\Phi^{(S+1)}}\parr*{\br_t\parr{b^*_t} - \br_t(b^\circ_t)} + \sum_{s\in[S]}\sum_{t\in\Phi^{(s)}}\parr*{\br_t\parr{b^*_t} - \br_t(b^\circ_t)}\\
&\stepa{\leq} \frac{|\Phi^{(S+1)}|}{\sqrt{T}} + 17\sum_{s\in[S]}2^{-s}\abs{\Phi^{(s)}}\\
&\stepb{\leq} O(\sqrt{T}) + 17\sum_{s\in[S]}\sum_{t\in\Phi^{(s)}}w^{(s)}_t(b_t^\circ)\\
&\stepc{\leq} O(\sqrt{T}) + 17\sum_{s\in[S]}\sum_{t\in\Phi^{(s)}}w^{(s)}_t(b_t)
\end{align*}
where (a) is from \cref{lem:master_confidence_bound_te}, (b) uses $w^{(s)}_t(b_t^\circ) \geq 2^{-s}$ when $t\in\Phi^{(s)}$, and (c) uses $\widehat{G}_t(b_t)(1-\widehat{G}_t(b_t))\ge \widehat{G}_t(b_t^\circ)(1-\widehat{G}_t(b_t^\circ))$ to arrive at $w^{(s)}_t(b_t)\geq w^{(s)}_t(b_t^\circ)$ by the definition of $w_t^{(s)}$. Finally, applying the following elliptical potential lemma and $\sum_{t=1}^T\delta_t \le \sqrt{\Delta T}$ leads to the upper bound in \cref{thm:linear_te_reg_upper}.

\begin{lemma}\label{lem:elliptical-potential-TE}
For any $s\in[S]$, suppose $\delta_t<0.1$ for $t\in\Phi^{(s)}$ and we have
\[
\sum_{t\in\Phi^{(s)}}w^{(s)}_t(b_t) = O\parr*{\sqrt{\Delta dT}\log^2 T + d\Delta\log T + \sum_{t\in\Phi^{(s)}} \delta_t}.
\]
\end{lemma}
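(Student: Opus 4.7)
The plan is to decompose the width into its two constituent terms and bound each via a weighted elliptical potential argument, with the truncation step in \cref{alg:master_alg_te} playing the critical role. Concretely, since $\widehat{G}_t(b_t)(1-\widehat{G}_t(b_t)) = \sigma_t^{-2}$ with $\sigma_t := \sigma_t(b_t)$, I would first write
\begin{align*}
w_t^{(s)}(b_t) = c_3 L \gamma \bigl( \sigma_t^{-2} \|x_t\|_{A_t^{-1}} + \gamma \|x_t\|_{A_t^{-1}}^2 \bigr)
\end{align*}
and handle the two summands separately. The common tool is the observation that, with design vectors $z_\tau := \sigma_\tau^{-1} x_\tau$ satisfying $\|z_\tau\|_2 \leq 1/2$, we have $A_t = \lambda I + \sum_{\tau \in \Phi_t^{(s)}} z_\tau z_\tau^\top$, so the standard elliptical potential lemma (with $\lambda = 1$) yields
\begin{align*}
\sum_{t\in \Phi^{(s)}} \sigma_t^{-2} \|x_t\|_{A_t^{-1}}^2 = O(d\log T).
\end{align*}

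For the first summand, Cauchy--Schwarz together with $\sigma_t^{-2} \leq 1/4$ gives
\begin{align*}
\sum_{t\in \Phi^{(s)}} \sigma_t^{-2} \|x_t\|_{A_t^{-1}} \leq \sqrt{\sum\nolimits_t \sigma_t^{-2}} \cdot \sqrt{\sum\nolimits_t \sigma_t^{-2}\|x_t\|_{A_t^{-1}}^2} \leq \sqrt{T/4}\cdot O(\sqrt{d\log T}).
\end{align*}
Multiplying by $L\gamma = O(\log T + \sqrt{\Delta_2})$ yields a contribution of $O(\sqrt{dT}\log^{3/2}T + \sqrt{dT\Delta_2}\log^{1/2}T)$, already within the target.

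The main obstacle is the second summand $\gamma^2 \sum_t \|x_t\|_{A_t^{-1}}^2$: without the $\sigma_t^{-2}$ weight the elliptical potential does not directly apply, and $\sigma_t^2$ can in principle blow up when $\widehat{G}_t(b_t)$ is near $0$ or $1$. This is where the truncation in Line 22 of \cref{alg:master_alg_te} becomes essential. Because $b_t$ is clipped into $[\widehat{G}_t^{-1}(z_t), \widehat{G}_t^{-1}(1-z_t)]$ with $z_t = \min\{\gamma_t\sqrt{d/T}+4\delta_t,\, 1/2\}$, monotonicity of $\widehat{G}_t$ forces $\widehat{G}_t(b_t)\in [z_t, 1-z_t]$, hence $\sigma_t^2 \leq 2/z_t = O(\max\{1,\, \sqrt{T/d}/\gamma_t\})$. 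Writing $\|x_t\|_{A_t^{-1}}^2 = \sigma_t^2 \cdot \sigma_t^{-2}\|x_t\|_{A_t^{-1}}^2$, pulling out this uniform bound, and noting $\gamma/\gamma_t = O(1)$ since both are $O(\log T + \sqrt{\Delta_2})$, I get
\begin{align*}
\gamma^2 \sum_{t\in\Phi^{(s)}} \|x_t\|_{A_t^{-1}}^2 \leq O\bigl(\max\{\gamma^2,\, \gamma\sqrt{T/d}\}\bigr) \cdot \sum_{t\in\Phi^{(s)}}\sigma_t^{-2}\|x_t\|_{A_t^{-1}}^2.
\end{align*}
A short case split on whether $\gamma \geq \sqrt{T/d}$, combined with $\gamma^2 = O(\log^2 T + \Delta_2)$, then gives $O(\sqrt{dT\Delta_2}\log T + \sqrt{dT}\log^2 T + d\Delta_2\log T + d\log^3 T)$.

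Adding the two contributions and absorbing the pure-$\sqrt{dT}$ polylog terms into $\sqrt{dT\Delta_2}\log^2 T$ via $\Delta_2 \geq 1$ produces the claimed bound. A few pieces of minor bookkeeping I would need to verify along the way: (i) the ratio $\gamma/\gamma_t$ is $O(1)$ since both quantities are bounded by $1 + c_1\log(2T) + c_2\sqrt{\Delta_2}$; (ii) the hypothesis $\delta_t < 0.1$ keeps $z_t$ in the non-degenerate regime so that $\sigma_t^2 \leq 2/z_t$ is tight up to constants; and (iii) the stated form of the elliptical potential applies because $\|\sigma_\tau^{-1} x_\tau\|_2 \leq 1/2 < 1 = \lambda_{\min}(A_\tau)$, which bounds $\|z_t\|_{A_t^{-1}}^2$ away from $1$ so that the usual $\min\{1,\cdot\}$ truncation in the potential lemma can be dropped.
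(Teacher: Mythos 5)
Your proposal follows essentially the same route as the paper's: the same two-term decomposition of $w_t^{(s)}(b_t)$, the same use of the truncation step in Line 22 to bound $\sigma_t^2 = O\bigl(\max\{1, \gamma_t^{-1}\sqrt{T/d}\}\bigr)$, and the same weighted elliptical-potential argument with design vectors $\sigma_\tau^{-1}x_\tau$. Your final bound and its bookkeeping agree with the paper's.

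One item in your verification list is incorrect as stated. In (i) you claim $\gamma/\gamma_t = O(1)$ ``since both are bounded by $1+c_1\log(2T)+c_2\sqrt{\Delta_2}$,'' but a common upper bound does not control the ratio. Since $\gamma_t = 1 + c_1\log(2T) + c_2\sqrt{\sum_{\tau\in\Phi_t^{(s)}}\delta_\tau^2}$ and $\gamma = 1+c_1\log(2T)+c_2\sqrt{\Delta_2}$, for a step where $\Phi_t^{(s)}$ is still small the ratio can be as large as $\Theta\bigl(\sqrt{\Delta_2}/\log T\bigr)$, which is unbounded when $\Delta_2 \gg \log^2 T$. Luckily the step is not needed: the width computed by \cref{alg:base_alg_te} actually contains $\gamma_t$ (the $\gamma$ set in Line 7 of LIN-TE at time $t$), not the terminal $\gamma$. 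So you should bound
\begin{align*}
\gamma_t^2\,\sigma_t^2 \;\le\; O\bigl(\max\{\gamma_t^2,\; \gamma_t\sqrt{T/d}\}\bigr) \;\le\; O\bigl(\max\{\gamma^2,\; \gamma\sqrt{T/d}\}\bigr),
\end{align*}
using only the monotonicity $\gamma_t \le \gamma$; this is precisely the order of operations in the paper's proof, which never needs, or asserts, control of the ratio.
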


\section{Binary HOB Feedback}\label{sec:binary_hob}

In this section, we will address the more difficult ``joint value estimation, HOB estimation, and bidding problem'' under the binary HOB feedback. Recall that under the binary HOB feedback, the learner only observes the win-loss indicator $\indic[M_t\le b_t]$ about the HOB, as opposed to $M_t$ itself under the full-information feedback. This binary feedback crucially becomes dependent on the bidding process now, and the HOB estimation is no longer a standalone problem. Given this dependence between $M_t$ and $b_t$, the learner naturally needs to account for the exploration in learning $G_t$ in the bidding process.

\subsection{Analysis under Generic HOB Abstraction}
Similar to the full-information feedback, we describe our treatment effect estimator, base algorithm, and master algorithm separately.

\subsubsection{Modified IPW Estimator.}\label{sec:IPW_binary}
Given a CDF estimator $\widehat{G}_t$ satisfying \cref{ass:est_oracle_weaker_bern}, the truncated IPW estimator from \eqref{eq:truncated_estimator_te} is redefined as follows:
\begin{equation}\label{eq:truncated_estimator_te_weaker}
\widetilde{e}_t(b) = \frac{\indic[b\geq M_t]}{\max\bra*{\delta_t(b)^2 , \widehat{G}_t(b)}}v_{t,1} - \frac{\indic[b< M_t]}{\max\bra*{\delta_t(b)^2, 1-\widehat{G}_t(b)}}v_{t,0}.
\end{equation}
With a slight abuse of notation, we still use $\widetilde{e}_t$ to denote this estimator. The only difference is replacing the uniform error parameter $\delta_t$ by the bid-dependent counterpart $\delta_t(b)$. 

The bias and variance of this estimator can be controlled in a similar manner as in the full-information HOB feedback. 

\begin{lemma}[Adapted \cref{lem:bias-variance}]\label{lem:bias-variance_weaker}
Suppose the estimated CDF $\widehat{G}_t$ satisfies \cref{ass:est_oracle_weaker_bern}. Then there exist absolute constants $c, c'>0$ such that % \YH{to check} \YW{checked}
\begin{equation*}
\begin{cases}
    \abs*{\E[\widetilde{e}_t(b)] - \btheta_*^\top \bx_t} \le c(\delta_t(b)\sigma_t(b) + \xi \sigma_t(b)^2)\le c\parr*{\delta_t(b) + \xi}\sigma_t(b)^2 \\
    \Var(\widetilde{e}_t(b)) \le c'\max\{\xi\sigma_t(b)^4, \sigma_t(b)^2\} \le c'\sigma_t(b)^4.
\end{cases}
\end{equation*}
\end{lemma}
Recall that the variance proxy $\sigma_t(b)^2 = \parr*{\widehat{G}_t(b)\parr{1-\widehat{G}_t(b)}}^{-1}$ can be prohibitively large as we do not impose overlap conditions on the true CDF. In \Cref{lem:bias-variance_weaker}, both the bias and variance bounds naturally involve the bias parameter $\xi$ from HOB estimation, and the special case of $\xi=0$ recovers \Cref{lem:bias-variance}. The additional term $\xi \sigma_t(b)^4$ in the variance bound can be much worse than the previous term $\sigma_t(b)^2$ in \cref{lem:bias-variance_weaker}. Fortunately, using a weighted ridge regression with a different weight, this larger variance can still be mitigated in the final regret analysis.

\subsubsection{Base Algorithm Revisited.}

Next we revisit the base algorithm for weighted ridge regression with the new IPW estimator described by \cref{lem:bias-variance_weaker}. The differences between \cref{alg:base_alg_three} and \cref{alg:base_alg_te} are in \textcolor{blue}{blue}. To incorporate the potentially larger variance, the weights are changed from $\sigma_\tau^{-2}$ to $\sigma_\tau^{-4}$. The performance of the estimator $\bhtheta_t$ from this weighted ridge regression is summarized as follows.

\begin{lemma}[Adapted \cref{lem:WLS}]\label{lem:adapted_WLS}
Suppose $(v_{\tau,1},v_{\tau,0})_{\tau\in \Phi_t}$ are conditionally independent given $(\bx_\tau,b_\tau,M_\tau)_{\tau\in \Phi_t}$, then with probability at least $1-T^{-2}$, it holds that
\begin{align*}
\abs*{\bhtheta^{\top}_t\bx_t - \btheta_*^{\top}\bx_t} \leq \gamma \|\bx_t\|_{\bA_t^{-1}}, 
\end{align*}
where $\bA_t$ and $\gamma$ are defined in Line 3 and Line 6 of \cref{alg:base_alg_three} respectively. 
\end{lemma}

\begin{algorithm}[h!]\caption{Base algorithm under binary HOB feedback}
\label{alg:base_alg_three}
\textbf{Input:} Time indices $\Phi_t\subseteq [t-1]$, bid space $B_t$, estimated CDF $\widehat{G}_t$, time horizon $T$.

$\sigma_\tau \gets \sigma_\tau(b_\tau)$ in \eqref{eq:IPW_var_with_exploration} for $\tau\in \Phi_t$;

$\bA_{t}\leftarrow \bI + \sum_{\tau\in \Phi_t} $\textcolor{blue}{$\sigma_\tau^{-4}$}$ \bx_\tau \bx_\tau^{\top}$;

$\bz_{t} \leftarrow \sum_{\tau\in\Phi_t} $\textcolor{blue}{$\sigma_\tau^{-4}$}$ \bx_\tau \widetilde{e}_\tau$ where $\widetilde{e}_\tau = \widetilde{e}_\tau(b_\tau)$ is defined in \eqref{eq:IPW_var_with_exploration};

Compute estimator $\bhtheta_{t} \leftarrow \bA_{t}^{-1}\bz_{t}$.

Set $\gamma\gets 1 + c_1\log(2T)+c_2\sqrt{\color{blue}\sum_{\tau\in\Phi_t}(\delta_\tau^2 + \xi^2)}$, with $\delta_\tau=\delta_\tau(b_\tau;b^{\tau-1})$ given in \cref{ass:est_oracle_weaker_bern} and absolute constants $c_1=26,c_2=40$.

\For{$b\in B_t$}
{
Compute width $w_{t,0}(b) \leftarrow 120L\widehat{G}_t(b)\gamma\|\bx_t\|_{\bA_t^{-1}} + 4\delta_t(b) + 2\xi$. 
% \YW{TODO: remove $L$ from here and add it in the master when defining the minimum width.}

Compute width $w_{t,1}(b) \leftarrow 120L\parr{1-\widehat{G}_t(b)}\gamma\|\bx_t\|_{\bA_t^{-1}} + 4\delta_t(b) + 2\xi$. 

Compute $u_{t,0}(b)\leftarrow \widehat{G}_t(b)\parr*{\bhtheta_{t}^{\top}\bx_t - b + \gamma\|\bx_t\|_{\bA_t^{-1}}} + 2\delta_t(b) + \xi$.

Compute $u_{t,1}(b)\leftarrow \widehat{G}_t(b)\parr*{\bhtheta_{t}^{\top}\bx_t - b - \gamma\|\bx_t\|_{\bA_t^{-1}}} - \parr*{\bhtheta_t^{\top}\bx_t - \gamma\|\bx_t\|_{\bA_t^{-1}}} + 2\delta_t(b) + \xi$.
}

Invoke \cref{alg:ucb_selection} with perturbation $c=\frac{1}{60L}$, CDF $\widehat{G}_t$, and value $\bhtheta_t^\top \bx_t$ to get $\parq{b_{\mathrm{left}},b_{\mathrm{right}}}$ and index $i\in\bra{0,1}$. \label{line:find_ucb_interval}

\end{algorithm}

\subsubsection{Master Algorithm Revisited.}

\begin{algorithm}[h!]\caption{Master algorithm under binary HOB feedback}
\label{alg:joint_three_master}
\textbf{Input:} Time horizon $T$, Lipschitz constant $L$, HOB estimation oracle $\Ocal$.

\textbf{Initialize:} set $\Phi_1^{(s)} = \varnothing$ for $s\in[S]$ and discretize $\Bcal = \bra*{\frac{j}{\sqrt{T}}: j=0,1,\dots, \lfloor \sqrt{T}\rfloor}$.

\For{$t=1$ \KwTo $T$}{
Observe the context $\bx_t\in\R^d$.

Initialize $B_1\gets \Bcal$.

\For{$s=1$ \KwTo $S$}{
\textcolor{blue}{Get estimator $\widehat{G}_t^{(s)}$ using oracle $\Ocal(\bra{(\indic[M_\tau\le b_\tau], b_\tau, \bx_\tau):\tau\in \Phi_t^{(s)}}; \bx_t)$.} \label{line:stage_s_HOB_est}

Invoke \cref{alg:base_alg_te} with $\Phi_t^{(s)}$, $B_s$, $\widehat{G}_t^{(s)}$ to compute UCBs $\bra*{u^{(s)}_{t,j}(b):j=0,1}_{b\in B_s}$, widths $\bra*{w^{(s)}_{t,j}(b): j=0,1}_{b\in B_s}$, index $i_s\in\bra{0,1}$, interval $\parq{b^{(s)}_{\mathrm{left}}, b^{(s)}_{\mathrm{right}}}$.

Define $w_t^{(s)}(b) \gets \min\bra{w_{t,0}^{(s)}(b), w_{t,1}^{(s)}(b)}$ for $b\in B_s$.

{\color{blue}
\uIf{$\min_{b\in\Bcal}\max\bra{w_{t,0}^{(s)}(b), w_{t,1}^{(s)}(b)} > \frac{1}{60L}$}{
Sample $b_t \sim \mathrm{Bern}(\frac{1}{2})$.\label{line:master_explore} \qquad{\color{black}\tcp{Explore}}

Update: $\Phi_{t+1}^{(s)}\gets \Phi_t^{(s)}\cup\bra{t}$ and $\Phi^{(s')}_{t+1}\gets \Phi^{(s')}_t$ for $s\neq s'$. Break the inner for loop.
}
}
\uElseIf{$\exists b\in B_s$ such that $w^{(s)}_t(b)>2^{-s}$}{
Choose this $b_t\gets b$.\label{line:three_joint_bid_selection}

Update: $\Phi_{t+1}^{(s)}\gets \Phi_t^{(s)}\cup\bra{t}$ and $\Phi^{(s')}_{t+1}\gets \Phi^{(s')}_t$ for $s\neq s'$. Break the inner for loop.
}
\uElseIf{$w^{(s)}_t(b)\le \frac{1}{\sqrt{T}}$ for all $b\in B_s$}{
Choose $b_t\gets \argmax_{b\in B_s} u^{(s)}_{t,i_s}(b)$.

Do not update: $\Phi^{(s')}_{t+1}\gets \Phi^{(s')}_t$ for all $s'\in[S]$. Break the inner for loop.
}
\uElse{
We have $w^{(s)}_t(b)\le 2^{-s}$ for all $b\in B_s$.

Eliminate bids: $B_{s+1} \gets \bra*{b\in B_s: u^{(s)}_{t,i_s}(b) \ge \max_{b\in B_s}u^{(s)}_{t,i_s}\parr*{b} - 2\cdot 2^{-s}}\cap \parq*{b^{(s)}_{\mathrm{left}}, b^{(s)}_{\mathrm{right}}}$.
}
}
Bid $b_t$.

Observe HOB feedback $\indic[M_t\le b_t]$, auction outcome $\indic[M_t\le b_t]v_{t,1}$ and $\indic[M_t>b_t]v_{t,0}$.
}
\end{algorithm}

Recall that a master algorithm (\cref{alg:master_alg_te}) was used under the full-information HOB feedback to address a subtle dependence issue between the outcome observations and the value estimation. At a high level, it was because the value estimation in \cref{lem:adapted_WLS} requires conditional independence of the outcome observations. Now under the binary HOB feedback, we will also apply the master algorithm to HOB estimation to ensure that the HOB observations $\bra{\indic[M_\tau\le b_\tau]}_\tau$ are conditionally independent to allow flexible implementations of \cref{ass:est_oracle_weaker_bern}. Indeed, one typically requires such conditional independence to apply concentration results and derive estimation error bounds as in \cref{ass:est_oracle_weaker_bern}. This is captured by invoking HOB estimation in Line 7 of \cref{alg:joint_three_master} within the inner loop and only over the subset of time indices $\Phi_t^{(s)}$.
\begin{lemma}
\label{lem:ind_hobs}
Under \cref{ass:noconfounding} and  \ref{ass:context}, for every $s\in [S]$ and $t\in [T]$, $(v_{\tau,1},v_{\tau,0})_{\tau\in \Phi_t^{(s)}}$ are conditionally independent given $(\bx_\tau,b_\tau,M_\tau)_{\tau\in \Phi_t^{(s)}}$, and $(\indic[M_\tau\le b_\tau])_{\tau\in \Phi_t^{(s)}}$ are conditionally independent given $(\bx_\tau, b_\tau)_{\tau\in \Phi_t^{(s)}}$ in \cref{alg:joint_three_master}.
\end{lemma}

More importantly, there is an additional exploration (via randomized experiments) step from Line 10 to 12 in \cref{alg:joint_three_master}. As discussed in \cref{sec:IPW_binary}, the need of this exploration step ultimately arises as a consequence of the larger variance in the IPW estimator under the binary HOB feedback. Recall that in the ``Better of Two UCBs'' trick in \cref{sec:better-of-two-UCB}, \cref{lem:good_cdf_interval} requires the estimation error to be $O(1)$ to identify the better UCB. This is, at a high level, achieved by the truncation step in Line 20 of \cref{alg:master_alg_te} under the full-information HOB feedback: By ensuring a tiny but nonzero propensity score, we are able to bound the number of times the estimation error is beyond a constant level. This approach turns out to fail under the binary case because, due to the larger IPW variance in \cref{lem:bias-variance_weaker}, one would need to truncate more and thereby suffer a suboptimal regret. To work around this issue, we replace truncation by the exploration step in \cref{alg:joint_three_master}.

When we explore at time $t$, the bid is sampled uniformly such that the probability of winning and losing the auction is exactly $\frac{1}{2}$. Consequently, we are able to derive a more stable IPW estimator with a constant variance: with a slight abuse of notation,
\begin{equation}\label{eq:IPW_var_with_exploration}
(\widetilde{e}_t(b_t), \sigma_t(b_t)) = \begin{cases}
    (2(\indic[b\ge M_t] - \indic[b<M_t]), 2),\qquad \text{if $t$ is selected on Line 10 in \cref{alg:joint_three_master};}\\
    \text{as defined in \eqref{eq:truncated_estimator_te_weaker} and \eqref{eq:sigma_t}},\qquad \text{otherwise.}
\end{cases}
\end{equation}

\subsubsection{Regret Analysis}

Finally, we present the regret guarantee \cref{thm:main-lte-binary} for \cref{alg:joint_three_master} under the binary HOB feedback. The rest of this section is devoted to the proof of this result.

\begin{repeattheorem}[\cref{thm:main-lte-binary}]
Suppose there is some $T_0$ such that for every $t\ge T_0$, the bidder implements Abstraction \ref{ass:est_oracle_weaker_bern} with $\|\delta_t\|_\infty \le c$ for some constant $c$ depending on $L$.
Under Assumptions \ref{assump:linear}--\ref{ass:Gt}, there is an algorithm $\pi_{\mathrm{bin}}$ that achieves the expected regret
\begin{equation*}
\Reglte(\pi_{\mathrm{bin}}) = \widetilde{O}\parr*{T_0 + \sqrt{\Delta dT}  + \xi T}, 
\end{equation*}
where $\Delta = 1+\sup_{b^T} \sum_{t=1}^T\delta_t(b_t;b^{t-1})^2$. 
\end{repeattheorem}

WLOG we again assume \cref{ass:est_oracle_weaker_bern} and the high-probability event in \cref{lem:adapted_WLS} hold almost surely. Let $\hb^*_t = \argmin_{b\in\Bcal}\abs{b_t^* - b}$ denote the closest bid in the discretized space to the hindsight optimal.
\begin{lemma}[Adapted \cref{lem:master_confidence_bound_te}]\label{lem:adapted_master_confidence_bound_te}
In \cref{alg:joint_three_master}, for every $t\ge T_0$ and $s\in[S]$, it holds that $\br_t(b_t^*)-\br_t(b) \leq 17\cdot 2^{-s}$ for all $b\in B_s$.
\end{lemma}
Denote by $\Phi^{(s)}_1\subseteq [T]\backslash[T_0]$ the set of time indices belonging to stage $s\in [S]$ and $w_t^{(s)}(b_t) > 2^{-s}$ (i.e. selected by Line 13), $\Phi^{(s)}_0\subseteq [T]\backslash[T_0]$ the set of time indices belonging to stage $s\in [S]$ and for exploration (i.e. selected by Line 10), and $\Phi^{(S+1)}\subseteq [T]\backslash[T_0]$ the set of time indices where all widths are below $\frac{1}{\sqrt{T}}$. Then the regret can be decomposed as
\begin{align*}
\Reglte(\pi) &= \sum_{t\in[T]} \E \parq{r_t(b^*_t)} - \E\parq{r_t(b_t)}\\
&\stepa{\le} T_0 + 3L\sqrt{T} + \frac{|\Phi^{(S+1)}|}{\sqrt{T}} + \sum_{s\in[S]}|\Phi^{(s)}_0| + 17\sum_{s\in[S]}2^{-s}|\Phi^{(s)}_1|\\
&\stepb{\le} T_0 + O(\sqrt{T}) + \underbrace{\sum_{s\in[S]}|\Phi^{(s)}_0|}_{(\spadesuit)} + 17\underbrace{\sum_{s\in[S]}\sum_{t\in\Phi^{(s)}}w_t^{(s)}(b_t)}_{(\clubsuit)}
\end{align*}
where (a) follows from \cref{lem:adapted_master_confidence_bound_te}, and (b) uses $w_t^{(s)}(b_t) \ge 2^{-s}$ when $t\in\Phi^{(s)}_1$ in \cref{alg:joint_three_master}. Then the following potential lemma leads to $(\clubsuit) = \widetilde{O}(\sqrt{\Delta dT} + \xi T)$.
\begin{lemma}\label{lem:elliptical-potential-binary}
For any $s\in[S]$ in \cref{alg:joint_three_master}, we have
\[
\sum_{t\in\Phi^{(s)}_1}w^{(s)}_t(b_t) = O\parr*{\sqrt{\Delta dT}\log^2 T + d\Delta\log T + \sum_{t\in\Phi^{(s)}_1} \delta_t + \xi|\Phi^{(s)}_1|}.
\]
\end{lemma}
Recall that $\sum_{t=1}^T\delta_t \le \sqrt{\Delta T}$ by Cauchy-Schwartz inequality. To conclude the proof, it remains to bound the total number of exploration times $(\spadesuit)$ by the following lemma.
\begin{lemma}\label{lem:bound_exploration_binary}
In \cref{alg:joint_three_master}, it holds that
\[
\sum_{s\in[S]}|\Phi^{(s)}_0| = O\parr*{d\Delta\log^2 T + \xi T}.
\]
\end{lemma}

\subsection{Oracle Implementation for i.i.d. HOBs}\label{sec:implementation_iid_hob}

Moving forward, we focus on i.i.d. HOBs ($G_t\equiv G$) and present an implementation of \cref{ass:est_oracle_weaker_bern}. Despite the seemingly simple i.i.d. structure, this implementation is highly nontrivial for two reasons: (1) We need to derive bid-dependent error parameters $\bra{\delta_t}_t$ in \cref{ass:est_oracle_weaker_bern} that are \textit{self-normalizing} with respect to any bidding trajectory; otherwise the final quantity $\Delta$ may not lead to optimal regret bound or even bounded at all. (2) The output estimation $\widehat{G}_t$ needs to be a valid CDF, and in particular, monotone. This is nontrivial because, to derive bid-dependent error bounds based on the binary observations, the implementation will need to restrict its attention to \textit{local} regions in the bid space $[0,1]$, whereas monotonicity is a \textit{global} shape constraint. We will enforce monotonicity via a simple procedure. %that may be of independent interest.

For the ease of notation in this section, we fix the discretization size $h\coloneqq d^{\frac{1}{3}}T^{-\frac{1}{3}}$ and an interval partition $I_1,\dots, I_J$ of $[0,1]$ such that $|I_j|\in [h,2h]$ and $J=\lfloor h^{-1}\rfloor$.

\subsubsection{A Binning Estimator.}
% \YH{Binning estimator/piecewise averaging estimator}
Suppose we are given a subset of indices $\Phi_t\subseteq [t-1]$ at time $t$. Define
\begin{equation}\label{eq:offline_G_est}
\begin{split}
g_j &\coloneqq \frac{1}{\abs{\bra{b_\tau: b_\tau\in I_j}}} \sum_{\tau\in\Phi_t}\indic[b_\tau\in I_j]\indic[M_\tau\le b_\tau],\\
\widehat{G}^{(0)}_{t}(b) &\coloneqq \sum_{j=1}^J \indic[b\in I_j]g_j.
\end{split}
\end{equation}
This $\widehat{G}^{(0)}_t$ is piece-wise constant over the interval partition and is computationally cheap as it only involves the empirical averages. Its estimation performance is summarized in the following lemma, where for notational simplicity we write $\delta_t(b)$ in place of $\delta_t(b;b^{t-1})$. 

\begin{lemma}[Estimation Error of \eqref{eq:offline_G_est}]\label{lem:hob_est_error_bound}
Suppose $h\in(0,1)$ and the intervals $\bra{I_j}_{j\in[J]}$ partition $[0,1]$ and $|I_j|\le 2h$ for every $j$. Also suppose $(\indic[M_\tau\le b_\tau])_{\tau\in\Phi_t}$ are conditionally independent given $(\bx_\tau, b_\tau)_{\tau\in\Phi_t}$. With probability $1-T^{-2}$, for every $b\in[0,1]$
\begin{equation*}
\abs*{\widehat{G}^{(0)}_t(b) - G(b)} \le 
\sqrt{G(b)(1-G(b))}\delta_t(b) + \delta_t(b)^2 + 3Lh,
\end{equation*}
where $\delta_t(b) = c\sqrt{\frac{\log(2T/h)}{N_t(b)}}$ for constant $c=8$. Here $I(b)$ denotes the interval $I_j\ni b$ and $N_t(b) = |\bra{\tau\in \Phi_t: b_\tau\in I(b)}|$ denotes the number of neighbor observations in $\Phi_t$.
\end{lemma}

To satisfies the condition $\|\delta_t\|_\infty = O(1)$ in \cref{thm:main-lte-binary}, it suffices to apply $\widetilde{O}(1)$ number of explorations in each interval $I_j$. Nonetheless, this empirical estimator $\widehat{G}^{(0)}_t$ in \eqref{eq:offline_G_est} is not necessarily a valid CDF, as it may not be monotone. We will use some more explorations in the next section to enforce monotonicity on $\widehat{G}^{(0)}_t$.

\subsubsection{Enforcing Monotonicity.}\label{sec:hob_monotonicity}
% \YH{Shorten to half page}
It would be relatively easier to construct a monotone estimator $\widehat{G}_t$ from the empirical averages in \eqref{eq:offline_G_est} if the RHS of the error width in \cref{lem:hob_est_error_bound} were \textit{known}. In this case, one can explicitly find a monotone estimator that shares the same error width up to a constant; see Figure \ref{fig:cdf_monotone1} for a simple recursive procedure for this.

\begin{figure}[h!]
\centering

\begin{tikzpicture}[scale=6,>=stealth]
% ==== Horizontal ellipsis for first segment ====
\foreach \xx in {0.03,0.07,0.11}{
  \fill[gray] (\xx,0.26) circle (0.007);
}
\foreach \xx in {1.18,1.22,1.26}{
  \fill[gray] (\xx,0.63) circle (0.007);
}
  % ==== EDIT THESE INTERVALS (x_left/x_right/band_height) ====
  \foreach \xa/\xb/\band/\y/\cdf in {%
    0.14/0.34/0.08/0.27/0.23,
    0.34/0.54/0.14/0.36/0.29,
    0.54/0.74/0.22/0.38/0.29,
    0.74/0.94/0.14/0.5/0.43,
    0.94/1.14/0.1/0.6/0.55
  }{
    % y-level: midpoint (change if you want fixed levels)
    % \pgfmathsetmacro{\y}{0.5*(\xa+\xb)}
    \pgfmathsetmacro{\halfband}{0.5*\band}
    % clip to [0,1]
    \pgfmathsetmacro{\ymin}{max(0,\y-\halfband)}
    \pgfmathsetmacro{\ymax}{min(1,\y+\halfband)}

    % band + horizontal piece
    \fill[blue!12] (\xa,\ymin) rectangle (\xb,\ymax);
    \draw[blue,dashed]    (\xa,\ymin) rectangle (\xb,\ymax);
    \draw[very thick] (\xa,\y) -- (\xb,\y);
    \draw[very thick,red] (\xa,\cdf) -- (\xb,\cdf);
  }

  % ==== Axes ====
  \draw[->] (0,0) -- (1.4,0) node[below] {$x$};
  \draw[->] (0,0) -- (0,0.7) node[left] {$y$};
  \draw (0,0) -- (0,-0.02) node[below=2pt] {\small 0};
  \draw (1.32,0) -- (1.32,-0.02) node[below=2pt] {\small 1};
  \draw (0.54,0) -- (0.54,-0.02) node[below=2pt] {};
  \draw (0.74,0) -- (0.74,-0.02) node[below=2pt] {};
  \node[below] at (0.09,0) {\small $j=1$};
  % \draw (0,0.34) -- (-0.02,0.34) node[left=2pt] {\small $\frac{1}{2}$};
  \draw[->, red, thick] (0.2,-0.05) -- (0.4,-0.05);
  % \draw[->, red, thick] (0.72,-0.05) -- (0.92,-0.05);
  
  % \foreach \t in {0,1.35}{
  %   \draw (\t,0) -- (\t,-0.02) node[below=2pt] {\small \t};
  %   % \draw (0,\t) -- (-0.02,\t) node[left=2pt] {\small \t};
  % }
  % guides at internal breakpoints (optional)
  \foreach \x in {0.14,0.34,0.54,0.74,0.94,1.14}{\draw[dashed,gray!60] (\x,0) -- (\x,0.7);}

  % ==== Legend (top-right) ====
  % a small boxed legend inside the plotting area
  \begin{scope}
    % legend box
    \draw[fill=white,draw=black] (0.72,0.04) rectangle (1.4,0.25);
    % mean sample
    \draw[very thick] (0.76,0.2) -- (0.96,0.2);
    \node[anchor=west] at (0.98,0.2) {\small empirical mean};
    % cdf sample
    \draw[very thick,red] (0.76,0.14) -- (0.96,0.14);
    \node[anchor=west] at (0.98,0.14) {\small monotone value};
    % width sample
    \fill[blue!12] (0.76,0.06) rectangle (0.96,0.1);
    \draw[blue]    (0.76,0.06) rectangle (0.96,0.1);
    \node[anchor=west] at (0.98,0.08) {\small width};
  \end{scope}
\end{tikzpicture}

\caption{\raggedright \textbf{Enforcing monotonicity by width.} The procedure begins with the first (leftmost) interval and then iterates through the intervals $j=1,\dots,J$. For each interval, it sets the estimated value to be the maximum of the lower confidence bound and the value of the previous interval. Its validity is guaranteed when the true CDF $G$ indeed lies within the shaded regions.}
\label{fig:cdf_monotone1}
\end{figure}

The iterative procedure in Figure \ref{fig:cdf_monotone1} works as follows: starting from the leftmost interval $j=1$, the learner knows the CDF estimator $\widehat{G}_t^{(0)}(b)\equiv \widehat{G}_t^{(0)}(I_1)$, for $b\in I_1$ (since it is constant over each interval), and (hypothetically) its confidence width $\widehat{w}^{(0)}_{t,1}$. Then the learner sets the new monotone estimator to be $\widehat{G}_t(I_1) \coloneqq \widehat{G}_t^{(0)}(I_1) - \widehat{w}^{(0)}_{t,1}$. Note that the estimation error satisfies
\[
\abs*{\widehat{G}_t(I_1) - G(b)} \le \abs*{\widehat{G}^{(0)}_t(I_1) - G(b)} + \widehat{w}^{(0)}_{t,1} \le 2\widehat{w}^{(0)}_{t,1}
\]
for all $b\in I_1$. Then for the next interval $j=2$, set $\widehat{G}_t(I_2)\coloneqq \max\bra{\widehat{G}_t(I_1), \widehat{G}_t^{(0)}(I_2) - \widehat{w}^{(0)}_{t,2}}$, where the max guarantees monotonicity of the final estimator $\widehat{G}_t$. This iterative definition proceeds until $j=J$. The exact recursive procedure for enforcing monotonicity is slightly more involved to deal with the unknown confidence widths, and is deferred to Appendix \ref{sec:enforce_monotonicity_details}. 

\begin{lemma}[Monotone CDF Adjustment]\label{lem:monotone_cdf_adjustment_main}
Suppose the conditions in \cref{lem:hob_est_error_bound} hold. Then there is a monotone $\widehat{G}_t$ (output by \cref{alg:enforce_monotone}) that satisfies: For every $j\in[J]$ and $b\in I_j$, if $\delta_t(b) = O(\sqrt{h})$ and $G(b)\in[Lh,1-Lh]$, then
\begin{equation*}
\abs*{\widehat{G}_t(b) - G(b)} \le 
c_0\sqrt{G(b)(1-G(b))}\delta_t(b) + \delta_t(b)^2 + 3Lh,
\end{equation*}
for some constant $c_0$ that only depends on $L$.
\end{lemma}

\subsubsection{Initialization.}\label{sec:three_joint_init}

This section introduces an initialization to satisfy the two conditions in \cref{lem:monotone_cdf_adjustment_main} (to compensate that we do not know the widths of $\widehat{G}_t^{(0)}$). Recall that $h=d^{\frac{1}{3}}T^{-\frac{1}{3}}$ and we have a partition $\bra{I_j}_{j\in[J]}$ of $[0,1]$ with $|I_j|\in[h,2h]$.

\begin{algorithm}[h!]\caption{Initialization}
\label{alg:joint_three_init}
\textbf{Input:} Time horizon $T$, interval partition $\bra{I_j}_{j\in[J]}$, partition width $h$, Lipschitz constant $L$.

\textbf{Initialize:} set $n_0=16\log(2T)L^{-1}h^{-1}$.

\For{$j=1$ \KwTo $J$}{
\For{$t=(j-1)T_0$ \KwTo $jn_0$}{
Bid any $b_t\in I_j$.

Observe HOB feedback $\indic[M_t\le b_t]$.
}

Compute empirical average $g_j \gets \frac{1}{n_0}\sum_{t=(j-1)n_0}^{jn_0}\indic[M_t\le b_t]$.
}

Set CDF estimator $\widehat{G}_0(b) = \sum_{j=1}^J\indic[b\in I_j]g_j$.

Set $b_{\min} \gets \inf\bra{b\in[0,1]: \widehat{G}_0(b) \ge 9Lh}$ and $b_{\max} \gets \inf\bra{b\in[0,1]: \widehat{G}_0(b) \ge 1- 9Lh}$. \label{line:truncate_bid_space}

Set interval $\Bcal_0 \gets [b_{\min}, b_{\max}]\subseteq [0,1]$.

Output samples $\bra{b_t,\indic[M_t\le b_t], \indic[b_t\geq M_t]v_{t,1},\indic[b_t<M_t]v_{t,0}}_{t\in[Jn_0]}$ and bid space $\Bcal_0$.
\end{algorithm}

\cref{alg:joint_three_init} takes the first $\widetilde{O}(Jh^{-1}) = \widetilde{O}(d^{-\frac{2}{3}}T^{\frac{2}{3}})$ rounds as the initialization stage. During this stage, it samples $\widetilde{O}(h^{-1})=\widetilde{O}(d^{-\frac{1}{3}}T^{\frac{1}{3}})$ bids from each interval $I_j$. Then at the end of the initialization stage, we have $\delta_t(b) \le \sqrt{h}$ by definition in \cref{lem:hob_est_error_bound}, which satisfies the first condition in \cref{lem:monotone_cdf_adjustment_main}.

Next, we consider the other condition that $G(b)\in[Lh,1-Lh]$. A key observation is that, when $G(b)(1-G(b)) = O(h)$, the empirical average estimator $\widehat{G}_0$ defined in Line 8 approximates $G$ well up to $O(h)$ error by \cref{lem:hob_est_error_bound}. Therefore, \cref{alg:joint_three_init} is able to identify such regions by looking at $\widehat{G}_0$ in Line 8-11. Specifically, the returned interval $\Bcal_0$ satisfies:

\begin{lemma}[Truncated bid space]\label{lem:truncated_bid_space}
Consider the bid space $\Bcal_0= [b_{\min}, b_{\max}]$ in \cref{alg:joint_three_init}. With probability at least $1-T^{-2}$, it holds that
\begin{enumerate}
    \item $G(b)\in[Lh, 1-Lh]$ for every $b\in\Bcal_0$;

    \item $G(b)\in[0,25Lh]\cup [1-25Lh, 1]$ for $b\notin \Bcal_0$.
\end{enumerate}
\end{lemma}
By \cref{lem:truncated_bid_space}, we are guaranteed that $G(b)$ is bounded away from $0$ and $1$ over $\Bcal_0$, thereby satisfying the other condition of \cref{lem:monotone_cdf_adjustment_main}. This result also allows us to focus on the returned interval $\Bcal_0$ in HOB estimation throughout the remaining time horizon, since the change of the true CDF $G$ is too small outside of $\Bcal_0$ by the second claim.

\subsubsection{Overall Oracle Implementation.}

We are now in the position of piecing everything together for HOB estimation. For every time $t>Jn_0$ after the initialization stage in \cref{alg:joint_three_init}, we will implement \cref{ass:est_oracle_weaker_bern} by \cref{alg:hob_est}. At a high level, it sets a trivial estimation value for the bids outside of $\Bcal_0$ and the empirical average over the interval otherwise. Then it applies \cref{alg:enforce_monotone} to obtain the final monotone CDF estimator $\widehat{G}_t$ over the entire bid space $[0,1]$. The performance of the final estimator is summarized in \cref{lem:hob_est_error_bound_monotone}.

\begin{algorithm}[h!]\caption{HOB Estimation Oracle}
\label{alg:hob_est}
\textbf{Input:} Interval partition $\bra{I_j}_{j\in[J]}$, time indices $\Phi\supseteq [Jn_0]$, initialized interval $\Bcal_0=[b_{\min}, b_{\max}]$, Lipschitz constant $L$.

\For{interval index $j=1$ \KwTo $J$}{
\uIf{$I_j\not\subseteq \Bcal_0$}{
Set $\widehat{g}_j \gets \begin{cases}
\frac{25}{2}Ld^{\frac{1}{3}}T^{-\frac{1}{3}}\qquad &\text{if $I_j\cap [0,b_{\min})\neq\varnothing$;}\\
1-\frac{25}{2}Ld^{\frac{1}{3}}T^{-\frac{1}{3}}\qquad &\text{if $I_j\cap (b_{\max},1]\neq\varnothing$.}
\end{cases}$

Set $u_j \gets \frac{25}{2}Ld^{\frac{1}{3}}T^{-\frac{1}{3}}$
}
\uElse{
Compute empirical average $\widehat{g}_j \gets \frac{1}{\abs{\tau\in\Phi: b_\tau\in I_j}}\sum_{\tau\in \Phi}\indic[b_\tau\in I_j]\indic[M_\tau\le b_\tau]$.

Truncate by setting $\widehat{g}_j\gets \min\bra{\max\bra{Ld^{\frac{1}{3}}T^{-\frac{1}{3}}, \widehat{g}_j}, 1-Ld^{\frac{1}{3}}T^{-\frac{1}{3}}}$.\label{line:hob_est_truncation}

Set $u_j \gets 12\delta_j\sqrt{\widehat{g}_j(1-\widehat{g}_j)} + \delta_j^2 + 2Ld^{\frac{1}{3}}T^{-\frac{1}{3}}$ where $\delta_j = \delta_t(b)$ for any $b\in I_j$ as defined in \cref{lem:hob_est_error_bound}.
}
}

Invoke \cref{alg:enforce_monotone} with $\bra{I_j}_{j\in[J]}$, $\bra{\widehat{g}_j}_{j\in[J]}$, and $\bra{u_j}_{j\in[J]}$ to compute $\widehat{G}_t$.
\end{algorithm}

\begin{lemma}[Monotone CDF Estimator]\label{lem:hob_est_error_bound_monotone}
Let $h=d^{\frac{1}{3}}T^{-\frac{1}{3}}$ and $\bra{I_j}_{j\in[J]}$ be a partition of $[0,1]$ with $|I_j|\in[h,2h]$. Suppose $\Bcal_0$ is given by \cref{alg:joint_three_init}. Let $\widehat{G}_t$ be the CDF estimator given by \cref{alg:hob_est} at time $t$. Then with probability at least $1-2T^{-2}$,
\begin{equation*}
\abs*{G(b) - \widehat{G}_t(b)} \le 
c\delta_t(b)\sqrt{G(b)(1-G(b))} + c\delta_t(b)^2 + cLh, \qquad \text{for every $b\in [0,1]$}
\end{equation*}
where $c>0$ is an absolute constant and $\delta_t(b)$ is defined as in \cref{lem:hob_est_error_bound}.
\end{lemma}

\subsubsection{End-to-end Regret Bound}\label{sec:binary_iid_regret_analysis}

To conclude this section on i.i.d. HOB, we prove the following explicit end-to-end regret bound by verifying \cref{thm:main-lte-binary} with our implementation of \cref{ass:est_oracle_weaker_bern}. For the reader's convenience, the bound is restated below:

\begin{repeattheorem}[\cref{thm:main-lte-binary-iid}]
Consider the binary HOB feedback and i.i.d. HOB. Under Assumptions \ref{assump:linear}--\ref{ass:Gt}, there is an algorithm $\pi_{\mathrm{bin}}$ that achieves the expected regret
\begin{equation*}
\Reglte(\pi_{\mathrm{bin}}) = \widetilde{O}\parr*{d^{\frac{1}{3}}T^{\frac{2}{3}}}. 
\end{equation*}
\end{repeattheorem}

Recall that in the implementation throughout \cref{sec:implementation_iid_hob}, we discretize the bid space $[0,1]$ into $J\le \lfloor d^{-\frac{1}{3}}T^{\frac{1}{3}}\rfloor$ intervals with $h=d^{\frac{1}{3}}T^{-\frac{1}{3}}$ and each interval $|I_j|\in[h,2h]$. To apply \cref{thm:main-lte-binary}, first note that the initialization period takes $T_0 = Jn_0 = O(T^{\frac{2}{3}}\log T)$. The bias parameter by \cref{lem:hob_est_error_bound_monotone} is $\xi=O(h) = O(d^{\frac{1}{3}}T^{-\frac{1}{3}})$. To apply \cref{thm:main-lte-binary}, the key lies in upper bounding the quantity $\Delta$ with respect to any bidding trajectory.

Recall that the master algorithm runs in $S=\log\lceil \sqrt{T}\rceil$ stages. Let $\Phi^{(s)}\subseteq [T]\backslash[T_0]$ denote the set of time indices belonging to stage $s\in [S]$. We first have
\begin{equation*}
\Delta = 1+ \sum_{t=T_0+1}^T\delta_t(b_t;b^{t-1})^2 = 1+\sum_{s=1}^S\sum_{t\in\Phi^{(s)}}\delta_t(b_t;b^{t-1})^2.
\end{equation*}
For each stage $s$, 
\begin{align*}
\sum_{t\in\Phi^{(s)}}\delta_t(b_t;b^{t-1})^2 &= \sum_{j=1}^J\sum_{t\in\Phi^{(s)}}\indic[b_t\in I_j]\delta_t(b_t;b^{t-1})^2 
\stepa{\le} 64\log(2T)\sum_{j=1}^J\sum_{t\in\Phi^{(s)}_j}\frac{1}{T_0 + \abs{\bra{\tau\in\Phi^{(s)}_j:\tau<t}}}\\
&\le 64\log(2T)\sum_{j=1}^J\log|\Phi^{(s)}_j|
\le 64J\log^2(2T)
\le 64\log^2(2T)d^{-\frac{1}{3}}T^{\frac{1}{3}}
\end{align*}
where $\Phi^{(s)}_j = \bra{t\in\Phi^{(s)}: b_t\in I_j}$, and (a) follows from the definition of $\delta_t(b;b^{t-1})$ in \cref{lem:hob_est_error_bound} and that the initial $T_0$ samples are used in later HOB estimation in \cref{alg:hob_est}. The last line simply plugs in $J\le d^{-\frac{1}{3}}T^{\frac{1}{3}}$. Consequently,
\begin{equation*}\label{eq:reg_decomp_iid_1}
\Delta \le 1+ S\cdot 64\log^2(2T)d^{-\frac{1}{3}}T^{\frac{1}{3}} = O(d^{-\frac{1}{3}}T^{\frac{1}{3}}\log^3 T).
\end{equation*}
Plugging \eqref{eq:reg_decomp_iid_1} back in \cref{thm:main-lte-binary} concludes the proof of \cref{thm:main-lte-binary-iid}.

\section{Numerical Experiments}\label{sec:numericals}

To validate our theoretical guarantees and illustrate why causal inference is essential for bidding, we present numerical evaluations of our algorithms in this section. The experiments focus on synthetic data because the baseline outcomes $(v_{t,0})_t$ for the bidders are typically overlooked and thus absent from existing real-time bidding datasets, making offline assessments infeasible. Since the master-base algorithm scheme is primarily a theoretical device for handling dependence, we omit the master-level hierarchical elimination in our implementations and focus on the base algorithms, which align with industry practices.

To highlight the importance of capturing the baseline outcomes and estimating treatment effects properly, we will examine the numerical performance of four algorithms: (1) \texttt{LinUCB} that only updates the reward model when the bidder wins and observes the winning outcome $v_{t,1}$ \citep{li2010contextual}; (2) \texttt{Base-NoVR} that implements Algorithm \ref{alg:base_alg_te} but removes all variance reduction components (replacing the weights in linear regression by $1$ and sticking to a single UCB in \eqref{eq:UCB0}); (3) \texttt{Base-1UCB} that implements the weighted ridge regression in Algorithm \ref{alg:base_alg_te} but sticks to a single UCB in \eqref{eq:UCB0}; and (4) \texttt{Ours} that implements Algorithm \ref{alg:base_alg_te} in full.

\begin{figure}[t]
  \centering
  \begin{subfigure}[t]{0.47\linewidth}
    \centering
    \includegraphics[width=\linewidth]{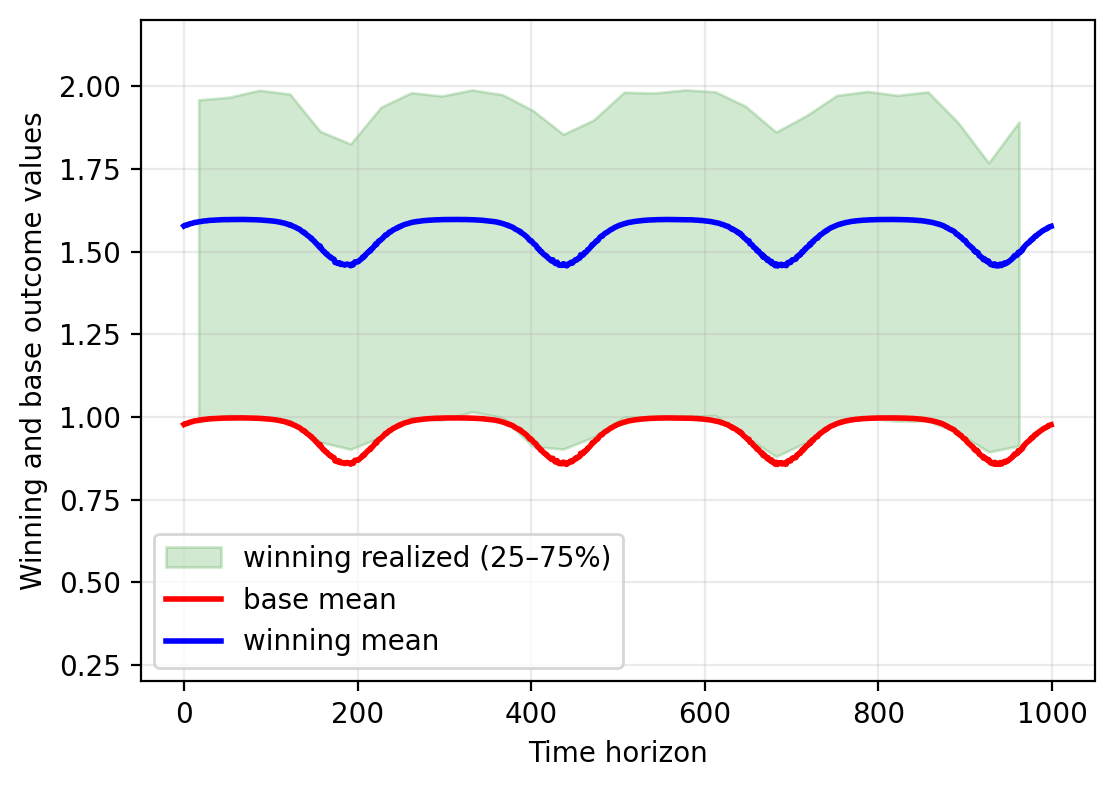}
    % \vspace{-2.5em}
    \caption{\small Pattern of periodic outcomes}
    \label{fig:periodic_outcomes}
  \end{subfigure}\hfill
  \begin{subfigure}[t]{0.5\linewidth}
    \centering
    \includegraphics[width=\linewidth]{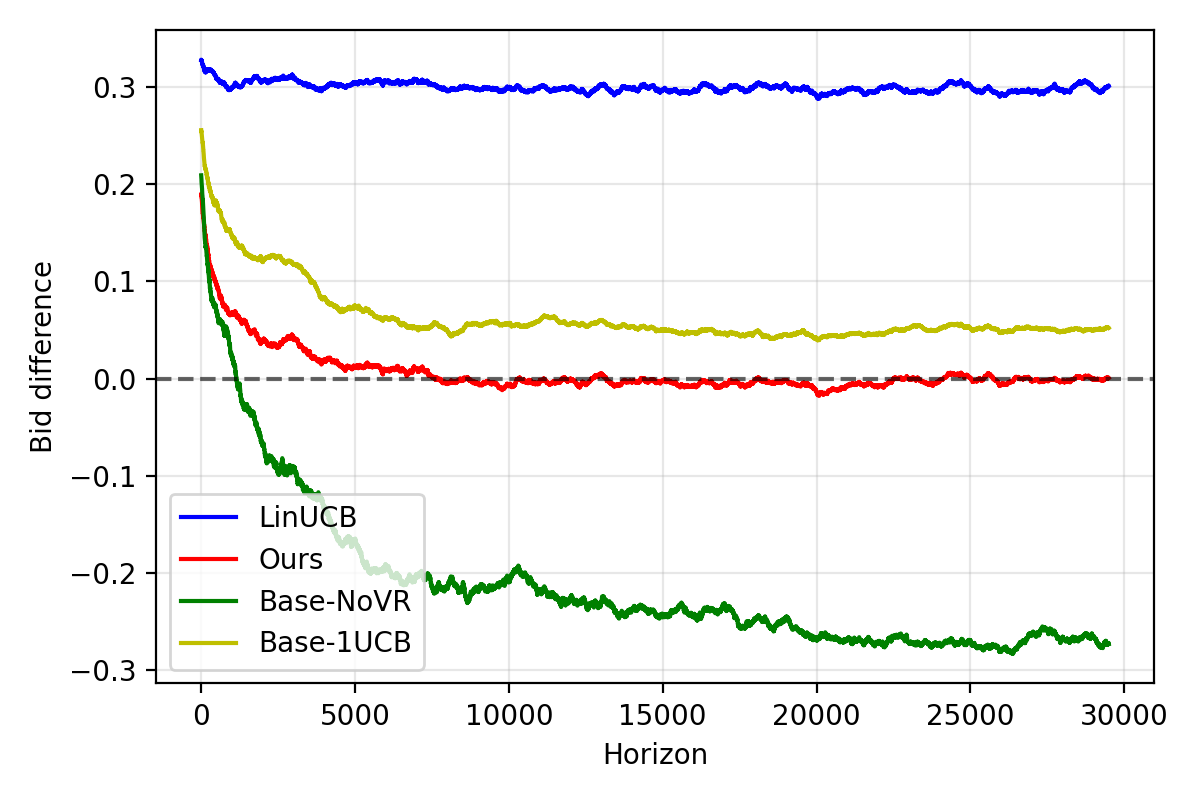}
    % \vspace{-2.5em}
    \caption{\small Pattern of bidding trajectories}
    \label{fig:bid_traj}
  \end{subfigure}
  \vspace{0.5em}
  \caption{\raggedright Plot (a) presents the periodic pattern of the means of the baseline and the winning outcomes, and the latter's variance, respectively. Plot (b) presents the trajectories of the bid difference $b_t - b_t^*$ of the four algorithms compared to the optimal bid, smoothed over a sliding window of width $500$ for visualization.}
  \label{fig:twoplots}
\end{figure}

For the purpose of demonstration and to showcase the root of failure of existing methods, we consider a simple setup. The underlying parameter $\btheta_* = [\btheta_*(1), \btheta_*(2:d)]$ and $\bx_t = [1, \bx_t(2:d)]$ are constructed by first drawing $\btheta_*(2:d)$ and $\bx_t(2:d)$ from the $(d-1)$-dimensional isotropic Gaussian $\mathcal{N}_{d-1}(\boldsymbol{0}, \bI)$ and then normalizing to guarantee a unit norm, with $\btheta_*(1)=0.6$ and $d=11$. The baseline outcome $v_{t,0} = \sigma(2+\sin(bt) + \cos(\boldsymbol\beta^\top \bx_t))$ is a deterministic periodic (in time) quantity, where $\sigma$ is the sigmoid function, $b = \frac{\pi}{125}$ indicates a length-$250$ period, and $\boldsymbol{\beta}\sim\mathcal{N}_{d}(\boldsymbol{0}, \bI)$ and normalized; its mean $\E_{\bx_t}[v_{t,0}]$ is illustrated in Figure \ref{fig:periodic_outcomes}. The periodic pattern is set to reflect the potentially periodic market behaviors. The winning outcome is $v_{t,1} = v_{t,0} + \varepsilon_t$ where $\varepsilon_t\sim\mathrm{Bern}(\btheta_*^\top \bx_t)$ is a Bernoulli variable. Finally, the HOB is drawn from an i.i.d. Beta distribution $M_t\sim\mathrm{Beta}(5,7)$ and is always revealed to the bidders.

Figure \ref{fig:bid_traj} compares the bidding trajectories of one run across the four algorithm variants. In particular, it plots the smoothed bid difference $b_t-b_t^*$ over time over a sliding window, for the sake of visualization. It is clear that existing algorithms like \texttt{LinUCB} can consistently overbid when a nonzero baseline outcome exists and yet is overlooked, because they mistake the winning outcome $v_{t,1}$ for the marginal gain $v_{t,1}-v_{t,0}$ of an ad opportunity. In comparison, our algorithm is able to identify the treatment effect. In addition, our variance reduction ingredients make our algorithm quickly converge to bidding near the optimal bid $b_t^*$.

\begin{figure}[t]
  \centering
  \begin{subfigure}[t]{0.35\linewidth}
    \centering
    \includegraphics[width=\linewidth]{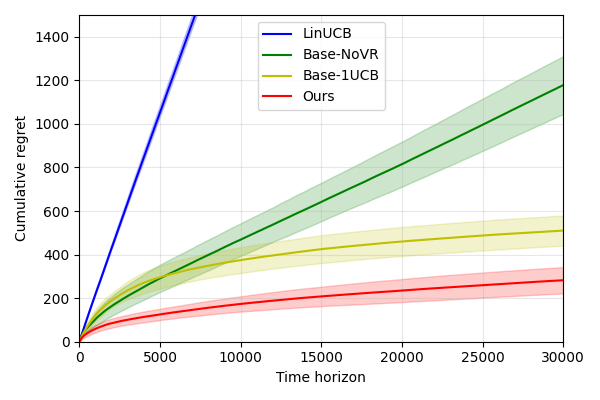}
    % \vspace{-2.5em}
    \caption{\small Pseudo regret over time}
    \label{fig:regrets}
  \end{subfigure}\hfill
  \begin{subfigure}[t]{0.64\linewidth}
    \centering
    \includegraphics[width=\linewidth]{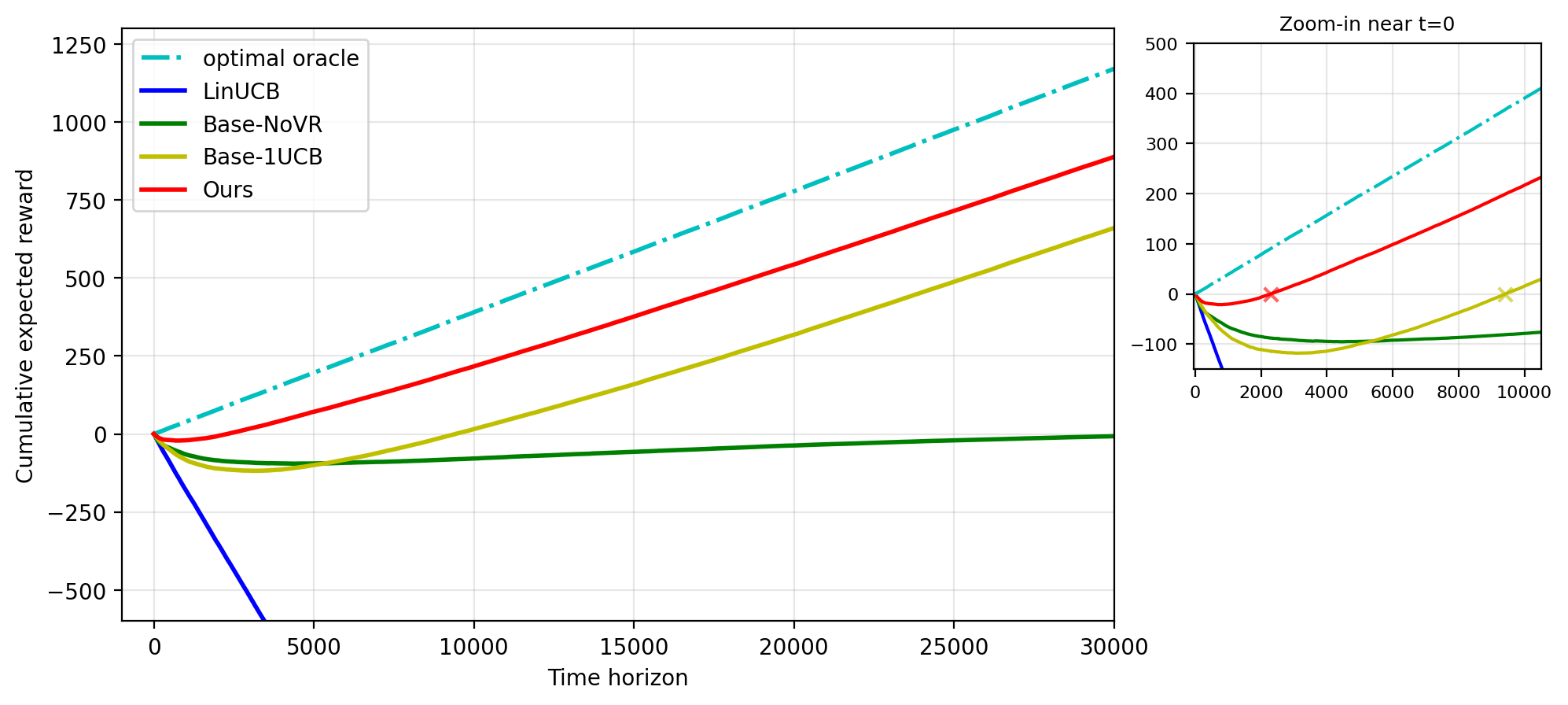}
    % \vspace{-2.5em}
    \caption{\small Reward over time}
    \label{fig:reward}
  \end{subfigure}
  \vspace{0.5em}
  \caption{\raggedright Plot (a) presents the pseudo regret of each of the four algorithms over a horizon $T=30,000$, averaged over five independent runs. The shaded region stands for one standard error. Plot (b) presents the corresponding cumulative reward and that achieved by the optimal oracle $(b_t^*)_t$, averaged over five runs. It zooms in around the initial time $t=0$ and marks when the reward becomes positive (i.e. breaks even).}
  \label{fig:performance_plots}
\end{figure}

To evaluate the numerical performances of these algorithms, Figure \ref{fig:performance_plots} compares the regrets and the cumulative rewards over a time horizon $T=30,000$. The results are averaged across five independent runs for the sake of robustness. \texttt{LinUCB} suffers a linear regret due to consistent overbidding. On the other hand, while every other algorithm achieves a sublinear convergence, \texttt{Ours} demonstrates a superior learning performance due to successful variance reduction with the weighted least squares (WLS) and the ``better of two UCB'' idea. The zoom-in in Figure \ref{fig:reward} indicates that the break-even points for the three algorithms are approximately $t=2,000, 9,000,$ and $30,000$, respectively.

\section{Conclusion and Future Directions}\label{sec:conclusion}
In this paper, we formulate the problem of joint value estimation and bidding in repeated FPAs into a special instance of causal inference, and establish regret bounds for the challenging scenario without overlap conditions or smoothness assumptions on the potential outcomes. Under both full-information and binary HOB feedback, our results establish near-optimal regret rates. Our work can be generalized in various directions. First, one can study explicit models for $M_t$ other than the i.i.d. and linear models in \cref{sec:main_results_full}, such as parametric families, and models beyond i.i.d. under the binary HOB feedback. Second, it would be interesting to generalize this framework to the semi-transparent model in \citep{cesa2024role}, where the bidder only observes the HOB when she loses. Another direction is to consider the delayed feedback, since the auction outcome may take time to manifest. For example, the bidder's win-loss of an ad opportunity can change the number of customers visiting her store per week, which is only revealed to the bidder after an entire week.

\section*{Acknowledgements}
This work is generously supported by the NSF award 2106508.

% Acknowledgments---Will not appear in anonymized version
% \acks{We thank a bunch of people and funding agency.}

\clearpage 

\bibliographystyle{alpha}
\bibliography{Preprint.bib}
\clearpage 
\appendix

% \crefalias{section}{appendix} % uncomment if you are using cleveref

The appendices are organized as follows. In \cref{app:adv_model}, we present a near-optimal algorithm under a static regret where no causal inference is involved. Then \cref{app:proofs} lists the omitted proofs for the results in this work. The complementary $\Omega(\sqrt{dT})$ lower bound (i.e. \cref{thm:lower-bound}) is provided in \cref{app:lin_lower_bound}. Finally, auxiliary lemmata and relevant results on FPAs are shown in \cref{app:aux_lem}.

\section{Bypassing Estimations under Static Regret}\label{app:adv_model}

To contrast the difficulty of estimations under the dynamic regret defined in \eqref{eq:reg_def_lte}, in this section, we consider a weaker benchmark and compete against the best \textit{fixed} bid under the \textit{full-information} HOB feedback. This benchmark and this notion of regret do not capture the dynamic nature of contextual information. Consequently, it is possible to bypass the need for estimating the treatment effect $\E[v_{t,1}-v_{t,0}]$ and estimating the HOB CDF $G_t$. Before we proceed, note that even the best fixed bid can incur linear regret when competing against the context-dependent oracle in \eqref{eq:reg_def_lte}.

To start, we formally define the static regret as:
\begin{equation*}\label{eq:def_regret_adv}
\begin{split}
\Reglte_{\text{static}}(\pi) &= \sup_{\{(v_{t,1},v_{t,0},G_t)\}_{t=1}^T }\sup_{b_*\in[0,1]}\sum_{t=1}^T \left(\E[r_t(b_*)] - \E[r_t(b_t)]\right) \\
&= \sup_{\{(v_{t,1},v_{t,0},G_t)\}_{t=1}^T }\sup_{b_*\in[0,1]}\sum_{t=1}^T \left(G_t(b_*)(v_{1,t}-v_{0,t}-b_*) - \E[G_t(b_t)(v_{1,t}-v_{0,t}-b_t)] \right).
\end{split}
\end{equation*}
As a first step, we consider a finite bid set $\Bcal\subseteq [0,1]$ that comes from some sort of discretization of the bid space. We construct EXP3-FPA-TE (Algorithm~\ref{alg:EXP3_FPA}), an extension of the EXP3-FPA algorithm in \cite{cesa2024role}, that achieves $\widetildeO(\sqrt{T})$ expected regret when competing against the best fixed bid within this finite subset $\Bcal$. Then EXP3-FPA-TE can be coupled with a simple discretization to achieve near-optimal regret on the continuous bid space $[0,1]$. When there is indeed no context ($\bx_t=\varnothing$ for all $t$) and the benchmark is static, a $\Omega(\sqrt{T})$ lower bound is implied from the case with no treatment effect, i.e. $v_{t,0}\equiv 0$ for all $t$ \citep{cesa2024role}.

\begin{algorithm}[h!]\caption{EXP3-FPA-TE (EXP3 for first-price auctions with treatment effects)}
\label{alg:EXP3_FPA}
\textbf{Input:} time horizon $T$, finite bid set $\Bcal\subseteq [0,1]$, learning rate $\eta\in\parr{0,\frac{1}{2}}$.

\textbf{Initialize:} set exponential weight $w_{t,b}=1$ for each $b\in\Bcal$.

\For{$t=1$ \KwTo $T$}
{   
Set probability $p_{t}(b) = (1-2\eta)\frac{w_{t,b}}{\sum_{b'\in\Bcal}w_{t,b'}} + \eta\indic[b=b_{\min}] + \eta\indic[b=b_{\max}]$.

Draw bid $b_t\sim p_{t}$.

Make bid $b_t$, observe $M_t$, and build payoff estimator for every $b\in\Bcal$:
\[
\widehat{r}_t(b) = \frac{\indic[b_t\geq M_t]}{\sum_{b'\geq M_t}p_t(b')}\indic[b\geq M_t](v_{1,t} - b) + \frac{\indic[b_t< M_t]}{\sum_{b'< M_t}p_t(b')}\indic[b< M_t]v_{t,0}.
\]

For every $b\in\Bcal$, update $w_{t+1,b} = w_{t,b}\exp(\eta \hr_t(b))$.
}

\end{algorithm}

EXP3-FPA-TE maintains an exponential weight for each bid in the finite subset. At each time, it draws a bid based on these weights and an $\eta$-exploration on the largest and the smallest bid. Then the algorithm builds an unbiased estimator for the payoff for every candidate bid $b\in\Bcal$ and updates the weights accordingly.

\begin{theorem}\label{thm:exp3_fpa_regret_upper_bound}
When $\log|\Bcal|< (e-1)T$, EXP3-FPA-TE with $\eta = \sqrt{\frac{\log|\Bcal|}{(4e-2)T}}\in\parr*{0,\frac{1}{2}}$ achieves expected regret
\[
\E\parq*{\max_{b_*\in\Bcal}\sum_{t=1}^T r_t(b_*) - \sum_{t=1}^T\sum_{b\in\Bcal}p_t(b) r_t(b)} \leq \sqrt{(4e-2)\log|\Bcal|T}.
\]
\end{theorem}
\begin{proof}{Proof.}
Note that $\eta\hr_t(b)\leq 1$ for every $t\in[T]$ and $b\in\Bcal$, thanks to the $\eta$ amount of exploration on $b_{\min}$ and $b_{\max}$. Let $W_t = \sum_{b\in\Bcal}w_{t,b}$. Following the standard analysis for EXP3, for each $t\in[T]$,
\begin{align*}
\frac{W_{t+1}}{W_t} &= \sum_{b\in\Bcal}\frac{w_{t,b}}{W_t}\exp(\eta\hr_t(b))\\
&\leq \sum_{b\in\Bcal}\frac{w_{t,b}}{W_t}\parr*{1 + \eta\hr_t(b) + (e-2)\eta^2\hr_t(b)^2}\\
&= 1 + \eta\sum_{b\in\Bcal}\frac{w_{t,b}}{W_t}\parr*{\hr_t(b) + (e-2)\eta\hr_t(b)^2}\\
&\leq 1 + \frac{\eta}{1-2\eta}\sum_{b\in\Bcal}p_t(b)\parr*{\hr_t(b) + (e-2)\eta\hr_t(b)^2}
\end{align*}
Summing over the time horizon, we have
\begin{align*}
\log W_{T+1} - \log W_1 = \sum_{t=1}^T\log \frac{W_{t+1}}{W_t} \leq \frac{\eta}{1-2\eta}\sum_{t=1}^T\sum_{b\in\Bcal}p_t(b)\parr*{\hr_t(b) + (e-2)\eta\hr_t(b)^2}.
\end{align*}
Fix any $b_*\in\Bcal$. Since $W_1 = |\Bcal|$ and $W_{T+1} = \sum_{b\in\Bcal}\exp\parr*{\eta\sum_{t=1}^T\hr_t(b)}\geq \exp\parr*{\eta\sum_{t=1}^T\hr_t(b_*)}$, 
\begin{equation}\label{eq:exp3_ineq}
\sum_{t=1}^T\hr_t(b_*) - \frac{\log|\Bcal|}{\eta} \leq \frac{1}{1-2\eta}\sum_{t=1}^T\sum_{b\in\Bcal}p_t(b)\parr*{\hr_t(b) + (e-2)\eta\hr_t(b)^2}.
\end{equation}
Note that the quadratic payoff term can be bounded as follows.
\begin{align*}
\frac{1}{2}\hr_t(b)^2 &\leq \frac{\indic[b_t\geq M_t]}{\parr*{\sum_{b'\geq M_t}p_t(b')}^2}\indic[b\geq M_t](v_{1,t} - b)^2 + \frac{\indic[b_t< M_t]}{\parr*{\sum_{b'< M_t}p_t(b')}^2}\indic[b< M_t]v_{t,0}^2\\
&\leq \frac{\indic[b_t\geq M_t]}{\parr*{\sum_{b'\geq M_t}p_t(b')}^2}\indic[b\geq M_t] + \frac{\indic[b_t< M_t]}{\parr*{\sum_{b'< M_t}p_t(b')}^2}\indic[b< M_t].
\end{align*}
Recall $\E_t[\cdot] = \E[\cdot|\mathcal{F}_t]$ w.r.t. the filtration at time $t$. For every $t\in[T]$ and $b\in\Bcal$, it holds that $\E_t[\hr_t(b)] = r_t(b)$ and 
\begin{align*}
\E_t\parq*{\sum_{b\in\Bcal}p_t(b)\hr_t(b)^2} &\leq \E_t\parq*{2\sum_{b\in\Bcal}p_t(b)\frac{\indic[b_t\geq M_t]\indic[b\geq M_t]}{\parr*{\sum_{b'\geq M_t}p_t(b')}^2} + p_t(b)\frac{\indic[b_t< M_t]\indic[b< M_t]}{\parr*{\sum_{b'< M_t}p_t(b')}^2}}\\
&= \E_t\parq*{2\sum_{b\in\Bcal}p_t(b)\frac{\indic[b\geq M_t]}{\sum_{b'\geq M_t}p_t(b')} + p_t(b)\frac{\indic[b< M_t]}{\sum_{b'< M_t}p_t(b')}}\\
&= 4.
\end{align*}
Now taking expectation of \eqref{eq:exp3_ineq} and rearranging give us, for any $b_*\in\Bcal$,
\begin{align*}
\E\parq*{\sum_{t=1}^T r_t(b_*) - \sum_{t=1}^T\sum_{b\in\Bcal}p_t(b) r_t(b)} &\leq \frac{\log|\Bcal|}{\eta} + (4e-2)\eta T.
\end{align*}
\end{proof}

To handle the continuous bid space $[0,1]$, we consider a simple discretization $\Bcal = \bra*{\frac{k}{T}:k\in[T]}$. Thanks to the Lipschitzness of the CDF $G_t$ in Assumption~\ref{ass:Gt}, we are able to approximate the optimal fixed bid in $[0,1]$ with the optimal bid in the discretization $\Bcal$, which leads to the following result.
\begin{corollary}
Let $\Bcal = \bra*{\frac{k}{T}:k\in[T]}\subseteq[0,1]$. Let $\pi$ be the resulting bidding strategy of \cref{alg:EXP3_FPA} running on this discretization with $\eta = \sqrt{\frac{\log(T)}{(4e-2)T}}$. Then under Assumptions \ref{ass:noconfounding}--\ref{ass:Gt}, the policy $\pi$ achieves expected regret
\[
\Reglte_{\text{static}}(\pi) =O\parr*{\sqrt{T\log T}}.
\]
\end{corollary}
\begin{proof}{Proof.}
Let $b_* = \argmax_{b\in[0,1]}\sum_{t=1}^T\E[r_t(b)]$ be the optimal fixed bid in the continuous space. By Assumption~\ref{ass:Gt}, for any bid $b\in[0,1]$ and any $\varepsilon\in(0,1)$, we have
\[
\abs*{\E[r_t(b+\varepsilon)] - \E[r_t(b)]} = \abs{G_t(b+\varepsilon)(v-b-\varepsilon) - G_t(b)(v-b)} \leq (2L+1)\varepsilon
\]
where $v=v_{t,1}-v_{t,0}\in[-1,1]$. Then by Theorem~\ref{thm:exp3_fpa_regret_upper_bound}, 
\begin{align*}
\Reglte_{\text{static}}(\pi) &= \E\parq*{\sum_{t=1}^Tr_t(b_*) - \sum_{t=1}^T\sum_{b\in\Bcal}p_t(b)r_t(b)}\\
&\leq 2L+1 + \E\parq*{\max_{b_*'\in\Bcal}\sum_{t=1}^T r_t(b') - \sum_{t=1}^T\sum_{b\in\Bcal}p_t(b) r_t(b)}\\
&\leq 2L+1+\sqrt{(4e-2)T\log T}.
\end{align*}
\end{proof}

\section{Proofs of Main Lemmas}\label{app:proofs}

\subsection{Proof of \cref{lem:bias-variance} and \ref{lem:bias-variance_weaker}}
We will present the proof of \cref{lem:bias-variance} below.

\begin{repeatlemma}[\cref{lem:bias-variance}.]\label{lem:bias-variance_app}
Suppose \cref{ass:est_oracle_bern} holds, i.e. $|\widehat{G}_t(b) - G_t(b)|\le \delta_t\sqrt{G_t(b)(1-G_t(b))} + \delta_t^2$. Then
\begin{align*}
\abs*{\E[\widetilde{e}_t(b)] - \btheta_*^\top \bx_t} &\le 20\delta_t\sigma_t(b), \\
\Var(\widetilde{e}_t(b)) &\le 16\sigma_t(b)^2, 
\end{align*}
where $\widetilde{e}_t(b)$ is defined in \eqref{eq:truncated_estimator_te} and the quantity $\sigma_t(b)$ is defined as
\begin{equation}
    \sigma_t(b) = \frac{1}{\sqrt{ \widehat{G}_t(b)(1-\widehat{G}_t(b)) }}. 
\end{equation}
\end{repeatlemma}
\begin{proof}{Proof.}
\textbf{(Bias)}
Denote the bias by $\zeta_t(b) = \E[\widetilde{e}_t(b)] - \btheta_*^{\top}\bx_t$. We decompose the bias as follows:
By taking expectation w.r.t. $M_t$, we have
\begin{align*}
\E_{M_t}\parq*{\widetilde{e}_t(b)} - \parr*{v_{t,1}-v_{t,0}} &= \parr*{\frac{G_t(b)}{\max\bra*{\delta_t^2, \widehat{G}_t(b)}}-1}v_{t,1} - \parr*{\frac{1-G_t(b)}{\max\bra*{\delta_t^2, 1-\widehat{G}_t(b)}} -1}v_{t,0}.
\end{align*}
Since $v_{t,1}\in[0,1]$, the absolute value of the first term is bounded by
\begin{align*}
\zeta_{t,1}(b) \equiv \abs*{\frac{G_t(b)}{\max\bra*{\delta_t^2, \widehat{G}_t(b)}}-1}.
\end{align*}
Similarly, the second term is bounded by
\begin{align*}
\zeta_{t,0}(b)\equiv \abs*{\frac{1-G_t(b)}{\max\bra*{\delta_t^2, 1-\widehat{G}_t(b)}} -1}
\end{align*}
Finally, by Jensen's inequality and triangle inequality, for any $b$,
\begin{align*}
\abs*{\E\parq*{\widetilde{e}_t(b)} - \btheta_*^{\top}\bx_t} &= \abs*{\E_{v_{t,1}, v_{t,0}}\parq*{\E_{M_t}\parq*{\widetilde{e}_t(b)} - (v_{t,1}-v_{t,0})}}\\
&\leq \E_{v_{t,1}, v_{t,0}}\abs*{\E_{M_t}\parq*{\widetilde{e}_t(b)} - (v_{t,1}-v_{t,0})}\\
&\leq \zeta_{t,1}(b) + \zeta_{t,0}(b).
\end{align*}
Next, we will bound $\zeta_t(b)^2$ to obtain the first inequality in the claim. Note that $\zeta_t(b)^2\leq 4\max\bra{\zeta_{t,1}(b)^2, \zeta_{t,0}(b)^2}$ by the decomposition above. We deal with each case separately. First consider $\zeta_{t,1}(b)^2$: when $\widehat{G}_t(b) \geq 6\delta_t^2$, by Assumption~\ref{ass:est_oracle_bern} and some algebra, we have $G_t(b) \geq 3\delta_t^2$, and thereby straightforward computation leads to
\begin{align*}
\sigma_t(b)^{-2}\zeta_{t,1}(b)^2 &= \widehat{G}_t(b)\parr*{1-\widehat{G}_t(b)}\frac{(G_t(b)-\widehat{G}_t(b))^2}{\widehat{G}_t(b)^2}\\
&\leq \widehat{G}_t(b)\frac{(G_t(b)-\widehat{G}_t(b))^2}{\widehat{G}_t(b)^2} = \frac{(G_t(b)-\widehat{G}_t(b))^2}{\widehat{G}_t(b)}\\
&\stepa{\leq} \frac{(\delta_t\sqrt{G_t(b)}+\delta_t^2)^2}{\widehat{G}_t(b)} \\
&\leq 2\delta_t^2\frac{G_t(b)}{\widehat{G}_t(b)} + 2\frac{\delta_t^4}{8\delta_t^2}\\
&\stepb{\leq} 2\delta_t^2\frac{G_t(b)}{G_t(b) - \delta_t\sqrt{G_t(b)}-\delta_t^2} + \frac{\delta_t^2}{4}\\
&\stepc{\leq} 40\delta_t^2 + \frac{\delta_t^2}{4}  \leq 41\delta_t^2 \numberthis\label{eq:bias_var_bound_eq1}
\end{align*}
where (a) and (b) are by Assumption~\ref{ass:est_oracle_bern} and (c) uses $\delta_t\sqrt{G_t(b)}+\delta_t^2 \leq \frac{19}{20}G_t(b)$ when $G_t(b)\geq 3\delta_t^2$. On the other hand, when $\widehat{G}_t(b) < 6\delta_t^2$, again by Assumption~\ref{ass:est_oracle_bern} we have $G_t(b) < 11\delta_t^2$, and
\begin{align*}
\sigma_t(b)^{-2}\zeta_{t,1}(b)^2 &\leq \widehat{G}_t(b)\parr*{1-\widehat{G}_t(b)}\frac{(G_t(b)-\max\bra{\delta_t^2,\widehat{G}_t(b)})^2}{\max\bra{\delta_t^4, \widehat{G}_t(b)^2}}.
% &\leq \widehat{G}_t(b)\frac{(6\delta^2)^2}{\delta_t^4} < 6^3 \delta_t^2 = 216\delta_t^2.
\end{align*}
If $\widehat{G}_t(b)\le \delta_t^2$, it reduces to $\sigma_t(b)^{-2}\zeta_{t,1}(b)^2 \le 100\delta_t^2$. Otherwise, 
\begin{equation}\label{eq:bias_var_bound_eq2}
\sigma_t(b)^{-2}\zeta_{t,1}(b)^2 \le \frac{\parr*{G_t(b)-\widehat{G}_t(b)}^2}{\widehat{G}_t(b)} \le \frac{100\delta_t^4}{\delta_t^2} \le 100\delta_t^2.
\end{equation}
A symmetric argument applies to $\sigma_t(b)^{-2}\zeta_{t,0}(b)^2$. Together we obtain the inequality
\begin{align*}
|\zeta_t(b)| = |\E[\widetilde{e}_t(b)]-\btheta_*^{\top}\bx_t| &\leq 2\max\bra{\zeta_{t,1}(b), \zeta_{t,0}(b)}\\
&\leq 2\sqrt{100\delta_t^2\sigma_t(b)^{2}} =  20\sigma_t(b)\delta_t.
\end{align*}
\textbf{(Variance)} We will bound the variance of $\widetilde{e}_t(b)$ by its second moment. Toward this end, we look at the first term in the estimator:
\begin{align*}
\parr*{\frac{\indic[b\geq M_t]}{\max\bra*{\delta_t^2, \widehat{G}_t(b)}}v_{t,1}}^2 \leq \parr*{\frac{\indic[b\geq M_t]}{\max\bra*{\delta_t^2, \widehat{G}_t(b)}}}^2 \leq \widehat{G}_t(b)^{-1}\frac{\indic[b\geq M_t]}{\max\bra*{\delta_t^2, \widehat{G}_t(b)}}.
\end{align*}
Similarly, 
\begin{align*}
\parr*{\frac{\indic[b< M_t]}{\max\bra*{\delta_t^2, 1-\widehat{G}_t(b)}}v_{t,0}}^2 \leq \parr*{1-\widehat{G}_t(b)}^{-1}\frac{\indic[b< M_t]}{\max\bra*{\delta_t^2, 1-\widehat{G}_t(b)}}.
\end{align*}
Then we have the second moment
\begin{align*}
\Var\parr*{\widetilde{e}_t(b)} &\leq \E\parq*{\widetilde{e}_t(b)^2}\\
&\leq 2\E\parq*{\parr*{\frac{\indic[b\geq M_t]}{\max\bra*{\delta_t^2, \widehat{G}_t(b)}}v_{t,1}}^2} + 2 \E\parq*{\parr*{\frac{\indic[b< M_t]}{\max\bra*{\delta_t^2, 1-\widehat{G}_t(b)}}v_{t,0}}^2}\\
&\leq 2\widehat{G}_t(b)^{-1}\E\parq*{\frac{\indic[b\geq M_t]}{\max\bra*{\delta_t^2, \widehat{G}_t(b)}}} + 2\parr*{1-\widehat{G}_t(b)}^{-1}\E\parq*{\frac{\indic[b< M_t]}{\max\bra*{\delta_t^2, 1-\widehat{G}_t(b)}}}\\
&= 2\widehat{G}_t(b)^{-1}\frac{G_t(b)}{\max\bra*{\delta_t^2, \widehat{G}_t(b)}} + 2\parr*{1-\widehat{G}_t(b)}^{-1}\frac{1-G_t(b)}{\max\bra*{\delta_t^2, 1-\widehat{G}_t(b)}}\\
&\stepa{\leq} 8\widehat{G}_t(b)^{-1} + 8\parr*{1-\widehat{G}_t(b)}^{-1}\\
&\leq 16\max\bra*{\widehat{G}_t(b)^{-1}, \parr*{1-\widehat{G}_t(b)}^{-1}}
\end{align*}
where (a) bounds each fraction by $4$ due to Assumption~\ref{ass:est_oracle_bern} and some algebra. To see why (a) holds, consider the first fraction $\frac{G_t(b)}{\max\bra*{\delta_t^2, \widehat{G}_t(b)}}$ and when $G_t(b)>4\delta_t^2$ (otherwise it holds trivially). Then by \cref{ass:est_oracle_bern}, $\widehat{G}_t(b) \ge G_t(b) - \sqrt{G_t(b)}\delta_t - \delta_t^2 > \delta_t^2$ and
\begin{equation*}
\frac{G_t(b)}{\widehat{G}_t(b)} \le 1 + \frac{\sqrt{G_t(b)}\delta_t + \delta_t^2}{\widehat{G}_t(b)} \le 1 + \frac{\frac{3}{2}\sqrt{G_t(b)}\delta_t}{G_t(b) - \sqrt{G_t(b)}\delta_t - \delta_t^2} \le 1 + \frac{\frac{3}{2}\sqrt{G_t(b)}\delta_t}{\sqrt{G_t(b)}\parr{\sqrt{G_t(b)}-\frac{3}{2}\delta_t}} \le 4
\end{equation*}
where the last inequality follows $\sqrt{G_t(b)}-\frac{3}{2}\delta_t \ge 2\delta_t-\frac{3}{2}\delta_t = \frac{\delta_t}{2}$.
\end{proof}

Then we present the bias-variance trade-off of the IPW estimator under Abstraction \ref{ass:est_oracle_weaker_bern}.

\begin{repeatlemma}[\cref{lem:bias-variance_weaker}]
Suppose $|\widehat{G}_t(b) - G_t(b)|\le \delta_t(b)\sqrt{G_t(b)(1-G_t(b))} + \delta_t(b)^2 + \xi$ for every $b\in[0,1]$. Then there exist absolute constants $c, c'>0$ such that
\begin{equation*}
\begin{cases}
    \abs*{\E[\widetilde{e}_t(b)] - \btheta_*^\top \bx_t} \le c(\delta_t(b)\sigma_t(b) + \xi \sigma_t(b)^2)\le c\parr*{\delta_t(b) + \xi}\sigma_t(b)^2 \\
    \Var(\widetilde{e}_t(b)) \le c'\max\{\xi\sigma_t(b)^4, \sigma_t(b)^2\} \le c'\sigma_t(b)^4.
\end{cases} \qquad \text{where }\sigma_t(b) \coloneqq \parr*{ \widehat{G}_t(b)(1-\widehat{G}_t(b))}^{-\frac{1}{2}}.
\end{equation*}
% \begin{equation*}
% \begin{cases}
%     \abs*{\E[\widetilde{e}_t(b)] - \btheta_*^\top \bx_t} \le c\parr*{\delta_t(b) + \xi}\sigma_t(b)^2 \\
%     \Var(\widetilde{e}_t(b)) \le c'\sigma_t(b)^4
% \end{cases} \qquad \text{where }\sigma_t(b) \coloneqq \parr*{ \widehat{G}_t(b)(1-\widehat{G}_t(b))}^{-\frac{1}{2}}.
% \end{equation*}
In particular, $c= 15$ and $c'= 80$.
\end{repeatlemma}
\begin{proof}{Proof.}
\textbf{(Bias)}
Denote the bias by $\zeta_t(b) = \E_t[\widetilde{e}_t(b)] - \btheta_*^{\top}\bx_t$. We decompose the bias as follows:
By taking expectation w.r.t. $M_t$, we have
\begin{align*}
\E_{M_t}\parq*{\widetilde{e}_t(b)} - \parr*{v_{t,1}-v_{t,0}} &= \parr*{\frac{G_t(b)}{\max\bra*{\delta_t^2, \widehat{G}_t(b)}}-1}v_{t,1} - \parr*{\frac{1-G_t(b)}{\max\bra*{\delta_t^2, 1-\widehat{G}_t(b)}} -1}v_{t,0}.
\end{align*}
Since $v_{t,1}\in[0,1]$, the absolute value of the first term is bounded by
\begin{align*}
\zeta_{t,1}(b) \equiv \abs*{\frac{G_t(b)}{\max\bra*{\delta_t^2, \widehat{G}_t(b)}}-1}.
\end{align*}
Similarly, the second term is bounded by
\begin{align*}
\zeta_{t,0}(b)\equiv \abs*{\frac{1-G_t(b)}{\max\bra*{\delta_t^2, 1-\widehat{G}_t(b)}} -1}
\end{align*}
Finally, by Jensen's inequality and triangle inequality, for any $b$,
\begin{align*}
\abs*{\E\parq*{\widetilde{e}_t(b)} - \btheta_*^{\top}\bx_t} &= \abs*{\E_{v_{t,1}, v_{t,0}}\parq*{\E_{M_t}\parq*{\widetilde{e}_t(b)} - (v_{t,1}-v_{t,0})}}\\
&\leq \E_{v_{t,1}, v_{t,0}}\abs*{\E_{M_t}\parq*{\widetilde{e}_t(b)} - (v_{t,1}-v_{t,0})}\\
&\leq \zeta_{t,1}(b) + \zeta_{t,0}(b).
\end{align*}
Next, we will bound $\zeta_t(b)^2$ to obtain the first inequality in the claim. Note that $\zeta_t(b)^2\leq 4\max\bra{\zeta_{t,1}(b)^2, \zeta_{t,0}(b)^2}$ by the decomposition above. Consider $\zeta_{t,1}(b)^2$. When $\widehat{G}_t(b) \ge 6\delta_t(b)^2$, straightforward computation gives
\begin{align*}
\zeta_{t,1}(b)^2 &= \frac{(G_t(b)-\widehat{G}_t(b))^2}{\widehat{G}_t(b)^2}\\
&\stepa{\leq} \frac{(\delta_t\sqrt{G_t(b)}+\delta_t^2 + \xi)^2}{\widehat{G}_t(b)^2}\\
&\le 3\delta_t^2\frac{G_t(b)}{\widehat{G}_t(b)^2} + 3\frac{\delta_t^4}{\widehat{G}_t(b)^2} + 3\frac{\xi^2}{\widehat{G}_t(b)^2}\\
&\stepb{\le} 6\delta_t(b)^2\sigma_t(b)^2 + 3\xi^2\sigma_t(b)^4 \numberthis\label{eq:bias_var_weaker_eq1}
\end{align*}
where (a) is by Abstraction~\ref{ass:est_oracle_weaker_bern} and (b) applies \eqref{eq:bias_var_bound_eq1} on the first two terms and $\delta_t(b)^2 \le \delta_t(b)$ and $\widehat{G}_t(b)^{-1} \le \sigma_t(b)^2$ on the last term.

Now consider the case when $\widehat{G}_t(b) < 6\delta_t(b)^2$. First, suppose $G_t(b) \ge 5\xi$. By Assumption~\ref{ass:est_oracle_weaker_bern} and some algebra, we have $G_t(b) < 20\delta_t(b)^2$, and thus following the computation in \eqref{eq:bias_var_bound_eq2},
\begin{equation}\label{eq:bias_var_weaker_eq2}
\zeta_{t,1}(b)^2 \leq \frac{(G_t(b)-\max\bra{\delta_t(b)^2,\widehat{G}_t(b)})^2}{\max\bra{\delta_t(b)^4, \widehat{G}_t(b)^2}} \le 400\delta_t(b)^2\sigma_t(b)^2.
\end{equation}
If instead $G_t(b) < 5\xi$, then we do a case study on $\widehat{G}_t(b)$. When $\widehat{G}_t(b) \le \delta_t(b)^2$, we have
\begin{equation}\label{eq:bias_var_weaker_eq3}
\zeta_{t,1}(b)^2 \le \frac{(5\xi -\delta_t(b)^2)^2}{\delta_t(b)^4} \stepc{\le} 50\xi^2\sigma_t(b)^4 + 2 \le 50\xi\sigma_t(b)^4 + 2\delta_t(b)^2\sigma_t(b)^2
\end{equation}
where (c) applies $(a+b)^2 \le 2a^2+2b^2$.
Otherwise $G_t(b) < 5\xi$ and $\delta_t(b)^2< \widehat{G}_t(b) < 6\delta_t(b)^2$, and we have
\begin{equation}\label{eq:bias_var_weaker_eq4}
\zeta_{t,1}(b)^2 \stepc{\le} \frac{50\xi^2 + 2\widehat{G}_t(b)^2}{\widehat{G}_t(b)^2} \le 50\xi^2\sigma_t(b)^4 + 2 \le 50\xi\sigma_t(b)^4 + 12\delta_t(b)^2\sigma_t(b)^2.
\end{equation}
By (\ref{eq:bias_var_weaker_eq1}--\ref{eq:bias_var_weaker_eq4}) and a symmetric argument on $\zeta_{t,0}(b)^2$, we obtain the inequality
\begin{align*}
|\zeta_t(b)| = |\E[\widetilde{e}_t(b)]-\btheta_*^{\top}\bx_t| &\leq 2\max\bra{\zeta_{t,1}(b), \zeta_{t,0}(b)} 
\le 2\sqrt{50}\xi\sigma_t(b)^2 + 2\sqrt{12}\delta_t(b)\sigma_t(b).
\end{align*}

% \textbf{(Variance)} The variance can be bounded in a much easier way than \cref{lem:bias-variance}, since the bound is much larger here. By definition, it holds that 
% \[
% \abs{\widetilde{e}_t(b)} \le 2\max\bra{\widehat{G}_t(b)^{-1}, (1-\widehat{G}_t(b))^{-1}} \le 4\sigma_t(b)^2
% \]
% Then the variance is bounded by the second moment:
% \[
% \Var\parr*{\widetilde{e}_t(b)} \le \E\parq*{\widetilde{e}_t(b)^2} \le 16\sigma_t(b)^4.
% \]

\textbf{(Variance)} 
We will bound the variance of $\widetilde{e}_t(b)$ by its second moment. Toward this end, we look at the first term in the estimator:
\begin{align*}
\parr*{\frac{\indic[b\geq M_t]}{\max\bra*{\delta_t^2, \widehat{G}_t(b)}}v_{t,1}}^2 \leq \parr*{\frac{\indic[b\geq M_t]}{\max\bra*{\delta_t^2, \widehat{G}_t(b)}}}^2 \leq \widehat{G}_t(b)^{-1}\frac{\indic[b\geq M_t]}{\max\bra*{\delta_t^2, \widehat{G}_t(b)}}.
\end{align*}
Similarly, 
\begin{align*}
\parr*{\frac{\indic[b< M_t]}{\max\bra*{\delta_t^2, 1-\widehat{G}_t(b)}}v_{t,0}}^2 \leq \parr*{1-\widehat{G}_t(b)}^{-1}\frac{\indic[b< M_t]}{\max\bra*{\delta_t^2, 1-\widehat{G}_t(b)}}.
\end{align*}
Then we have the second moment
\begin{align*}
\Var\parr*{\widetilde{e}_t(b)} &\leq \E\parq*{\widetilde{e}_t(b)^2}\\
&\leq 2\E\parq*{\parr*{\frac{\indic[b\geq M_t]}{\max\bra*{\delta_t^2, \widehat{G}_t(b)}}v_{t,1}}^2} + 2 \E\parq*{\parr*{\frac{\indic[b< M_t]}{\max\bra*{\delta_t^2, 1-\widehat{G}_t(b)}}v_{t,0}}^2}\\
&\leq 2\widehat{G}_t(b)^{-1}\E\parq*{\frac{\indic[b\geq M_t]}{\max\bra*{\delta_t^2, \widehat{G}_t(b)}}} + 2\parr*{1-\widehat{G}_t(b)}^{-1}\E\parq*{\frac{\indic[b< M_t]}{\max\bra*{\delta_t^2, 1-\widehat{G}_t(b)}}}\\
&= 2\widehat{G}_t(b)^{-1}\frac{G_t(b)}{\max\bra*{\delta_t^2, \widehat{G}_t(b)}} + 2\parr*{1-\widehat{G}_t(b)}^{-1}\frac{1-G_t(b)}{\max\bra*{\delta_t^2, 1-\widehat{G}_t(b)}}
\numberthis\label{eq:truncated_ipw_variance_decompose}
% &\stepa{\leq} 8\widehat{G}_t(b)^{-1} + 8\parr*{1-\widehat{G}_t(b)}^{-1}\\
% &\leq 16\max\bra*{\widehat{G}_t(b)^{-1}, \parr*{1-\widehat{G}_t(b)}^{-1}}
\end{align*}
To proceed, we focus on the first term in \eqref{eq:truncated_ipw_variance_decompose}. 

First suppose $\widehat{G}_t(b) \ge 6\delta_t(b)^2$ and $G_t(b) \ge 5\xi$. By \cref{ass:est_oracle_weaker_bern} and some algebra, we have $G_t(b) \ge \frac{25}{9}\delta_t(b)^2$ and $\delta_t(b)\sqrt{G_t(b)} + \delta_t(b)^2 +\xi \le \frac{4}{5}G_t(b)$. Consequently, in this case
\begin{equation*}
\widehat{G}_t(b)^{-1}\frac{G_t(b)}{\max\bra*{\delta_t^2, \widehat{G}_t(b)}} = \widehat{G}_t(b)^{-1}\frac{G_t(b)}{\widehat{G}_t(b)} \le \widehat{G}_t(b)^{-1}\frac{G_t(b)}{G_t(b) - \delta_t(b)\sqrt{G_t(b)} - \delta_t(b)^2 -\xi} \le 5\widehat{G}_t(b)^{-1} \le 5\sigma_t(b)^2.
\end{equation*}
When $\widehat{G}_t(b) < 6\delta_t(b)^2$ and $G_t(b) \ge 5\xi$, \cref{ass:est_oracle_weaker_bern} and some computation gives again $G_t(b) < 20\delta_t(b)^2$. Then
\begin{equation*}\label{eq:bias_var_weaker_eq5}
\widehat{G}_t(b)^{-1}\frac{G_t(b)}{\max\bra*{\delta_t^2, \widehat{G}_t(b)}} \le \widehat{G}_t(b)^{-1}\frac{20\delta_t(b)^2}{\delta_t(b)^2} \le 20\sigma_t(b)^2.
\end{equation*}
Finally, if $G_t(b) < 5\xi$ in any case, we immediately have:
\begin{equation*}\label{eq:bias_var_weaker_eq6}
\widehat{G}_t(b)^{-1}\frac{G_t(b)}{\max\bra*{\delta_t^2, \widehat{G}_t(b)}} \le \widehat{G}_t(b)^{-1}\frac{5\xi}{\widehat{G}_t(b)} \le 5\xi\sigma_t(b)^4.
\end{equation*}
A symmetric argument on the second term in \eqref{eq:truncated_ipw_variance_decompose} gives
\[
\parr*{1-\widehat{G}_t(b)}^{-1}\frac{1-G_t(b)}{\max\bra*{\delta_t^2, 1-\widehat{G}_t(b)}} \le 20\sigma_t(b)^2 + 5\xi\sigma_t(b)^4.
\]
Then \eqref{eq:truncated_ipw_variance_decompose} is further bounded as follows:
\begin{align*}
\Var\parr*{\widetilde{e}_t(b)} 
\le 80\sigma_t(b)^2 + 20\xi\sigma_t(b)^4
\end{align*}
which completes the proof.
\end{proof}

\subsection{Proof of \cref{lem:WLS}}
\begin{repeatlemma}[\cref{lem:WLS}.]\label{lem:WLS_app}
Suppose $(v_{\tau,1},v_{\tau,0})_{\tau\in \Phi_t}$ are conditionally independent given $(\bx_\tau,b_\tau,M_\tau)_{\tau\in \Phi_t}$, then with probability at least $1-T^{-2}$, it holds that
\begin{align*}
\abs*{\bhtheta^{\top}_t\bx_t - \btheta_*^{\top}\bx_t} \leq \gamma \|\bx_t\|_{\bA_t^{-1}}, 
\end{align*}
where $\gamma$ is defined in Line 6 of \cref{alg:base_alg_te}. 
\end{repeatlemma}

\begin{proof}{Proof.}
Let the bias of the IPW estimator in \eqref{eq:truncated_estimator_te} at time $\tau$ be $\zeta_\tau = \E[\widetilde{e}_\tau(b_\tau)] - \btheta_*^{\top}\bx_\tau.$ Denote $\bD_t = [\sigma_\tau^{-1} \bx_\tau]_{\tau\in\Phi_t}\in\R^{d\times |\Phi_t|}$ the weighted contexts, $\bV_t = [\sigma_\tau^{-1} \widetilde{e}_\tau(b_\tau)]_{\tau\in\Phi_t}\in\R^{|\Phi_t|\times 1}$ the weighted estimators, and $\bZ_t = [\sigma_\tau^{-1}\zeta_\tau]_{\tau\in\Phi_t}\in\R^{|\Phi_t|\times 1}$ the weighted biases, where  $\sigma_\tau^{-1} = \sqrt{\widehat{G}_t(b_\tau)\parr*{1-\widehat{G}_t(b_\tau)}}$ as defined in \eqref{eq:sigma_t}. We have
\begin{align*}
\bhtheta_{t}^{\top}\bx_t - \btheta_{*}^{\top}\bx_t &= \bx_t^{\top}\bA^{-1}_{t}\bz_t - \bx_t^{\top}\bA^{-1}_{t}\parr*{\bI+\bD_{t}\bD_{t}^{\top}}\btheta_{*}\\
&= \bx_t^{\top}\bA_{t}^{-1}\bD_{t}\parr*{\bV_{t} - \bZ_t - \bD^{\top}_{t}\btheta_{*}} + \bx_t^{\top}\bA_{t}^{-1}\bD_{t}\bZ_t - \bx_t^{\top}\bA_{t}^{-1}\btheta_{*}.
\end{align*}
Given $\|\btheta_{*}\|_2\leq 1$, it follows that
\[
\abs*{\bhtheta_{t}^{\top}\bx_t - \btheta_{*}^{\top}\bx_t} \leq \abs*{\bx_t^{\top}\bA_{t}^{-1}\bD_{t}\parr*{\bV_{t} - \bZ_t - \bD^{\top}_{t}\btheta_{*}}} + \abs*{\bx_t^{\top}\bA_{t}^{-1}\bD_{t}\bZ_t} + \|\bx_t^{\top}\bA_{t}^{-1}\|_2.\numberthis\label{eq:WLS_app_width_decompose}
\]
Since $\bA_t\succeq \bI$, the last term is upper bounded by
\[
\|\bx_t^{\top}\bA_{t}^{-1}\|_2 = \sqrt{\bx_t^{\top}\bA_t^{-1}\bI\bA_t^{-1}\bx_t} \leq \sqrt{\bx_t^{\top}\bA_t^{-1}\bx_t} = \|\bx_t\|_{\bA_t^{-1}}.
\]
For the first term in \eqref{eq:WLS_app_width_decompose}, we can write
\[
\bx_t^{\top}\bA_{t}^{-1}\bD_{t}\parr*{\bV_{t} - \bZ_t -\bD^{\top}_{t}\btheta_{*}} = \sum_{\tau\in\Phi_t}\bx_t^{\top}\bA_t^{-1}\sigma_\tau^{-2}\bx_\tau \widetilde{\varepsilon}_\tau
\]
where $\widetilde{\varepsilon}_\tau = \widetilde{e}_\tau(b_\tau) - \btheta_*^{\top}\bx_\tau - \zeta_\tau$ satisfies the followings: 
\begin{itemize}
    \item $\widetilde{\varepsilon}_\tau$ are conditionally independent given $(\bx_\tau, b_\tau, M_\tau)_{\tau\in\Phi_t}$;

    \item $\E[\widetilde{\varepsilon}_\tau] = 0$;

    \item $\abs*{\widetilde{\varepsilon}_\tau} \leq |\widetilde{e}_\tau(b_\tau)| \leq 2\max\bra*{\widehat{G}_\tau(b_\tau)^{-1}, \parr*{1-\widehat{G}_\tau(b_\tau)}^{-1}}$;
    
    \item $\Var(\widetilde{\varepsilon}_\tau) = \Var(\widetilde{e}_\tau(b_\tau))  \leq c'\sigma_\tau^{2}$ by Lemma~\ref{lem:bias-variance}.
\end{itemize}
Since $\max\bra{\widehat{G}_\tau(b_\tau)^{-1}, \parr{1-\widehat{G}_\tau(b_\tau)}^{-1}} \leq 2 \widehat{G}_\tau(b_\tau)^{-1}\parr{1-\widehat{G}_\tau(b_\tau)}^{-1},$
it follows that for every $\tau\in\Phi_t$,
\begin{itemize}
    \item $\Var\parr*{\bx_t^{\top}\bA_t^{-1}\sigma_\tau^{-2}\bx_\tau\widetilde{\varepsilon}_\tau} \leq \abs*{\bx_t^{\top}\bA_t^{-1}\sigma_\tau^{-1} \bx_\tau}^2\sigma_\tau^{-2}\Var(\widetilde{\varepsilon}_\tau) \leq c'\abs*{\bx_t^{\top}\bA_t^{-1}\sigma_\tau^{-1} \bx_\tau}^2;$

    \item $\abs*{\bx_t^{\top}\bA_t^{-1}\sigma_\tau^{-2}\bx_\tau\widetilde{\varepsilon}_\tau} \leq 4\abs*{\bx_t^{\top}\bA_t^{-1}\bx_\tau} \leq 4\|\bx_t^{\top}\bA_t^{-1}\|_2 \leq 4\|\bx_t\|_{\bA_t^{-1}}$.
\end{itemize}
Then by Bernstein's inequality (Lemma~\ref{lem:bernstein}), with probability at least $1-T^{-2}$,
\begin{align*}
\abs*{\sum_{\tau\in\Phi_t}\bx_t^{\top}\bA_t^{-1}\sigma_\tau^{-2}\bx_\tau \widetilde{\varepsilon}_\tau} &\leq \sqrt{2c'\log\parr*{2T^2}\sum_{\tau\in\Phi_t}\abs*{\bx_t^{\top}\bA_t^{-1}\sigma_\tau^{-1} \bx_\tau}^2} + \frac{8}{3}\log\parr*{2T^2}\|\bx_t\|_{\bA_t^{-1}} \\
&= \sqrt{2c'\log\parr*{2T^2}\|\bx_t^{\top}\bA_t^{-1}\bD_t\|_2^2} + \frac{8}{3}\log\parr*{2T^2}\|\bx_t\|_{\bA_t^{-1}} \\
&\stepa{\leq} \sqrt{2c'\log\parr*{2T^2}\|\bx_t\|_{\bA_t^{-1}}^2} + \frac{8}{3}\log\parr*{2T^2}\|\bx_t\|_{\bA_t^{-1}} \\
&\leq 14\log\parr*{2T} \|\bx_t\|_{\bA_t^{-1}}
\end{align*}
where $c'=16$ in \cref{lem:bias-variance} and (a) follows from $\|\bx_t^{\top}\bA_t^{-1}\bD_t\|_2^2 = \bx_t^{\top}\bA_t^{-1}\bD_t\bD_t^{\top}\bA_t^{-1}\bx_t \leq \bx_t\bA_t^{-1}\bx_t = \|\bx_t\|_{\bA_t^{-1}}$ by $\bA_t=\bI + \bD_t\bD_t^{\top}$. Finally, to bound the middle term in \eqref{eq:WLS_app_width_decompose}, we write
\begin{align*}
\abs*{\bx_t^{\top}\bA_{t}^{-1}\bD_{t}\bZ_t} &\stepb{\leq} \|\bx_t\|_{\bA_t^{-1}} \|\bD_t\bZ_t\|_{\bA_t^{-1}}\\
&= \|\bx_t\|_{\bA_t^{-1}} \sqrt{\bZ_t^{\top}\bD_t^{\top}\bA_t^{-1}\bD_t\bZ_t}\\
&\stepc{\leq} \|\bx_t\|_{\bA_t^{-1}}\|\bZ_t\|_2
\end{align*}
where (b) applies Cauchy-Schwartz inequality and (c) is due to $\bD_t^{\top}(\bI+\bD_t\bD_t^{\top})^{-1}\bD_t\preceq \bI$ by Woodbury matrix identity. To obtain the desired claim, we can further upper bound the following by Lemma~\ref{lem:bias-variance}:
\begin{align*}
\|\bZ_t\|_2 = \sqrt{\sum_{\tau\in\Phi_t}\sigma_\tau^{-2}\zeta_\tau^2}\leq c\sqrt{\sum_{\tau\in\Phi_t}\delta_\tau^2}.
\end{align*}
\end{proof}

\subsection{Proof of \cref{lem:adapted_WLS}}
\begin{repeatlemma}[\cref{lem:adapted_WLS}.]\label{lem:adapted_WLS_app}
Suppose $(v_{\tau,1},v_{\tau,0})_{\tau\in \Phi_t}$ are conditionally independent given $(\bx_\tau,b_\tau,M_\tau)_{\tau\in \Phi_t}$, then with probability at least $1-T^{-2}$, it holds that
\begin{align*}
\abs*{\bhtheta^{\top}_t\bx_t - \btheta_*^{\top}\bx_t} \leq \gamma \|\bx_t\|_{\bA_t^{-1}}, 
\end{align*}
where $\bA_t$ and $\gamma$ are defined in Line 3 and Line 6 of \cref{alg:base_alg_three} respectively. 
\end{repeatlemma}

\begin{proof}{Proof.}
This proof largely follows that of \cref{lem:WLS}, except that the weights in the WLS is $\sigma_\tau^{-4}$ instead of $\sigma_\tau^{-2}$. Formally speaking, let the bias of the IPW estimator in \eqref{eq:truncated_estimator_te_weaker} at time $\tau$ be $\zeta_\tau = \E[\widetilde{e}_\tau(b_\tau)] - \btheta_*^{\top}\bx_\tau.$ Denote $\bD_t = [\sigma_\tau^{-2} \bx_\tau]_{\tau\in\Phi_t}\in\R^{d\times |\Phi_t|}$ the weighted contexts, $\bV_t = [\sigma_\tau^{-2} \widetilde{e}_\tau(b_\tau)]_{\tau\in\Phi_t}\in\R^{|\Phi_t|\times 1}$ the weighted estimators, and $\bZ_t = [\sigma_\tau^{-2}\zeta_\tau]_{\tau\in\Phi_t}\in\R^{|\Phi_t|\times 1}$ the weighted biases, where  $\sigma_\tau^{-2} = \widehat{G}_t(b_\tau)\parr*{1-\widehat{G}_t(b_\tau)}$. We have
\begin{align*}
\bhtheta_{t}^{\top}\bx_t - \btheta_{*}^{\top}\bx_t &= \bx_t^{\top}\bA^{-1}_{t}\bz_t - \bx_t^{\top}\bA^{-1}_{t}\parr*{\bI+\bD_{t}\bD_{t}^{\top}}\btheta_{*}\\
&= \bx_t^{\top}\bA_{t}^{-1}\bD_{t}\parr*{\bV_{t} - \bZ_t - \bD^{\top}_{t}\btheta_{*}} + \bx_t^{\top}\bA_{t}^{-1}\bD_{t}\bZ_t - \bx_t^{\top}\bA_{t}^{-1}\btheta_{*}.
\end{align*}
Given $\|\btheta_{*}\|_2\leq 1$, it follows that
\[
\abs*{\bhtheta_{t}^{\top}\bx_t - \btheta_{*}^{\top}\bx_t} \leq \abs*{\bx_t^{\top}\bA_{t}^{-1}\bD_{t}\parr*{\bV_{t} - \bZ_t - \bD^{\top}_{t}\btheta_{*}}} + \abs*{\bx_t^{\top}\bA_{t}^{-1}\bD_{t}\bZ_t} + \|\bx_t^{\top}\bA_{t}^{-1}\|_2.\numberthis\label{eq:WLS_app_width_decompose2}
\]
Since $\bA_t\succeq \bI$, the last term is upper bounded by
\[
\|\bx_t^{\top}\bA_{t}^{-1}\|_2 = \sqrt{\bx_t^{\top}\bA_t^{-1}\bI\bA_t^{-1}\bx_t} \leq \sqrt{\bx_t^{\top}\bA_t^{-1}\bx_t} = \|\bx_t\|_{\bA_t^{-1}}.
\]
For the first term in \eqref{eq:WLS_app_width_decompose2}, we can write
\[
\bx_t^{\top}\bA_{t}^{-1}\bD_{t}\parr*{\bV_{t} - \bZ_t -\bD^{\top}_{t}\btheta_{*}} = \sum_{\tau\in\Phi_t}\bx_t^{\top}\bA_t^{-1}\sigma_\tau^{-4}\bx_\tau \widetilde{\varepsilon}_\tau
\]
where $\widetilde{\varepsilon}_\tau = \widetilde{e}_\tau(b_\tau) - \btheta_*^{\top}\bx_\tau - \zeta_\tau$ satisfies the followings: 
\begin{itemize}
    \item $\widetilde{\varepsilon}_\tau$ are conditionally independent given $(\bx_\tau, b_\tau, M_\tau)_{\tau\in\Phi_t}$;

    \item $\E[\widetilde{\varepsilon}_\tau] = 0$;

    \item $\abs*{\widetilde{\varepsilon}_\tau} \leq |\widetilde{e}_\tau(b_\tau)| \leq 4\sigma_\tau^2$;
    
    \item $\Var(\widetilde{\varepsilon}_\tau) = \Var(\widetilde{e}_\tau(b_\tau))  \leq c'\sigma_\tau^4$ by Lemma~\ref{lem:bias-variance_weaker}.
\end{itemize}
It follows that for every $\tau\in\Phi_t$,
\begin{itemize}
    \item $\Var\parr*{\bx_t^{\top}\bA_t^{-1}\sigma_\tau^{-4}\bx_\tau\widetilde{\varepsilon}_\tau} \leq \abs*{\bx_t^{\top}\bA_t^{-1}\sigma_\tau^{-2} \bx_\tau}^2\sigma_\tau^{-4}\Var(\widetilde{\varepsilon}_\tau) \leq 2c'\abs*{\bx_t^{\top}\bA_t^{-1}\sigma_\tau^{-2} \bx_\tau}^2;$

    \item $\abs*{\bx_t^{\top}\bA_t^{-1}\sigma_\tau^{-4}\bx_\tau\widetilde{\varepsilon}_\tau} \leq 4\abs*{\bx_t^{\top}\bA_t^{-1}\bx_\tau} \leq 4\|\bx_t^{\top}\bA_t^{-1}\|_2 \leq 4\|\bx_t\|_{\bA_t^{-1}}$.
\end{itemize}
Then by Bernstein's inequality (Lemma~\ref{lem:bernstein}), with probability at least $1-T^{-2}$,
\begin{align*}
\abs*{\sum_{\tau\in\Phi_t}\bx_t^{\top}\bA_t^{-1}\sigma_\tau^{-4}\bx_\tau \widetilde{\varepsilon}_\tau} &\leq \sqrt{2c'\log\parr*{2T^2}\sum_{\tau\in\Phi_t}\abs*{\bx_t^{\top}\bA_t^{-1}\sigma_\tau^{-2} \bx_\tau}^2} + \frac{8}{3}\log\parr*{2T^2}\|\bx_t\|_{\bA_t^{-1}} \\
&= \sqrt{2c'\log\parr*{2T^2}\|\bx_t^{\top}\bA_t^{-1}\bD_t\|_2^2} + \frac{8}{3}\log\parr*{2T^2}\|\bx_t\|_{\bA_t^{-1}} \\
&\stepa{\leq} \sqrt{2c'\log\parr*{2T^2}\|\bx_t\|_{\bA_t^{-1}}^2} + \frac{8}{3}\log\parr*{2T^2}\|\bx_t\|_{\bA_t^{-1}} \\
&\leq 24\log\parr*{2T} \|\bx_t\|_{\bA_t^{-1}}
\end{align*}
where $c'=80$ in \cref{lem:bias-variance_weaker} and (a) follows from $\|\bx_t^{\top}\bA_t^{-1}\bD_t\|_2^2 = \bx_t^{\top}\bA_t^{-1}\bD_t\bD_t^{\top}\bA_t^{-1}\bx_t \leq \bx_t\bA_t^{-1}\bx_t = \|\bx_t\|_{\bA_t^{-1}}$ by $\bA_t=\bI + \bD_t\bD_t^{\top}$. Finally, to bound the middle term in \eqref{eq:WLS_app_width_decompose2}, we write
\begin{align*}
\abs*{\bx_t^{\top}\bA_{t}^{-1}\bD_{t}\bZ_t} &\stepb{\leq} \|\bx_t\|_{\bA_t^{-1}} \|\bD_t\bZ_t\|_{\bA_t^{-1}}\\
&= \|\bx_t\|_{\bA_t^{-1}} \sqrt{\bZ_t^{\top}\bD_t^{\top}\bA_t^{-1}\bD_t\bZ_t}\\
&\stepc{\leq} \|\bx_t\|_{\bA_t^{-1}}\|\bZ_t\|_2
\end{align*}
where (b) applies Cauchy-Schwartz inequality and (c) is due to $\bD_t^{\top}(\bI+\bD_t\bD_t^{\top})^{-1}\bD_t\preceq \bI$ by Woodbury matrix identity. To obtain the desired claim, we further upper bound the following by Lemma~\ref{lem:bias-variance_weaker}:
\begin{align*}
\|\bZ_t\|_2 = \sqrt{\sum_{\tau\in\Phi_t}\sigma_\tau^{-4}\zeta_\tau^2}\leq c\sqrt{\sum_{\tau\in\Phi_t}(\delta_\tau^2+\xi^2)}.
\end{align*}
\end{proof}

\subsection{Proof of \cref{lem:good_cdf_interval}}

\begin{repeatlemma}[\cref{lem:good_cdf_interval}.]
Let $c=\frac{1}{60L}$ and $[b_{\mathrm{left}},b_{\mathrm{right}}]$ be the interval found in \cref{alg:ucb_selection}. It holds that
\begin{enumerate}
    \item[(1)] $\argmax_{b^*\in \Bcal}\br_t(b^*) \in [b_{\mathrm{left}},b_{\mathrm{right}}]$;

    \item[(2)] $\widehat{G}_t(b_{\mathrm{right}}) - \widehat{G}_t(b_{\mathrm{left}}) \le 1 - 2c$ when $c^2 \ge \abs*{\widehat{v}-\btheta_*^\top \bx_t} + \|\widehat{G}_t - G_t\|_\infty$.
\end{enumerate}
\end{repeatlemma}
\begin{proof}{Proof.}
For the ease of notation, write $\eta = \abs*{\widehat{v}-\btheta_*^\top \bx_t} + \|\widehat{G}_t - G_t\|_\infty$. Recall the definitions in \cref{alg:ucb_selection} that $b_{+} = \argmax_{b\in \Bcal} \widehat{G}_t(b)(\widehat{v} + \frac{1}{30L} - b)$, $b_{-} = \argmax_{b\in \Bcal} \widehat{G}_t(b)(\widehat{v} - \frac{1}{30L} - b)$, $U = \bra{b\in \Bcal: \widehat{G}_t(b_-) - c \le \widehat{G}_t(b) \le \widehat{G}_t(b_+) + c}$, $b_{\mathrm{left}}= \min U$, and $b_{\mathrm{right}}= \max U$. Write $\hb_t^* = \argmax_{b^*\in\Bcal}\br_t(b^*)$ and $v_t=\btheta_*^\top \bx_t$ for simplicity. We have
\begin{align*}
G_t(\hb_t^*)(v_t - \hb_t^*) &\le \widehat{G}_t(\hb_t^*)(v_t - \hb_t^*) + \|\widehat{G}_t - G_t\|_\infty\\
&\le \widehat{G}_t(\hb_t^*)(\widehat{v} - \frac{1}{30L} - \hb_t^*) + \abs*{\widehat{v}-v_t} + \|\widehat{G}_t - G_t\|_\infty + \frac{\widehat{G}_t(\hb_t^*)}{30L}\\
&\le \widehat{G}_t(b_-)(\widehat{v} - \frac{1}{30L} - b_-) + \abs*{\widehat{v}-v_t} + \|\widehat{G}_t - G_t\|_\infty + \frac{\widehat{G}_t(\hb_t^*)}{30L}\\
&\le \widehat{G}_t(b_-)(\widehat{v} - b_-) + \abs*{\widehat{v}-v_t} + \|\widehat{G}_t - G_t\|_\infty + \frac{\widehat{G}_t(\hb_t^*)-\widehat{G}_t(b_-)}{30L}\\
&\le G_t(b_-)(v_t - b_-) + 2\abs*{\widehat{v}-v_t} + 2\|\widehat{G}_t - G_t\|_\infty + \frac{\widehat{G}_t(\hb_t^*)-\widehat{G}_t(b_-)}{30L}.
\end{align*}
Since $G_t(b_-)(v_t - b_-)\le G_t(\hb_t^*)(v_t - \hb_t^*)$, this implies
\begin{equation*}
\widehat{G}_t(\hb_t^*)\ge \widehat{G}_t(b_-) - 60L\parr*{\abs*{\widehat{v}-v_t} +\|\widehat{G}_t - G_t\|_\infty} \ge \widehat{G}_t(b_-)-\frac{\eta}{c} \ge \widehat{G}_t(b_-) - c
\end{equation*}
where the last inequality comes from the lemma assumption $\eta \le c^2$. The other direction holds under a symmetric argument, giving $\widehat{G}_t(\hb_t^*)\le \widehat{G}_t(b_+)+c$. Therefore, $\hb_t^*\in[b_{\mathrm{left}},b_{\mathrm{right}}]$ by definition. For the second claim, since $\|\widehat{G}_t - G_t\|_\infty \le \eta < 0.1$ and $\Bcal\subseteq[0,1]$ is a $\frac{1}{\sqrt{T}}-$discretization, \cref{lem:approx_CDF_UCB_Lip} implies $\widehat{G}_t(b_+) - \widehat{G}_t(b_-)\le 1-\frac{1}{15L}$. Finally, we have $\widehat{G}_t(b_{\mathrm{right}}) - \widehat{G}_t(b_{\mathrm{left}}) \le 1-\frac{1}{15L}-120L\eta \le 1-\frac{1}{30L} = 1-2c$ as desired.
\end{proof}

\subsection{Proof of \cref{lem:TE_width}}
\begin{repeatlemma}[\cref{lem:TE_width}.]
Suppose the assumption on $\Bcal$ in \cref{lem:good_cdf_interval} holds and the events in Lemma~\ref{lem:WLS} and \cref{ass:est_oracle_bern} hold. Let $\br_{t,0}$ and $\br_{t,1}$ be defined as in \eqref{eq:UCB0} and \eqref{eq:UCB1}. For the index $i\in\{0,1\}$ selected in \cref{alg:base_alg_te}, it holds that 
\begin{equation*}
u_{t,i}(b) - 2\min\bra{w_{t,0}(b), w_{t,1}(b)}\le \br_{t,i}(b)  \le u_{t,i}(b) 
\end{equation*}
for all $b\in [b_{\mathrm{left}}, b_{\mathrm{right}}]$. Here the interval $[b_{\mathrm{left}}, b_{\mathrm{right}}]$ is defined in \cref{alg:ucb_selection} and the widths $w_{t,1},w_{t,0}$ are defined on Line 8 of \cref{alg:base_alg_te}.
\end{repeatlemma}
\begin{proof}{Proof.}
Denote $\gamma_t=1+c_1\log(2T)+c_2\sqrt{\sum_{\tau\in\Phi_t}\delta_\tau^2}$ as defined in \cref{alg:base_alg_te}. By Lemma~\ref{lem:good_cdf_interval}, when $\gamma_t\|\bx_t\|_{\bA_t^{-1}}+2\delta_t\leq \frac{1}{(60L)^2}$, we have $\widehat{G}_t(b_{\mathrm{right}})-\widehat{G}_t(b_{\mathrm{left}})\leq 1-\frac{1}{30L}$.

If $\widehat{G}_t(b_{\mathrm{left}})> \frac{1}{60L}$, then $\widehat{G}_t(b)\geq \frac{1}{60L}$ and $1-\widehat{G}_t(b)\leq 60L \min\bra{\widehat{G}_t(b),1-\widehat{G}_t(b)}$ for all $b\in [b_{\mathrm{left}}, b_{\mathrm{right}}]$ by monotonicity of $\widehat{G}_t$. Recall that $\br_{t,1}(b) = (1-G_t(b))\parr*{b-\btheta_*^{\top}\bx_t}-b$ in \eqref{eq:UCB1}. By Lemma~\ref{lem:WLS} and \cref{ass:est_oracle_bern},
\[
\br_{t,1}(b) \leq u_{t,1}(b) = \widehat{G}_t(b)\parr*{\bhtheta_t^{\top}\bx_t -b - \gamma\|\bx_t\|_{\bA_t^{-1}}} - \parr*{\bhtheta_t^{\top}\bx_t  - \gamma\|\bx_t\|_{\bA_t^{-1}}} + 2\delta_t
\]
and
\begin{align*}
\abs*{u_{t,1}(b) - \br_{t,1}(b)} &= \abs*{ \widehat{G}_t(b)\parr*{\bhtheta_t^{\top}\bx_t -b - \gamma\|\bx_t\|_{\bA_t^{-1}}} - \parr*{\bhtheta_t^{\top}\bx_t  - \gamma\|\bx_t\|_{\bA_t^{-1}}} + 2\delta_t - \br_{t,1}(b)}\\
&\le (1-\widehat{G}_t(b))\abs*{\btheta_*^{\top}\bx_t - \bhtheta_t^{\top}\bx_t+\gamma\|\bx_t\|_{\bA_t^{-1}}} + 4\delta_t\\
&\leq 2(1-\widehat{G}_t(b))\gamma_t\|\bx_t\|_{\bA_t^{-1}}+ 4\delta_t \\
&\leq 120L\gamma_t \min\bra{\widehat{G}_t(b),1-\widehat{G}_t(b)} \|\bx_t\|_{\bA_t^{-1}} + 4\delta_t \\
&\leq \min\bra{w_{t,0}(b),w_{t,1}(b)}.
\end{align*}

If $\widehat{G}_t(b_{\mathrm{left}})\leq \frac{1}{60L}$, since $\widehat{G}_t(b_{\mathrm{right}})-\widehat{G}_t(b_{\mathrm{left}})\leq 1-\frac{1}{30L}$, we have $\widehat{G}_t(b_{\mathrm{right}})\leq 1-\frac{1}{60L}$ when $60L(\gamma_t\|\bx_t\|_{\bA_t^{-1}}+\delta_t)\leq \frac{1}{60L}$. Then for all $b\in [b_{\mathrm{left}}, b_{\mathrm{right}}]$,
\[
\widehat{G}_t(b)\leq 60L\min\bra{\widehat{G}_t(b),1-\widehat{G}_t(b)} + (60L)^2(\gamma_t\|\bx_t\|_{\bA_t^{-1}}+2\delta_t).
\]
Again, by definition in \eqref{eq:UCB0} and Lemma~\ref{lem:WLS}, 
$$
G_t(b)\parr*{\btheta_*^{\top}\bx_t -b} = \br_{t,0}(b)\leq u_{t,0}(b) = \widehat{G}_t(b)\parr*{\bhtheta_t^{\top}\bx_t -b + \gamma\|\bx_t\|_{\bA_t^{-1}}} + 2\delta_t.
$$
For all $b\in [b_{\mathrm{left}}, b_{\mathrm{right}}]$ we have
\begin{align*}
\abs*{u_{t,0}(b) - \br_{t,0}(b)} &\leq 2\widehat{G}_t(b)\gamma_t\|\bx_t\|_{\bA_t^{-1}} + 4\delta_t\\
&\leq 120L\min\bra{\widehat{G}_t(b),1-\widehat{G}_t(b)}\gamma_t\|\bx_t\|_{\bA_t^{-1}} + 2(60L)^2\gamma_t\|\bx_t\|_{\bA_t^{-1}}(\gamma_t\|\bx_t\|_{\bA_t^{-1}}+2\delta_t) + 4\delta_t\\
&\leq \min\bra{w_{t,0}(b),w_{t,1}(b)}.
\end{align*}
\end{proof}

\subsection{Proof of \cref{lem:master_confidence_bound_te}}
\begin{repeatlemma}[\cref{lem:master_confidence_bound_te}.]\label{lem:master_confidence_bound_te_app}
In \cref{alg:master_alg_te}, for every $t\in[T]$ and $s\in[S]$, it holds that $\br_t(b_t^*)-\br_t(b) \leq 17\cdot 2^{-s}$ for all $b\in B_s$, where $\hb_t^*=\argmax_{b^*\in\Bcal}\br_t(b^*)$ in the discretization $\Bcal$.
\end{repeatlemma}
\begin{proof}{Proof.}
We prove this result via an induction argument on $s\in[S]$ with the following induction hypothesis: for each $s\in[S]$, the optimal bid remains uneliminated, i.e. $\hb^*_t\in B_s$, and $\hr_t(b^*_t)- \hr_t(b) \leq 17\cdot 2^{-s}$ for all $b\in B_s$. For $s=1$ it clearly holds.

Suppose the hypothesis holds for $s-1$. The elimination step (Line~\ref{line:master_te_elim_step}) in \cref{alg:master_alg_te} implies that for any $b\in B_s$, we have $w^{(s-1)}_t(b) \leq 2^{-(s-1)}$, $w^{(s-1)}_t(\hb^*_t)\leq 2^{-(s-1)}$ and
\begin{equation*}
u_{t,i_{s-1}}^{(s-1)}(b) \ge u_{t,i_{s-1}}^{(s-1)}(\hb^*_t) - 2\cdot 2^{-(s-1)}
\end{equation*}
for the criterion $i_{s-1}\in\{0,1\}$ selected in stage $s-1$. Since $\hb^*_t\in B_{s-1}$ by the induction hypothesis and $w^{(s-1)}_t(b) = \min\bra{w^{(s-1)}_{t,0}(b), w^{(s-1)}_{t,1}(b)}$ by definition, by \cref{lem:TE_width},
\[
u_{t,i_{s-1}}^{(s-1)}(\hb^*_t) \geq \br_{t,i_{s-1}}\parr{\hb^*_t} \ge \br_{t,i_{s-1}}(b) \ge u_{t,i_{s-1}}^{(s-1)}(b) - 2w^{(s-1)}_t(b) \ge \br_{t,i_{s-1}}(b) - 2\cdot 2^{-(s-1)}
\]
for all $b\in B_{s-1}$. Therefore, together with the first claim of Lemma~\ref{lem:good_cdf_interval}, $\hb^*_t\in B_s$ after the elimination step at stage $s-1$. Also, for any $b\in B_s$,
\begin{align*}
\br_t(\hb^*_t) - \br_t(b) &= \br_{t,i_{s-1}}(\hb^*_t) -\br_{t,i_{s-1}}(b)\\
&\stepa{\leq} u_{t,i_{s-1}}^{(s-1)}(\hb^*_t) - u^{(s-1)}_{t,i_{s-1}}(b) + 2w^{(s-1)}_t(b)\\
&\stepb{\le} u_{t,i_{s-1}}^{(s-1)}(\hb^*_t) - \max_{b\in B_{s-1}}u^{(s-1)}_{t,i_{s-1}}(b) + 2\cdot 2^{-(s-1)} + 2w^{(s-1)}_t(b)\\
&\le 8\cdot 2^{-(s-1)} = 16\cdot 2^{-s}
\end{align*}
where (a) uses \cref{lem:TE_width} and (b) is by the elimination criterion in \cref{alg:master_alg_te}. Finally, since $s\le S\le \log\sqrt{T}$, we also have $\br_t(b_t^*) - \br_t(\hb^*_t)\le \frac{1}{\sqrt{T}} \le 2^{-s}$ by \cref{lem:bounded_regret_by_bid_discrete}. This concludes the induction and hence the proof.
\end{proof}

\subsection{Proof of \cref{lem:elliptical-potential-TE}}

Then we are ready for the proof of \cref{lem:elliptical-potential-TE}.
\begin{repeatlemma}[\cref{lem:elliptical-potential-TE}]\label{lem:elliptical-potential-TE_app}
For any $s\in[S]$, suppose $\delta_t<0.1$ for $t\in\Phi^{(s)}$ and we have
\[
\sum_{t\in\Phi^{(s)}}w^{(s)}_t(b_t) = O\parr*{\sqrt{\Delta dT}\log^2T + d\Delta\log T + \sum_{t\in\Phi^{(s)}} \delta_t}.
\]
\end{repeatlemma}
\begin{proof}{Proof.}
Fix any $s\in[S]$. Write $\bA_t=\bA_t^{(s)}$ and $\gamma = 1+ c_1\log(2T) + c_2\sqrt{\sum_{t\in\Phi^{(s)}}\delta_t^2}$ for simplicity. For each time $t$, we write the parameter $\gamma_t = 1+ c_1\log(2T) +  c_2\sqrt{\sum_{\tau\in\Phi_t^{(s)}}\delta_\tau^2}$ defined in Algorithm~\ref{alg:base_alg_te} for clarity.
Summing over the widths in this stage gives:
\begin{align*}
    \sum_{t\in\Phi^{(s)}}w^{(s)}_t(b_t) &= \sum_{t\in\Phi^{(s)}} c_3L\gamma_t\|\bx_t\|_{\bA_t^{-1}}\parr*{\min\bra{\widehat{G}_t(b),1-\widehat{G}_t(b)} + c_3L\parr*{\gamma_t\|\bx_t\|_{\bA_t^{-1}} + 4\delta_t}} + 4\delta_t \\
    &\stepa{\leq} 2c_3\underbrace{\sum_{t\in\Phi^{(s)}} L\gamma_t\|\sigma_t^{-1} \bx_t\|_{\bA_t^{-1}}}_{\text{(A)}} + (c_3L)^2\underbrace{\sum_{t\in\Phi^{(s)}} \gamma_t^2\| \bx_t\|^2_{\bA_t^{-1}}}_{\text{(B)}} + 4(c_3L)^2\underbrace{\sum_{t\in\Phi^{(s)}} \gamma_t\| \bx_t\|_{\bA_t^{-1}}\delta_t}_{\text{(C)}} + 4\sum_{t\in\Phi^{(s)}} \delta_t\numberthis \label{eq:elliptical_te_eq0}
    % + 2c_3L\gamma_t\sqrt{\frac{T}{d}}\|\sigma_t^{-1} \bx_t\|_{\bA_t^{-1}}^2 + 20c_3L\gamma_t^2\|\sigma_t^{-1} \bx_t\|_{\bA_t^{-1}}^2 \\
    % &\stepb{\leq} \sqrt{2}c_3L\gamma\sqrt{d|\Phi^{(s)}|\log\abs*{\Phi^{(s)}}} + 4c_3L\gamma\sqrt{dT}\log\parr*{1+\frac{\abs*{\Phi^{(s)}}}{d}} + 40c_3L\gamma^2d \log\parr*{1+\frac{\abs*{\Phi^{(s)}}}{d}} \\
    % &\leq cL\gamma\sqrt{dT}\log T + cL\gamma^2d\log T
\end{align*}
where (a) applies $\min\bra{\widehat{G}_t(b),1-\widehat{G}_t(b)} \le 2 \widehat{G}_t(b)\parr{1-\widehat{G}_t(b)}$. First, by \cref{lem:sum_width_bound2_FPA2} and $\gamma_t\leq \gamma$,
\begin{equation}\label{eq:elliptical_te_eq1}
\text{(A)} = O\parr*{\gamma\sqrt{d|\Phi^{(s)}|\log\abs*{\Phi^{(s)}}}} = O\parr*{\gamma\sqrt{dT\log T}}.
\end{equation}
To bound (B), recall that $\|\widehat{G}_t-G_t\|_{\infty}\leq 2\delta_t$ by \cref{ass:est_oracle_bern}. Thus by the truncation step in Line 20 of Algorithm~\ref{alg:master_alg_te} and \cref{lem:truncation}, it holds that
\[
\min\bra*{\gamma_t\sqrt{\frac{d}{T}}, \frac{1}{2}} \leq \widehat{G}_t(b_t) \leq \max\bra*{1-\gamma_t\sqrt{\frac{d}{T}}, \frac{1}{2} + 4\delta_t}
\]
Recall $\sigma_t^{-2} = \widehat{G}_t(b_t)(1-\widehat{G}_t(b_t)) \le \min\bra{\widehat{G}_t(b_t), 1-\widehat{G}_t(b_t)}.$ It holds that
\[
\|\bx_t\|_{\bA_t^{-1}}^2 \leq \max\bra*{\gamma_t^{-1}\sqrt{\frac{T}{d}}, 10}\|\sigma_t^{-1} \bx_t\|_{\bA_t^{-1}}^2 \leq  \parr*{\gamma_t^{-1}\sqrt{\frac{T}{d}} + 10}\|\sigma_t^{-1} \bx_t\|_{\bA_t^{-1}}^2
\]
since $\delta_t<0.1$ and $\frac{1}{2}+4\delta_t<0.9$. Then we can proceed to
\begin{align*}
\text{(B)} &\le \sum_{t\in\Phi^{(s)}} \gamma_t\sqrt{\frac{T}{d}}\| \sigma_t^{-1}\bx_t\|^2_{\bA_t^{-1}} + 10\gamma_t^2\sqrt{\frac{T}{d}}\| \sigma_t^{-1}\bx_t\|^2_{\bA_t^{-1}}= O\parr*{\gamma\sqrt{dT}\log T + \gamma^2 d\log(T)}\\
&= O\parr*{\sqrt{\Delta dT}\log T + \sqrt{dT}\log^2 T + d\Delta \log T + d\log^2 T} \numberthis\label{eq:elliptical_te_eq2}
\end{align*}
by \cref{lem:sum_width_bound2_FPA2} again. Finally, by Cauchy-Schwartz inequality and AM-GM inequality,
\begin{equation}\label{eq:elliptical_te_eq3}
    \text{(C)} \le \sqrt{\text{(B)}}\cdot\sqrt{\Delta} \le O\parr*{\gamma\sqrt{dT}\log T + \gamma^2 d\log T + \Delta} = O\parr*{\sqrt{\Delta dT}\log T + d\Delta \log T + d\log^2 T}
\end{equation}
since $\gamma^2 = O(\log T + \Delta)$.
Plugging (\ref{eq:elliptical_te_eq1}--\ref{eq:elliptical_te_eq3}) into \eqref{eq:elliptical_te_eq0} yields the claim.
\end{proof}

\subsection{Proof of \cref{lem:adapted_master_confidence_bound_te}}

\begin{repeatlemma}[\cref{lem:adapted_master_confidence_bound_te}]
In \cref{alg:joint_three_master}, for every $t\ge T_0$ and $s\in[S]$, it holds that $\br_t(b_t^*)-\br_t(b) \leq 17\cdot 2^{-s}$ for all $b\in B_s$.
\end{repeatlemma}
\begin{proof}{Proof.}
The proof of this result is a combination of \cref{lem:TE_width} and \ref{lem:master_confidence_bound_te}. The difference mainly lies in the additional exploration step in Line 10 of \cref{alg:joint_three_master}. Fix any $t\ge T_0$. For $s=1$ the claim trivially holds. Now we apply an inductive argument over $s$. Suppose the claim holds up to stage $s-1$.

If the algorithm enters the exploration condition in Line 10 at stage $s-1$, then it will not move to stage $s$ and the claim is vacuously true. Suppose now it moves past Line 10 in stage $s-1$, which implies 
\[
\abs*{\bhtheta_t^\top \bx_t - \btheta_*^\top \bx_t} + \|\widehat{G}_t - G_t\|_\infty \le \min_{b\in \Bcal}\max\bra{w^{(s-1)}_{t,0}(b), w^{(s-1)}_{t,1}(b)} \le c^2
\]
by \cref{lem:adapted_WLS} and \cref{ass:est_oracle_weaker_bern} for the constant $c=\frac{1}{60L}$. Then the condition in \cref{lem:good_cdf_interval} is satisfied and the UCB selection routine \cref{alg:ucb_selection} works as intended. Following verbatim the proof of \cref{lem:TE_width}, we have for the selected UCB index $i_{s-1}\in\bra{0,1}$,
\[
u_{t,i_{s-1}}(b) - 2\min\bra{w^{(s-1)}_{t,0}(b), w^{(s-1)}_{t,1}(b)} \le \br_{t,i_{s-1}}(b) \le u_{t,i_{s-1}}(b).
\]
Then the elimination step (Line 19) of \cref{alg:joint_three_master} implies that $w^{(s-1)}_t(b) = \min\bra{w^{(s-1)}_{t,0}(b), w^{(s-1)}_{t,1}(b)} \le 2^{-(s-1)}$ for all $b\in B_{s-1}$. Then following verbatim the proof of \cref{lem:master_confidence_bound_te}, we have $\hb_t^*\in B_s$ and for every $b\in B_s$,
\[
\br_t(b_t^*) - \br_t(b) \le \br_t(\hb_t^*) - \br_t(b) + \frac{1}{\sqrt{T}} \le 16\cdot 2^{-s} + \frac{1}{\sqrt{T}} \le 17\cdot 2^{-s}
\]
since $s\le S \le \log\sqrt{T}$ and by \cref{lem:bounded_regret_by_bid_discrete}.
\end{proof}

\subsection{Proof of \cref{lem:elliptical-potential-binary}}
\begin{repeatlemma}[\cref{lem:elliptical-potential-binary}]
For any $s\in[S]$ in \cref{alg:joint_three_master}, we have
\[
\sum_{t\in\Phi^{(s)}_1}w^{(s)}_t(b_t) = O\parr*{\sqrt{\Delta dT}\log^2 T + d\Delta\log T + \sum_{t\in\Phi^{(s)}_1} \delta_t + \xi|\Phi^{(s)}_1|}
\]
where we recall that $\Phi^{(s)}_1\subseteq [T]\backslash[T_0]$ denotes the set of time indices belonging to stage $s\in[S]$ and selected when $w^{(s)}_t(b_t)>2^{-s}$.
\end{repeatlemma}
\begin{proof}{Proof.}
This proof is a simpler version of \cref{lem:elliptical-potential-TE}. Let $\gamma_t = 1 + c_1\log(2T) + c_2\sqrt{\sum_{\tau\in\Phi_t^{(s)}}(\delta_\tau^2 + \xi^2)}$ as defined in \cref{alg:base_alg_three}. We have
\begin{align*}
\sum_{t\in\Phi^{(s)}_1}w^{(s)}_t(b_t) &= \sum_{t\in\Phi^{(s)}_1} 120L\gamma_t\|\bx_t\|_{\bA_t^{-1}}\min\bra{\widehat{G}_t(b),1-\widehat{G}_t(b)} + 4\delta_t + 2\xi\\
&\le \sum_{t\in\Phi^{(s)}_1} 240L\gamma_t\|\sigma_t^{-2}\bx_t\|_{\bA_t^{-1}} + 4\delta_t + 2\xi.
\end{align*}
By \eqref{eq:elliptical_te_eq1}, we have
\begin{align*}
\sum_{t\in\Phi^{(s)}_1}w^{(s)}_t(b_t) &= O\parr*{\sqrt{d|\Phi^{(s)}_1|\log T\sum_{t\in\Phi^{(s)}_1}\parr*{\delta_t^2 + \xi^2}} + \sum_{t\in\Phi^{(s)}_1}\delta_t + \xi|\Phi^{(s)}_1|}\\
&= O\parr*{\sqrt{\Delta dT\log T} + \sum_{t\in\Phi^{(s)}_1}\delta_t + \xi|\Phi^{(s)}_1|}
\end{align*}
where the second line follows from the subadditivity of square root.
\end{proof}

\subsection{Proof of \cref{lem:bound_exploration_binary}}
\begin{repeatlemma}[\cref{lem:bound_exploration_binary}]
In \cref{alg:joint_three_master}, it holds that
\[
\sum_{s\in[S]}|\Phi^{(s)}_0| = O\parr*{d\Delta\log^2 T + \xi T}.
\]
\end{repeatlemma}
\begin{proof}{Proof.}
To bound the number of exploration times, recall that we are considering time $t\ge T_0$ where $\|\delta_t\|_\infty \le \frac{c}{8}$ for constant $c=\frac{1}{60L}$ as in \cref{thm:main-lte-binary}. Fix any stage $s\in[S]$. A time index $t\in \Phi^{(s)}_0$ if
\[
\min_{b\in\Bcal}120L\gamma_t\|\bx_t\|_{\bA_t^{-1}}\max\bra{\widehat{G}_t(b),1-\widehat{G}_t(b)} + 4\delta_t + 2\xi > c
\]
where $\gamma_t = 1 + c_1\log(2T) + c_2\sqrt{\sum_{\tau\in\Phi_t^{(s)}}(\delta_\tau^2 + \xi^2)}$ as defined in \cref{alg:base_alg_three}, which implies
\[
240L\gamma\|\bx_t\|_{\bA_t^{-1}} + 4\delta_t + 2\xi > c.
\]
where $\gamma = 1 + c_1\log(2T) + c_2\sqrt{\sum_{t\in\Phi^{(s)}}(\delta_t^2 + \xi^2)} \ge \gamma_t$. Since $\|\delta_t\|_\infty \le \frac{c}{8}$, this implies
\[
240L\gamma\|\bx_t\|_{\bA_t^{-1}} + 2\xi > \frac{c}{2}.
\]
Since $t\in\Phi^{(s)}_0$ belongs to exploration, $\sigma_t^2 = 4$ and 
\[
960L\gamma\|\sigma_t^{-2}\bx_t\|_{\bA_t^{-1}} + 2\xi > \frac{c}{2}.
\]
By the elliptical potential lemma \cref{lem:sum_width_bound2_FPA2}, we have
\[
\frac{c}{2}\abs{\Phi^{(s)}_0} < \sum_{t\in\Phi^{(s)}_0}\parr*{960L\gamma\|\sigma_t^{-2}\bx_t\|_{\bA_t^{-1}} + 2\xi} \le 960L\gamma\sqrt{d\abs{\Phi^{(s)}_0}\log T} + 2\xi \abs{\Phi^{(s)}_0}.
\]
By some algebra and $\gamma^2 = O(\log T + \Delta + \xi^2\abs{\Phi^{(s)}_0})$, it holds that
\[
\abs{\Phi^{(s)}_0} = O\parr*{d\gamma^2 + \xi \abs{\Phi^{(s)}_0}} = O\parr*{d\Delta\log T +\xi \abs{\Phi^{(s)}_0} + d\xi^2 \abs{\Phi^{(s)}_0}\log T + d\log^2 T}.
\]
\end{proof}

\subsection{Proof of \cref{lem:linear_hob_error}}\label{app:linear_hob}
The proof of \cref{lem:linear_hob_error} is a direct consequence of \cref{lem:sum_width_bound2_FPA2} given the following explicit expression of $\{\delta_t\}_t$:

\begin{lemma}\label{lem:bernstein_for_gaussian_noise}
Under the conditions of \cref{lem:linear_hob_error}, there is an estimator $\widehat{G}_t$ for $G_t$ that satisfies \cref{ass:est_oracle_bern} with 
\begin{align*}
\delta_t=O\parr*{\log^{3/2}(T)\sqrt{\frac{d}{t}} + \log(T) \|\bx_t\|_{\Sigma_t^{-1}} }.
\end{align*}
Here $\Sigma_t = 18I + \sum_{s<t}\bx_s\bx_s^{\top}$ is the regularized Gram matrix at time $t$.
\end{lemma}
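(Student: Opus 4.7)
The plan is to build $\widehat{G}_t(b) = \widehat{Q}_t(b - \widehat{\varphi}_t^\top x_t)$ in two stages: first estimate $\varphi_*$ by ridge regression, $\widehat{\varphi}_t = \Sigma_t^{-1}\sum_{s<t} x_s M_s$ (with regularization $\lambda=18$ as hard-coded in the lemma), and then estimate $Q$ by the empirical CDF $\widehat{Q}_t(v) = (t-1)^{-1}\sum_{s<t}\indic[\widehat{\eta}_s \leq v]$ of the residuals $\widehat{\eta}_s = M_s - \widehat{\varphi}_t^\top x_s$. Writing $\widehat{v} = b - \widehat{\varphi}_t^\top x_t$ and $v_* = b - \varphi_*^\top x_t$, and introducing the infeasible empirical CDF $\bar Q_t(v) = (t-1)^{-1}\sum_{s<t}\indic[\eta_s \leq v]$ of the true i.i.d.\ noise, I would decompose
\begin{align*}
\widehat{G}_t(b) - G_t(b) = \underbrace{[\widehat{Q}_t(\widehat{v}) - \bar Q_t(\widehat{v})]}_{(\mathrm{I})} + \underbrace{[\bar Q_t(\widehat{v}) - Q(\widehat{v})]}_{(\mathrm{II})} + \underbrace{[Q(\widehat{v}) - Q(v_*)]}_{(\mathrm{III})}.
\end{align*}

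For (III), I would apply the self-normalized martingale bound for ridge regression along the \emph{fixed} direction $x_t$, which gives $|x_t^\top(\widehat{\varphi}_t - \varphi_*)| \lesssim \sqrt{\log T}\,\|x_t\|_{\Sigma_t^{-1}}$ with probability $\geq 1 - T^{-2}$ (crucially without a $\sqrt{d}$ factor, since we avoid a union bound over directions), and then Taylor-expand $Q$ around $v_*$; merging the tail bound $Q'(v) \lesssim \sqrt{Q(v)(1-Q(v))}$ from Assumption~\ref{ass:flat_noise_tail}(2) with the bulk Lipschitz bound $L$ from Assumption~\ref{ass:Gt} (which gives the analogous bound on $\{Q(v)\in [a_0,1-a_0]\}$) together with $|Q''| \leq a_2$ yields $|(\mathrm{III})| \lesssim \sqrt{G_t(b)(1-G_t(b))}\,\log T\,\|x_t\|_{\Sigma_t^{-1}} + \log T\,\|x_t\|_{\Sigma_t^{-1}}^2$. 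For (II), pointwise Bernstein applied to the Bernoulli$(Q(v))$ indicators $\indic[\eta_s \leq v]$, combined with a $1/t$-grid union bound over $v$ and the monotonicity of $\bar Q_t - Q$, gives $|(\mathrm{II})| \lesssim \sqrt{Q(\widehat{v})(1-Q(\widehat{v}))\log T/t} + \log T/t$, and the same density bound turns $\sqrt{Q(\widehat{v})(1-Q(\widehat{v}))}$ into $\sqrt{G_t(b)(1-G_t(b))}$ up to the same kind of quadratic slack.

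The hardest step is (I): because $\widehat{\varphi}_t$ depends on the entire past, the residuals $(\widehat{\eta}_s)_{s<t}$ are not i.i.d.\ draws from $Q$, so standard empirical-process tools do not apply directly. My plan is to start from the pointwise identity $|\indic[\widehat{\eta}_s \leq v] - \indic[\eta_s \leq v]| \leq \indic[|v - \eta_s| \leq |\alpha_s|]$, where $\alpha_s = x_s^\top(\widehat{\varphi}_t - \varphi_*)$; taking conditional expectation over $\eta_s$ and using the density bound produces an average shift error at $\widehat{v}$ of order $C\sqrt{G_t(b)(1-G_t(b))}\cdot(t-1)^{-1}\sum_{s<t}|\alpha_s| + a_2\,(t-1)^{-1}\sum_{s<t}\alpha_s^2$. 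Bounding $(t-1)^{-1}\sum_s|\alpha_s| \leq \sqrt{(t-1)^{-1}\sum_s\alpha_s^2} \leq (t-1)^{-1/2}\|\widehat{\varphi}_t-\varphi_*\|_{\Sigma_t}$ by Cauchy--Schwarz and invoking the \emph{full} self-normalized bound $\|\widehat{\varphi}_t - \varphi_*\|_{\Sigma_t} \lesssim \sqrt{d\log T}$ (this is where the dimension cost is paid) produces an average shift error of order $\sqrt{G_t(b)(1-G_t(b))\cdot d\log T/t} + d\log T/t$; an additional $\sqrt{\log T}$ factor from concentrating (I) around its conditional expectation (e.g.\ a Bernstein bound on the bounded-increment martingale in $s$) pushes this to $\log^{3/2}T\sqrt{d/t}$. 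Summing (I), (II), (III) and reading off coefficients yields $|\widehat{G}_t(b) - G_t(b)| \lesssim \delta_t\sqrt{G_t(b)(1-G_t(b))} + \delta_t^2$ with $\delta_t = O(\log^{3/2}T\sqrt{d/t} + \log T \|x_t\|_{\Sigma_t^{-1}})$; all quadratic remainders (the Taylor second-order from (III), the $\log T/t$ additive from (II), and the $d\log T/t$ from (I)) fit inside the $\delta_t^2$ slack allowed by Assumption~\ref{ass:est-oracle-2}.
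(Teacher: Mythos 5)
Your decomposition of $\widehat G_t - G_t$ into the regression-shift term (III), the empirical-CDF fluctuation (II), and the residual-mismatch term (I) captures the right obstacles, and your handling of (II) and (III) matches what is needed; but your plan for (I) has a genuine gap, and it is precisely the gap the paper's proof is designed to avoid. You propose to concentrate (I) around its conditional expectation "by a Bernstein bound on the bounded-increment martingale in $s$." However, when $\widehat\varphi_t$ is computed from the full past $[t-1]$, the increment $\indic[\widehat\eta_s \le \widehat v] - \indic[\eta_s \le \widehat v]$ depends through $\widehat\varphi_t$ (both in $\widehat\eta_s = M_s - \widehat\varphi_t^\top x_s$ and in $\widehat v = b - \widehat\varphi_t^\top x_t$) on \emph{all} of $\eta_1,\dots,\eta_{t-1}$, not just on $\eta_1,\dots,\eta_s$. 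So $(\indic[\widehat\eta_s \le \widehat v])_{s<t}$ is not adapted to the natural filtration and there is no martingale to which Azuma/Bernstein applies. For the same reason your conditional expectation step is circular: conditioning on $\widehat\varphi_t$ (hence on $\alpha_s$) changes the law of $\eta_s$, so $\E[\indic[|\widehat v - \eta_s|\le |\alpha_s|]\,|\,\widehat\varphi_t]$ is not $Q(\widehat v + |\alpha_s|) - Q(\widehat v - |\alpha_s|)$. The obvious repairs do not preserve the claimed rate: a deterministic per-$s$ envelope $|\alpha_s|\le C\sqrt{d\log T}\,\|x_s\|_{\Sigma_t^{-1}}$ decouples the randomness but inflates $(t-1)^{-1}\sum_s|\alpha_s|$ to order $d\log T/\sqrt t$ rather than $\sqrt{d\log T/t}$ (you lose a $\sqrt d$ by going pointwise instead of using $\|\widehat\varphi_t-\varphi_*\|_{\Sigma_t}$ globally), and a net/union argument over an ellipsoid of candidate $\widehat\varphi_t$'s costs an extra $\sqrt d$ in the Bernstein bound.

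The paper sidesteps this entirely with a data-splitting device: it uses \cref{lem:KS_data_splitting} (a consequence of the Marcus--Spielman--Srivastava resolution of the Kadison--Singer problem) to extract a subset $S_t\subseteq[t-1]$ with $|S_t|\le (t-1)/2$ whose regularized Gram matrix $A_t = 18I + \sum_{s\in S_t}x_sx_s^\top$ still satisfies $A_t\succeq \Sigma_t/9$. The regression $\widehat\varphi_t$ is fit on $S_t$ alone, while the empirical CDF of shifted residuals is built on $S_t^c$. Conditional on $\widehat\varphi_t$, the indicators $X_s(b)=\indic[M_s - \widehat\varphi_t^\top x_s + \widehat\varphi_t^\top x_t\le b]$ for $s\in S_t^c$ are then genuinely independent Bernoullis, so Bernstein applies cleanly, and the Kadison--Singer split guarantees that the regression confidence widths $\|\cdot\|_{A_t^{-1}}$ are comparable to $\|\cdot\|_{\Sigma_t^{-1}}$ up to a constant. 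Your approach is not a different valid route to the result; to make (I) go through you would need to introduce this (or an equivalent) decoupling step.
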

\begin{proof}{Proof.}
Given the Lipschitzness in \cref{ass:Gt}, we consider the discretized bid space $\widehat{\Bcal} = \bra{\frac{i}{T}: i\in\{0\}\cup[T]}$. At each time $t$, we estimate the parameter $\varphi_*$ via linear regression with a careful data splitting scheme: Denote $\Sigma_t = 18I + \sum_{s<t}\bx_s \bx_s^{\top}$. By \cref{lem:KS_data_splitting}, there is a subset $S_t\subseteq [t-1]$ such that $|S_t|\leq \frac{t-1}{2}$ and $18I + \sum_{s\in S_t}\bx_s \bx_s^{\top}\succeq \frac{\Sigma_t}{9}$. Define $\bA_t = 18I + \sum_{s\in S_t}\bx_s \bx_s^{\top}$ and $\bz_t = \sum_{s\in S_t}M_s \bx_s$, and set the estimator $\widehat{\varphi}_t = \bA_t^{-1}\bz_t$. Following the same proof of \cref{lem:WLS}, with probability at least $1-T^{-2}/2$, we have a confidence bound that
\begin{equation}\label{eq:temp_HOB_width}
\abs*{\widehat{\varphi}_t^{\top}x - \varphi_*^{\top}x} \leq \beta\|x\|_{\bA_t^{-1}}
\end{equation}
for $\beta = O(\sigma\sqrt{\log T})$ and all $x\in \{\bx_t - \bx_{s}: s\in [t-1]\}$, where $\sigma$ is the sub-Gaussian parameter in \cref{ass:flat_noise_tail}. Construct the empirical CDF as follows: let $X_s(b) = \indic[M_s - \widehat{\varphi}_t^{\top}\bx_s + \widehat{\varphi}_t^{\top}\bx_t\leq b]$ and $\widehat{G}_t(b) = \frac{1}{|S_t^c|}\sum_{s\in S_t^c}X_s(b)$ using the remaining data $S_t^c=[t-1]\backslash S_t$. Note
\begin{align*}
\E[X_s(b)] &= \mathbb{P}\parr*{M_s - \widehat{\varphi}_t^{\top}\bx_s + \widehat{\varphi}_t^{\top}\bx_t\leq b}\\
&= Q\parr*{b-\varphi_*^{\top}\bx_s + \widehat{\varphi}_t^{\top}\bx_s - \widehat{\varphi}_t^{\top}\bx_t}\\
&= Q\parr*{b-\varphi_*^{\top}\bx_t} + Q\parr*{b-\varphi_*^{\top}\bx_t +\varepsilon_s} - Q\parr*{b-\varphi_*^{\top}\bx_t} \numberthis\label{eq:indicator_bias_Mt}
\end{align*}
where $Q(\cdot)$ denotes the CDF of the noise and $\varepsilon_s = (\widehat{\varphi}_t-\varphi_*)^{\top}(\bx_s-\bx_t)$. By \eqref{eq:temp_HOB_width}, the error term $\varepsilon_s$ satisfies 
\begin{equation}\label{eq:est_error_in_Q}
|\varepsilon_s| \leq \beta\|\bx_s-\bx_t\|_{\bA_t^{-1}}.
\end{equation}
Note $Q\parr*{b-\varphi_*^{\top}\bx_t}=\mathbb{P}(M_t\leq b)=G_t(b)$, so the bias of this indicator variable is in the difference of the last two terms in \eqref{eq:indicator_bias_Mt}. 

Write the target value as $v=\varphi_*^{\top}\bx_t-b$ for simplicity. 
Since $\E[X_s(b)] = Q(v+\varepsilon_s)$, its variance is $\Var(X_s(b)) = Q(v+\varepsilon_s)(1-Q(v+\varepsilon_s))$. Now we apply Bernstein's inequality (cf. Lemma~\ref{lem:bernstein}) to $\widehat{G}_t(b)$ and a union bound to obtain, with probability at least $1-T^{-2}/2$, for all $b\in\widehat{\Bcal}$, 
\begin{equation*}
\abs*{\widehat{G}_t(b) - \frac{1}{|S_t^c|}\sum_{s\in S_t^c}Q\parr*{v +\varepsilon_s}} \leq \frac{2\sqrt{2\log(2T)}}{|S_t^c|}\sqrt{\sum_{s\in S_t^c}Q(v+\varepsilon_s)(1-Q(v+\varepsilon_s))} + \frac{8\log(2T)}{3|S_t^c|}.
\end{equation*}
For each $s<t$, we can decompose
\begin{align*}
Q(v+\varepsilon_s)(1-Q(v+\varepsilon_s)) &= Q(v)(1-Q(v)) + Q(v+\varepsilon_s)- Q(v+\varepsilon_s)^2 - Q(v) + Q(v)^2\\
&= Q(v)(1-Q(v)) + \parr*{Q(v+\varepsilon_s) - Q(v)}\parr*{1-Q(v+\varepsilon_s)-Q(v)}\\
&\leq Q(v)(1-Q(v)) + \abs*{Q(v+\varepsilon_s) - Q(v)}.
\end{align*}
Together with the bias term in \eqref{eq:indicator_bias_Mt} and that $\sqrt{a+b}\leq \sqrt{a}+\sqrt{b}$ for $a,b>0$, we obtain an error bound for our CDF estimator:
\begin{align*}
\abs*{\widehat{G}_t(b) - G_t(b)} &\leq 2\sqrt{\frac{2\log(2T)Q(v)(1-Q(v))}{|S_t^c|}} + \frac{2\sqrt{2\log(2T)}}{|S_t^c|}\sqrt{\sum_{s\in S_t^c}\abs*{Q(v+\varepsilon_s) - Q(v)}}\\
&\quad + \frac{8\log(2T)}{3|S_t^c|} + \frac{1}{|S_t^c|}\sum_{s\in S_t^c}\abs*{Q(v+\varepsilon_s) - Q(v)}.
\end{align*}
If $\sum_{s\in S_t^c}\abs*{Q(v+\varepsilon_s) - Q(v)}\leq 1$, since $|S_t^c|\geq \frac{t-1}{2}$, it becomes
\begin{align*}
\abs*{\widehat{G}_t(b) - G_t(b)} &\leq 2\sqrt{\frac{2\log(2T)Q(v)(1-Q(v))}{|S_t^c|}} + \frac{2\sqrt{2\log(2T)}}{|S_t^c|} + \frac{11\log(2T)}{3|S_t^c|}\\
&\leq 4\sqrt{\frac{2\log(2T)Q(v)(1-Q(v))}{t}} + \frac{8\sqrt{2\log(2T)}}{t} + \frac{15\log(2T)}{t}
\end{align*}
and we are done.
When $\sum_{s\in S_t^c}\abs*{Q(v+\varepsilon_s) - Q(v)}>1$, the bound can be simplified to
\begin{equation}\label{eq:bernstein_error_Gt}
\abs*{\widehat{G}_t(b) - G_t(b)} \leq 2\sqrt{\frac{2\log(2T)Q(v)(1-Q(v))}{|S_t^c|}} + \frac{4\sqrt{\log(2T)}}{|S_t^c|}\sum_{s\in S_t^c}\abs*{Q(v+\varepsilon_s) - Q(v)} + \frac{3\log(2T)}{|S_t^c|}.
\end{equation}
The key lies in upper bounding the second term. Note we have 
\[
\abs*{Q(v+\varepsilon_s) - Q(v)} \leq Q'(v)|\varepsilon_s| + \|Q''\|_\infty\varepsilon_s^2.
\]
Let $a_0,a_1,a_2>0$ be the constants in \cref{ass:flat_noise_tail}. When $Q(v)\in(a_0,1-a_0)$, we bound the second term as
\begin{align*}
\sum_{s\in S_t^c}\abs*{Q(v+\varepsilon_s) - Q(v)} &\leq L\beta\sum_{s\in S_t^c}\|\bx_s-\bx_t\|_{\bA_t^{-1}} + a_2\beta^2\sum_{s\in S_t^c}\|\bx_s-\bx_t\|_{\bA_t^{-1}}^2\\
&\leq a_0^{-1}L\beta \sqrt{Q(v)(1-Q(v))}\sum_{s\in S_t^c}\parr*{\|\bx_s\|_{\bA_t^{-1}} + \|\bx_t\|_{\bA_t^{-1}}} + 2a_2\beta^2\sum_{s\in S_t^c}\parr*{\|\bx_s\|_{\bA_t^{-1}}^2 + \|\bx_t\|_{\bA_t^{-1}}^2}\\
&\stepa{\leq} a_0^{-1}L\beta\sqrt{Q(v)(1-Q(v))}\parr*{\sqrt{2d|S_t^c|\log(T)} + 3|S_t^c|\|\bx_t\|_{\Sigma_t^{-1}}}\\
&\quad + 2a_2\beta^2\parr*{2d\log(T)+ 9|S_t^c|\|\bx_t\|_{\Sigma_t^{-1}}^2} + c\beta^2\numberthis\label{eq:sum_bias_bound1}
\end{align*}
for a constant $c>0$, where (a) uses \cref{lem:sum_width_bound2_FPA2} and the properties of $S_t$. On the other hand, when $Q(v)\in[0,a_0]\cup[1-a_0,a]$, we have
\begin{align*}
\sum_{s\in S_t^c}\abs*{Q(v+\varepsilon_s) - Q(v)} &\leq a_1\beta\sqrt{Q(v)(1-Q(v))}\sum_{s\in S_t^c}\|\bx_s-\bx_t\|_{\bA_t^{-1}} + a_2\beta^2\sum_{s\in S_t^c}\|\bx_s-\bx_t\|_{\bA_t^{-1}}^2\\
&\stepa{\leq} a_1\beta\sqrt{Q(v)(1-Q(v))}\parr*{\sqrt{2d|S_t^c|\log(T)} + 3|S_t^c|\|\bx_t\|_{\Sigma_t^{-1}}}\\
&\quad + 2a_2\beta^2\parr*{2d\log(T)+ 9|S_t^c|\|\bx_t\|_{\Sigma_t^{-1}}^2} + c\beta^2. \numberthis\label{eq:sum_bias_bound2}
\end{align*}
Plugging \eqref{eq:sum_bias_bound1} and \eqref{eq:sum_bias_bound2} into \eqref{eq:bernstein_error_Gt} and substituting in $|S_t^c|\geq \frac{t-1}{2} \geq \frac{t}{4}$, the CDF estimation error is bounded as desired: for every $b\in\widehat{\Bcal}$, with probability at least $1-T^{-2}/2$,
\begin{equation*}
\abs*{\widehat{G}_t(b) - G_t(b)} \leq \delta_t\sqrt{Q(v)(1-Q(v))} + \delta_t^2
\end{equation*}
where $\delta_t = O\parr*{\log^{3/2}(T)\sqrt{d/t} + \log(T)\|\bx_t\|_{\Sigma_t^{-1}}}$, and we recall that $Q(v)=G_t(b)$.

For any $b\in\Bcal$ in the continuous bid space, let its discretized neighbors be $h_1=h_2-T^{-1}\leq b \leq h_2 $ for $h_1,h_2\in\widehat{\Bcal}$. Thanks to the Lipschitzness of $Q$, for both $h=h_1,h_2$, it holds that $G_t(h)(1-G_t(h))\leq G_t(b)(1-G_t(b)) + \frac{L}{T}$. Hence by monotonicity of $\widehat{G}_t$ and the triangular inequality, with probability at least $1-T^{-2}/2$,
\begin{align}\label{eq:HOB_error_bound_cts}
\abs*{\widehat{G}_t(b)-G_t(b)} &\leq \max_{h\in \{h_1,h_2\}} \abs*{\widehat{G}_t(h)-G_t(h)} + \frac{L}{T}\nonumber \\
&\le \delta_t\left(\sqrt{G_t(b)(1-G_t(b))} + \sqrt{\frac{L}{T}}\right) + \delta_t^2 + \frac{L}{T} \nonumber \\
&= O(\delta_t)\sqrt{G_t(b)(1-G_t(b))} + O(\delta_t^2).
\end{align}
The proof is completed by taking a union bound over \eqref{eq:temp_HOB_width}, \eqref{eq:HOB_error_bound_cts}, and all time indices $t\in[T]$.

\end{proof}

\subsection{Proof of \cref{lem:hob_est_error_bound}}
We first present the error bound for the CDF estimator \textbf{without} the monotonicity enforcement step in \cref{alg:hob_est}.
Recall the definition of the CDF estimator from \eqref{eq:offline_G_est}: Let $h=T^{-\frac{1}{3}}$ and $I_1,\dots, I_J$ be an interval partition of $[0,1]$ such that $|I_j|\in [h,2h]$ and $J\le \lfloor h^{-1}\rfloor$. The estimator is defined as
\begin{equation*}
g_j \coloneqq \frac{1}{\abs{\bra{\tau\in\Phi_t:b_\tau\in I_j}}} \sum_{\tau\in\Phi_t:b_\tau\in I_j}\indic[M_\tau\le b_\tau],\qquad 
\widehat{G}^{(0)}_{t}(b) \coloneqq \sum_{j=1}^J \indic[b\in I_j]g_j.
\end{equation*}

\begin{repeatlemma}[\cref{lem:hob_est_error_bound}.]
Suppose $h\in(0,1)$ and the intervals $\bra{I_j}_{j\in[J]}$ partition $[0,1]$ and $|I_j|\le 2h$ for every $j$. Also suppose $(\indic[M_\tau\le b_\tau])_{\tau\in\Phi_t}$ are conditionally independent given $(\bx_\tau, b_\tau)_{\tau\in\Phi_t}$. With probability $1-T^{-2}$, for every $b\in[0,1]$
\begin{equation*}
\abs*{\widehat{G}^{(0)}_t(b) - G(b)} \le 
\sqrt{G(b)(1-G(b))}\delta_t(b) + \delta_t(b)^2 + 3Lh,
\end{equation*}
where $\delta_t(b) = c\sqrt{\frac{\log(2T/h)}{N_t(b)}}$ for constant $c=8$. Here $I(b)$ denotes the interval $I_j\ni b$ and $N_t(b) = |\bra{\tau\in \Phi_t: b_\tau\in I(b)}|$ denotes the number of neighbor observations in $\Phi_t$.
\end{repeatlemma}
\begin{proof}{Proof.}
Since $\widehat{G}^{(0)}_t$ is a piece-wise constant local average over the intervals $\bra{I_j}_{j\in[J]}$ as defined in \cref{alg:hob_est}, it suffices to show that the desired bound holds over each interval. Let $I_+=\bra{b\in[0,1]: G(b)\in[Lh,1-Lh]}$ denote the interval where the true CDF is large. First, fix any $I_j\subseteq I_+$ and $b_j$ the center of $I_j$. Then we have the estimation error
\begin{align*}
\abs*{\widehat{G}^{(0)}_t(b_j) - G(b_j)} &= \abs*{\frac{1}{N_t(b_j)}\sum_{\tau\in \Phi_t: b_\tau\in I_j}\indic[M_\tau\le b_\tau] - G(b_j)}
% &\le \abs*{\frac{1}{|B_j|}\sum_{\tau\in B_j}\indic[M_\tau\le b_j] - G(b_j)} + \abs*{\frac{1}{|B_j|}\sum_{\tau\in B_j}\parr*{\indic[M_\tau\le b_\tau] - \indic[M_\tau\le b_j]}}.
\end{align*}
By Freedman's inequality (\cref{lem:freedman}), with probability at least $1-T^{-\frac{7}{3}}$,
\begin{equation}\label{eq:cdf_est_error_proof_eq1}
\abs*{\frac{1}{N_t(b_j)}\sum_{\tau\in \Phi_t:b_\tau\in I_j}\indic[M_\tau\le b_\tau] - G(b_j)} \le \sqrt{\frac{10G(b_j)(1-G(b_j))\log(2T)}{N_t(b_j)}} + \frac{5\log(2T)}{3N_t(b_j)} + Lh
\end{equation}
where we use $\Var(\indic[M_\tau\le b_\tau])= G(b_\tau)(1-G(b_\tau)) \le 2G(b_j)(1-G(b_j))$ when $b_\tau\in I_j\subseteq I_+$, the Lipschitzness of $G$, and $|b_j-b_\tau|\le h$ when $b_\tau\in I_j$.
Consequently, for every such index $j$, it holds that
\begin{equation*}
\abs*{\widehat{G}^{(0)}_t(b_j) - G(b_j)} \le \sqrt{G(b_j)(1-G(b_j))}\delta_t(b_j) + \delta_t(b_j)^2 + Lh
\end{equation*}
with probability $1-T^{-\frac{7}{3}}$, where $\delta_t(b_j) = 4\sqrt{\frac{\log(2T)}{N_t(b_j)}}$. For any other $b\in I_j$, we have $\widehat{G}^{(0)}_t(b)=\widehat{G}^{(0)}_t(b_j)$, $\delta_t(b)=\delta_t(b_j)$ by definition, and $G(b)(1-G(b))\le 2G(b_j)(1-G(b_j))$ when $I_j\subseteq I_+$ (because $|I_j|\le 2h$ and $G$ is Lipschitz). Consequently, it holds that
\begin{align*}
\abs*{\widehat{G}^{(0)}_t(b) - G(b)} &\le \abs*{\widehat{G}^{(0)}_t(b_j) - G(b_j)} + \abs*{G(b_j) - G(b)}\\
&\le 2\sqrt{G(b)(1-G(b))}\delta_t(b) + \delta_t(b)^2 + 2Lh
\end{align*}
with probability $1-T^{-\frac{7}{3}}$.

Now consider any interval $I_j$ such that $b_j\notin I_+$. WLOG we assume $I_j\subseteq [0,1]\backslash I_+$. Following the same argument, except that we trivially upper bound $G(b_\tau)(1-G(b_\tau)) \le Lh$ for any $b_\tau\in I_j\backslash I_+$, we get
\begin{equation*}
\abs*{\widehat{G}_t(b) - G(b)} \le \sqrt{Lh}\delta_t(b) + \delta_t(b)^2 + 2Lh \stepa{\le} 2\delta_t(b)^2 + 3Lh \le \sqrt{G(b)(1-G(b))}\delta_t(b) + 2\delta_t(b)^2 + 3Lh
\end{equation*}
for every $b\in I_j$ with probability $1-T^{-\frac{7}{3}}$ as desired. Step (a) follows from the AM-GM inequality. We will simply absorb the constant $2$ in $\delta_t$. Taking a union bound over the $J\le T^{\frac{1}{3}}$ intervals completes the proof.
\end{proof}

\subsection{Enforcing Monotonicity on CDF Estimator}\label{sec:enforce_monotonicity_details}

In this section, we provide the details of how to construct a monotone CDF estimator based on \cref{lem:hob_est_error_bound}. Again, if the error width were known, we can apply the recursive procedure in Figure \ref{fig:cdf}. It provides an intuitive procedure to find a monotone adaptation by shifting the empirical average $g_j$ in each interval $I_j$ recursively. The resulting estimator would enjoy the same error bound up to a factor of $2$. 

\begin{figure}[h!]
\centering

\begin{tikzpicture}[scale=6,>=stealth]
% ==== Horizontal ellipsis for first segment ====
\foreach \xx in {0.03,0.07,0.11}{
  \fill[gray] (\xx,0.26) circle (0.007);
}
\foreach \xx in {1.18,1.22,1.26}{
  \fill[gray] (\xx,0.63) circle (0.007);
}
  % ==== EDIT THESE INTERVALS (x_left/x_right/band_height) ====
  \foreach \xa/\xb/\band/\y/\cdf in {%
    0.14/0.34/0.08/0.27/0.31,
    0.34/0.54/0.14/0.36/0.34,
    0.54/0.74/0.22/0.38/0.34,
    0.74/0.94/0.14/0.5/0.43,
    0.94/1.14/0.1/0.6/0.55
  }{
    % y-level: midpoint (change if you want fixed levels)
    % \pgfmathsetmacro{\y}{0.5*(\xa+\xb)}
    \pgfmathsetmacro{\halfband}{0.5*\band}
    % clip to [0,1]
    \pgfmathsetmacro{\ymin}{max(0,\y-\halfband)}
    \pgfmathsetmacro{\ymax}{min(1,\y+\halfband)}

    % band + horizontal piece
    \fill[blue!12] (\xa,\ymin) rectangle (\xb,\ymax);
    \draw[blue,dashed]    (\xa,\ymin) rectangle (\xb,\ymax);
    \draw[very thick] (\xa,\y) -- (\xb,\y);
    \draw[very thick,red] (\xa,\cdf) -- (\xb,\cdf);
  }

  % ==== Axes ====
  \draw[->] (0,0) -- (1.4,0) node[below] {$x$};
  \draw[->] (0,0) -- (0,0.7) node[left] {$y$};
  \draw (0,0) -- (0,-0.02) node[below=2pt] {\small 0};
  \draw (1.32,0) -- (1.32,-0.02) node[below=2pt] {\small 1};
  \draw (0.54,0) -- (0.54,-0.02) node[below=2pt] {};
  \draw (0.74,0) -- (0.74,-0.02) node[below=2pt] {};
  \node[below] at (0.64,0) {\small $j_0$};
  \draw (0,0.34) -- (-0.02,0.34) node[left=2pt] {\small $\frac{1}{2}$};
  \draw[->, red, thick] (0.56,-0.05) -- (0.36,-0.05);
  \draw[->, red, thick] (0.72,-0.05) -- (0.92,-0.05);
  
  % \foreach \t in {0,1.35}{
  %   \draw (\t,0) -- (\t,-0.02) node[below=2pt] {\small \t};
  %   % \draw (0,\t) -- (-0.02,\t) node[left=2pt] {\small \t};
  % }
  % guides at internal breakpoints (optional)
  \foreach \x in {0.14,0.34,0.54,0.74,0.94,1.14}{\draw[dashed,gray!60] (\x,0) -- (\x,0.7);}

  % ==== Legend (top-right) ====
  % a small boxed legend inside the plotting area
  \begin{scope}
    % legend box
    \draw[fill=white,draw=black] (0.72,0.04) rectangle (1.4,0.25);
    % mean sample
    \draw[very thick] (0.76,0.2) -- (0.96,0.2);
    \node[anchor=west] at (0.98,0.2) {\small empirical mean};
    % cdf sample
    \draw[very thick,red] (0.76,0.14) -- (0.96,0.14);
    \node[anchor=west] at (0.98,0.14) {\small monotone value};
    % width sample
    \fill[blue!12] (0.76,0.06) rectangle (0.96,0.1);
    \draw[blue]    (0.76,0.06) rectangle (0.96,0.1);
    \node[anchor=west] at (0.98,0.08) {\small width};
  \end{scope}
\end{tikzpicture}

\caption{\raggedright \textbf{Monotone CDF Adjustment.} This figure illustrates a procedure to enforce monotonicity on a piece-wise constant estimator, when the confidence width in each interval is known. The procedure begins with a middle interval $j_0$ and then iterates through the intervals $j=j_0-1,\dots,1$ (resp. $j=j_0+1,\dots,J$) and sets the estimated value to be either the upper confidence bound (resp. the lower confidence bound) or the value of the previous interval. Its validity is guaranteed when these confidence widths hold, i.e. the true CDF $G$ lies within the shaded regions.}
\label{fig:cdf}
\end{figure}

To be more precise, we first identify the ``middle'' region $j_0$ where we can set $g_{j_0}$ to $\frac{1}{2}$, then (1) loop through the intervals $j=j_0-1,\dots, 1$ and recursively set the estimated value to be either the \textit{upper} confidence bound or the value of the previous interval, and (2) loop through the intervals $j=j_0+1,\dots,J$ and recursively set the estimated value to be either the \textit{lower} confidence bound or the value of the previous interval. This procedure is summarized in \cref{alg:enforce_monotone}. At a high level, it finds a monotone variant of the HOB CDF estimator that enjoys the same estimation error bound and ``concentrates'' more toward the value $\frac{1}{2}$. The ultimate goal of such concentration is to guarantee that
\[
\widehat{G}^{(0)}_t(b)(1-\widehat{G}^{(0)}_t(b)) \le \widehat{G}_t(b)(1-\widehat{G}_t(b))
\]
which would appear in the error width of our final estimator $\widehat{G}_t$.

The procedure is formalized in \cref{alg:enforce_monotone}, and its properties are summarized by \cref{lem:monotone_cdf_adjustment}.

\begin{algorithm}[h!]\caption{HOB Adjustment for Monotonicity}
\label{alg:enforce_monotone}
\textbf{Input:} Interval partition $\bra{I_j}_{j\in[J]}$, empirical averages $\bra{\widehat{g}_j}_{j\in[J]}$, confidence widths $\bra{u_j}_{j\in[J]}$.

Find index $j_0\gets \min\bra{j\in[J]: \frac{1}{2}\in [\widehat{g}_j-u_j, \widehat{g}_j + u_j]}$.

Set $\widehat{g}_{j_0}'\gets \frac{1}{2}$.

\For{interval index $j=j_0-1$ \KwTo $1$}{
Set $\widehat{g}_j'\gets \min\bra{\widehat{g}_{j+1}, \widehat{g}_j + u_j}$.
}

\For{interval index $j=j_0+1$ \KwTo $J$}{
Set $\widehat{g}_j'\gets \max\bra{\widehat{g}_{j-1}, \widehat{g}_j - u_j}$.
}

Output estimator $\widehat{G}$ with $\widehat{G}(b) = \sum_{j=1}^J \indic[b\in I_j] \widehat{g}_j'$.
\end{algorithm}

\begin{lemma}\label{lem:monotone_cdf_adjustment}
Let $\bra{I_j}_{j\in[J]}$ be any partition of $[0,1]$ and $G$ be a Lipschitz CDF. Fix any values $\bra{(\widehat{g}_j, u_j)}_{j\in[J]}$. Suppose for each $j\in[J]$, $G(b)\in[\widehat{g}_j-u_j, \widehat{g}_j + u_j]$ for every $b\in I_j$. Then the output $\widehat{G}$ of \cref{alg:enforce_monotone} is monotone and satisfies: For every $j\in[J]$ and $b\in I_j$,
\begin{enumerate}
    \item $\abs*{G(b) - \widehat{G}(b)} \le 2u_j$;

    \item $\sqrt{\widehat{g}_j(1-\widehat{g}_j)} \le \sqrt{\widehat{G}(b)(1-\widehat{G}(b))}$.
\end{enumerate}
\end{lemma}
\begin{proof}{Proof.}
When $G(b)\in [\widehat{g}_j-u_j, \widehat{g}_j+u_j]$ for every interval index $j\in[J]$ and every $b\in I_j$, the first claim is trivially true since $\widehat{G}(b)\in [\widehat{g}_j-u_j, \widehat{g}_j+u_j]$ as well. The second claim follows from the fact that, by construction of $\widehat{G}$ in \cref{alg:enforce_monotone}, $\abs{\widehat{G}(b) - \frac{1}{2}} \le \abs{\widehat{g}_j - \frac{1}{2}}$ for every $j\in[J]$ and $b\in I_j$.
\end{proof}

The procedure in \cref{alg:enforce_monotone} crucially relies on the knowledge of the confidence widths, which is not known from \cref{lem:hob_est_error_bound} due to the presence of $G(b)$. Fortunately, we can find a good approximation of it. In particular, over the regime where $G(b)$ is bounded away from $0$ and $1$ by $h$, the next result allows us to use $\widehat{G}^{(0)}_t(b)$ in computing the width.

\begin{lemma}\label{lem:hatG_for_G_conf_width}
Fix any $h,\delta\in[0,\frac{1}{2}]$ and scalars $\widehat{g},g\in[h,1-h]$. Suppose $\abs*{\widehat{g} - g} \le \delta\sqrt{g(1-g)} + \delta^2 + ch$ such that $\delta \le \sqrt{h}$ for some factors $c\ge 1$. Then it holds that
\[
\abs*{\widehat{g} - g} \le c_0\cdot \delta\sqrt{\widehat{g}(1-\widehat{g})} + \delta^2 + ch
\]
with an extra factor $c_0 = 4(c+2)$.
\end{lemma}
\begin{proof}{Proof.}
The proof follows from a case study. Since $\widehat{g}(1-\widehat{g}) = \Theta(\min\bra{\widehat{g}, 1-\widehat{g}})$, we first focus on the upper bound that only involves $\widehat{g}$. Suppose for now $g\le 2\widehat{g}$. Then
\begin{equation}\label{eq:knowing_width_eq1}
\abs*{\widehat{g} - g} \le \delta\sqrt{g} + \delta^2 + c'h \le 2\delta\sqrt{\widehat{g}} + \delta^2 + ch.
\end{equation}
Then when $g> 2\widehat{g}$, we have
\begin{equation}\label{eq:knowing_width_eq2}
\frac{1}{2}g < \abs*{\widehat{g} - g}\le \delta\sqrt{g} + \delta^2 + ch \le (2 + c)\sqrt{gh}.
\end{equation}
which implies $\frac{g^2}{4} < (c + 2)^2gh$ and thus $g < 4(c + 2)^2h \le 4(c + 2)^2\widehat{g}$. Consequently, combining \eqref{eq:knowing_width_eq1} and \eqref{eq:knowing_width_eq2} gives
\[
\abs*{\widehat{g} - g} \le 2(c +2)\delta\sqrt{\widehat{g}} + \delta^2 + ch.
\]
By the same argument applied to $1-g$ and $1-\widehat{g}$ and noting $\abs*{\widehat{g} - g} = \abs*{(1-\widehat{g}) - (1-g)}$, we also have
\[
\abs*{\widehat{g} - g} \le 2(c +2)\delta\sqrt{1-\widehat{g}} + \delta^2 + ch.
\]
Finally, since $\min\bra*{\sqrt{\widehat{g}}, \sqrt{1-\widehat{g}}} \le \sqrt{2\widehat{g}(1-\widehat{g})}$, the desired claim follows.
\end{proof}

\subsubsection{Proof of \cref{lem:monotone_cdf_adjustment_main}}
Finally, we can piece things together.

\begin{proof}{Proof of \cref{lem:monotone_cdf_adjustment_main}}
Based on \cref{lem:hatG_for_G_conf_width}, we have a \textit{computable} confidence width over the regime where $G(b), \widehat{G}^{(0)}_t(b)\in[0,h)\cup (1-h,1]$:
\begin{equation}\label{eq:computable_hob_width}
\abs*{G(b) - \widehat{G}^{(0)}_t(b)} \le c_0\cdot \sqrt{\widehat{G}^{(0)}_t(b)(1-\widehat{G}^{(0)}_t(b))}\delta_t(b) + \delta_t(b)^2 + 3Lh
\end{equation}
with a factor $c_0=O(1)$ when $\delta_t(b) \le \sqrt{h}$, i.e. the error parameter is relatively small. Then we can run \cref{alg:enforce_monotone} to find a monotone estimated CDF $\widehat{G}_t$ that enjoys the same error bound \eqref{eq:computable_hob_width} up to an additional factor of $2$. By \cref{lem:monotone_cdf_adjustment},
\begin{align*}
\abs*{G(b) - \widehat{G}_t(b)} &\le 2c_0\cdot \sqrt{\widehat{G}^{(0)}_t(b)(1-\widehat{G}^{(0)}_t(b))}\delta_t(b) + \delta_t(b)^2 + 3Lh\\
&\le 2c_0\cdot \sqrt{\widehat{G}_t(b)(1-\widehat{G}_t(b))}\delta_t(b) + \delta_t(b)^2 + 3Lh
\end{align*}
where the second line follows from \cref{lem:monotone_cdf_adjustment}. Finally, to recover an error bound in terms of $G(b)$ instead of $\widehat{G}_t(b)$, we apply \cref{lem:hatG_for_G_conf_width} again with $g=\widehat{G}_t(b)$ and $\widehat{g} = G(b)$ to have
\begin{equation}
\abs*{G(b) - \widehat{G}_t(b)} \le 2c_0^2\cdot \sqrt{G(b)(1-G(b))}\delta_t(b) + \delta_t(b)^2 + 3Lh.
\end{equation}
\end{proof}

\subsubsection{Bid Space Truncation to Know Widths}

To satisfy the conditions in \cref{lem:hatG_for_G_conf_width} to compute the error bound, we have the following result.

\begin{repeatlemma}[\cref{lem:truncated_bid_space}]
Consider the bid space $\Bcal_0= [b_{\min}, b_{\max}]$ in \cref{alg:joint_three_init}. With probability at least $1-T^{-2}$, it holds that
\begin{enumerate}
    \item $G(b)\in[Lh, 1-Lh]$ for every $b\in\Bcal_0$;

    \item $G(b)\in[0,25Lh]\cup [1-25Lh, 1]$ for $b\notin \Bcal_0$.
\end{enumerate}
\end{repeatlemma}
\begin{proof}{Proof.}
Suppose the event in \cref{lem:hob_est_error_bound} hold at time $t=JT_0$ i.e. the end of the initialization stage. Recall that in \cref{alg:joint_three_init}, we set $b_{\min} \gets \inf\bra{b\in[0,1]: \widehat{G}_0(b) \ge 9Lh}$. For the sake of contradiction, suppose $G(b_{\min}) < Lh$. Due to the sampling scheme in \cref{alg:joint_three_init}, for every $b\in [0,1]$, $N_t(b) = |\bra{\tau\le t: b_\tau \in I(b)}|\ge 16\log(2T)L^{-1}h^{-1}$, where $I(b)\in \bra{I_j}_{j\in[J]}$ is the interval containing $b$, and hence $\delta_t(b_{\min}) = 8\sqrt{\frac{\log(2T)}{N_t(b_{\min})}}\le 2\sqrt{Lh}$. By \cref{lem:hob_est_error_bound},
\begin{equation*}
\widehat{G}_0(b_{\min}) - G(b_{\min}) \le 2Lh + 4Lh + 2Lh = 8Lh.
\end{equation*}
Meanwhile, we have
\begin{equation*}
8Lh \le \widehat{G}_0(b_{\min}) - Lh < \widehat{G}_0(b_{\min}) - G(b_{\min}) \le 8Lh
\end{equation*}
which gives a contradiction. The other direction follows the same argument.

To prove the second half of the claim, we now upper bound $G(b_{\min})$. By construction, $\widehat{G}_0(b_{\min}) \le 9Lh$. Suppose now $G(b_{\min}) > 9Lh$. By \cref{lem:hob_est_error_bound}, 
\[
G(b_{\min}) - 9Lh \le 2\sqrt{LhG(b_{\min})} + 4Lh + 2Lh.
\]
Straightforward algebra gives $G(b_{\min}) \le 25Lh$. The other direction follows similarly.
\end{proof}

\subsection{Proof of \cref{lem:hob_est_error_bound_monotone}}

\begin{repeatlemma}[\cref{lem:hob_est_error_bound_monotone}]
Let $h=d^{\frac{1}{3}}T^{-\frac{1}{3}}$ and $\bra{I_j}_{j\in[J]}$ be a partition of $[0,1]$ with $|I_j|\in[h,2h]$. Suppose $\Bcal_0$ is given by \cref{alg:joint_three_init}. Let $\widehat{G}_t$ be the CDF estimator given by \cref{alg:hob_est} at time $t$. Then with probability at least $1-2T^{-2}$,
\begin{equation*}
\abs*{G(b) - \widehat{G}_t(b)} \le 
288\delta_t(b)\sqrt{G(b)(1-G(b))} + 2\delta_t(b)^2 + 25Lh, \qquad \text{for every $b\in [0,1]$}
\end{equation*}
where $\delta_t(b)$ is defined as in \cref{lem:hob_est_error_bound}.
\end{repeatlemma}
\begin{proof}{Proof.}
Let $\bra{\widehat{g}_j}$ be defined as in \cref{alg:hob_est} and $\widehat{G}^{(0)}_t(b) = \sum_{j=1}^J\indic[b\in I_j]\widehat{g}_j$. Thanks to the initialization samples and $\Phi\supseteq [Jn_0]$, we have $\delta_t(b) \le \sqrt{2Lh}$ for every $b\in [0,1]$. By a union bound over \cref{lem:hob_est_error_bound} and \ref{lem:truncated_bid_space} and by \cref{lem:hatG_for_G_conf_width}, with probability at least $1-2T^{-2}$, 
\[
G(b) \in [\widehat{g}_j - u_j, \widehat{g}_j+u_j],\qquad \text{for every $j\in[J]$ and $b\in I_j$.}
\]
Then by \cref{lem:monotone_cdf_adjustment}, for $b\in I_j$
\begin{align*}
\abs*{G(b) - \widehat{G}_t(b)} \le 2u_j &\le 24\delta_t(b)\sqrt{\widehat{g}_j(1-\widehat{g}_j)} + 2\delta_t(b)^2 + 25Lh\\
&\le 24\delta_t(b)\sqrt{\widehat{G}_t(b)(1-\widehat{G}_t(b))} + 2\delta_t(b)^2 + 25Lh\\
&\le 288\delta_t(b)\sqrt{G(b)(1-G(b))} + 2\delta_t(b)^2 + 25Lh
\end{align*}
where the last line applies \cref{lem:hatG_for_G_conf_width} again.
\end{proof}

\section{Proof of lower bounds}\label{app:lin_lower_bound}
In this section we prove the regret lower bounds in \cref{thm:lower-bound}. We begin with the analysis of the case when $\bx_t=\varnothing$, or when $\bx_t\equiv x$ for all $t$. 

\begin{theorem}\label{thm:lower_bound_adv}
Consider a special instance with i.i.d. $v_{t,1}\sim \mathrm{Bern}(\mu)$ for an unknown $\mu\in [0,1]$, $v_{t,0}\equiv 0$, and i.i.d. $M_t$ following a known distribution
\begin{align*}
M_t \sim \frac{1}{2}\mathrm{Unif}\left([0,\frac{1}{100}]\right) + \frac{1}{2}\mathrm{Unif}\left([\frac{1}{8}-\frac{1}{200},\frac{1}{8} + \frac{1}{200}]\right). 
\end{align*}
Then for some absolute constant $c>0$, it holds that
\begin{align*}
\inf_{\pi}\sup_{\mu\in [0,1]} \sum_{t=1}^T \left( \max_{b_*\in [0,1]} \E[r_t(b_*)] - \E[r_t(b_t)] \right) \ge c\sqrt{T}. 
\end{align*}
\end{theorem}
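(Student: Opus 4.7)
The plan is to apply Le Cam's two-point method with hypotheses $\mu_\pm := \tfrac14 \pm \varepsilon$ and $\varepsilon = c'/\sqrt{T}$ for a small absolute constant $c'>0$. The given HOB CDF $G$ consists of two linear ramps of slope $50$ on $[0,\tfrac{1}{100}]$ and $[\tfrac18 - \tfrac1{200}, \tfrac18 + \tfrac1{200}]$ separated by a flat plateau at height $\tfrac12$. Since $v_{t,0} \equiv 0$ and $\E[v_{t,1}]=\mu$, the expected reward is $\E[r_t(b)] = G(b)(\mu - b)$.

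The first step is to work out the reward landscape and show that, for $\mu$ in a neighborhood of $\tfrac14$, the two competing maximizers of $b \mapsto G(b)(\mu-b)$ on $[0,1]$ are the ``low'' bid $b_L := \tfrac{1}{100}$ (with reward $\tfrac12(\mu - \tfrac1{100})$) and the ``high'' bid $b_H := \tfrac{13}{100}$ (with reward $\mu - \tfrac{13}{100}$). On the first ramp the quadratic $50b(\mu - b)$ has critical point $b = \mu/2 \approx \tfrac18$, which lies outside $[0,\tfrac{1}{100}]$; a short calculation on the second ramp shows that its critical point also lies to the right of $b_H$; hence the two ramp maxima are attained at the right endpoints $b_L$ and $b_H$. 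These two reward values coincide exactly at $\mu = \tfrac14$, and under $\mu_\pm$ the gap between them is $\varepsilon/2$. Setting $b^* := \tfrac18 - \tfrac1{200}$, this implies that any bid $b \le b^*$ incurs expected regret at least $\varepsilon/2$ under $\mu_+$, and any bid $b > b^*$ incurs expected regret at least $\varepsilon/2$ under $\mu_-$.

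The second step is a standard KL bound. Letting $\mathbb{P}_\pm$ denote the law of the full trajectory under $\mu = \mu_\pm$ and the bidder's (possibly adaptive) policy, the feedback in round $t$ is $M_t$ together with $v_{t,1}\cdot\indic[b_t \ge M_t]$. Since the law of $M_t$ does not depend on $\mu$ and losing rounds yield a deterministic observation, the per-round conditional KL is at most $\mathrm{KL}(\mathrm{Bern}(\mu_+)\,\|\,\mathrm{Bern}(\mu_-)) \le C\varepsilon^2$ via the standard bound $\mathrm{KL}(p_1\|p_0) \le (p_1-p_0)^2/(p_0(1-p_0))$ for $p_0,p_1$ bounded away from $\{0,1\}$. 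The chain rule of KL then gives $\mathrm{KL}(\mathbb{P}_+\,\|\,\mathbb{P}_-) \le CT\varepsilon^2$, and Pinsker's inequality gives $\mathrm{TV}(\mathbb{P}_+,\mathbb{P}_-) \le \tfrac12$ as long as $\varepsilon \le 1/\sqrt{2CT}$.

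Finally, let $N_L$ and $N_H := T - N_L$ be the numbers of rounds with $b_t \le b^*$ and $b_t > b^*$. Step 1 gives $\Regadv(\pi) \ge (\varepsilon/2)\mathbb{E}_+[N_L]$ under $\mu_+$ and $\Regadv(\pi) \ge (\varepsilon/2)\mathbb{E}_-[N_H]$ under $\mu_-$. Since $N_L \in [0,T]$, the variational form of TV distance gives $|\mathbb{E}_+[N_L] - \mathbb{E}_-[N_L]| \le T\cdot\mathrm{TV}(\mathbb{P}_+,\mathbb{P}_-) \le T/2$, so $\mathbb{E}_+[N_L] + \mathbb{E}_-[N_H] \ge T/2$, and taking the worst of the two hypotheses yields $\Regadv \ge \varepsilon T/8 = \Omega(\sqrt{T})$. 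The main technical obstacle is the piecewise bookkeeping in Step 1 — verifying that the two ramp critical points sit outside their supports uniformly for all $\mu$ in a $\tfrac{1}{\sqrt{T}}$-neighborhood of $\tfrac14$, so that $\{b_L, b_H\}$ really is the right ``two-armed'' reduction; everything else is standard Le Cam machinery.
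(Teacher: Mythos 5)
Your proposal is correct and follows essentially the same route as the paper's proof: both are Le Cam two-point arguments with $\mu_\pm = \tfrac14 \pm \Theta(1/\sqrt{T})$, both identify the same two competing bids $b_L = \tfrac{1}{100}$ and $b_H = \tfrac{1}{8}+\tfrac{1}{200}$ whose rewards tie exactly at $\mu = \tfrac14$, and both extract the separation $\varepsilon/2$. The only cosmetic differences are that you close the argument with Pinsker's inequality plus a counting decomposition over $\{N_L, N_H\}$ using the variational characterization of $\mathrm{TV}$, whereas the paper bounds the per-round testing affinity $\int\min\{\mathrm{d}\mathbb{P}_1,\mathrm{d}\mathbb{P}_2\}$ directly and invokes the Bretagnolle--Huber inequality; these are interchangeable standard variants of the same machinery and yield the same $\Omega(\sqrt{T})$ conclusion.
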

\begin{proof}
We use Le Cam's two-point lower bound. Set $\mu_1 = \frac{1}{4}-\Delta$ and $\mu_2 = \frac{1}{4}+\Delta$, with $\Delta := \frac{1}{4\sqrt{T}}$. Let $\E[r_i(b)] = G(b)(\mu_i -b)$ be the expected payoff under the choice $\mu_i$, for $i=1,2$. By straightforward computation, we have
\begin{align*}
\max_{b\in[0,1]}\E[r_1(b)] &= \E[r_1(b_1^*)] = \frac{1}{8}-\frac{1}{200} - \frac{\Delta}{2}, \\
\max_{b\in[0,1]}\E[r_2(b)] &= \E[r_2(b_2^*)] = \frac{1}{8} + \Delta - \frac{1}{200},\\
\max_{b\in[0,1]}\E[r_1(b)] + \E[r_2(b)] &= \E[r_1(b_2^*)] + \E[r_2(b_2^*)] = \frac{1}{4} - \frac{1}{100}, 
\end{align*}
where $b_1^* := \frac{1}{200}$, and $b_2^* = \frac{1}{8}+\frac{1}{100}$. Consequently, for any $b_t\in[0,1]$, we have
\begin{align*}
&\max_{b\in[0,1]}\E[r_1(b) - r_1(b_t)] + \max_{b\in[0,1]}\E[r_2(b) - r_2(b_t)]\\
&\geq \max_{b\in[0,1]}\E[r_1(b)] + \max_{b\in[0,1]}\E[r_2(b)] - \max_{b\in[0,1]}\E[r_1(b)] + \E[r_2(b)]\\
&\geq \frac{\Delta}{2}. \numberthis\label{eq:lecam_separation}
\end{align*}
This shows that no single bid can perform well under both of the environments $\mu_1$ and $\mu_2$. For any fixed policy $\pi$, the environment $\mu_i$ gives rise to a distribution $\mathbb{P}_{i}^{\otimes T}$ over the time horizon $T$ for $i=1,2$. Denote the environment-specific regret by
\[
\Reglte_i(\pi) = \E_{\mathbb{P}_{i}^{\otimes T}}\parq*{\sum_{t=1}^T \max_{b\in[0,1]} \E[r_i(b) - r_i(b_t)]}
\]
for $i=1,2$, where $b_t$ is the bid chosen by policy $\pi$. Let $\nor{\cdot}_{\mathrm{TV}}$ denote the total variation distance. It follows that
\begin{align*}
\Reglte_1(\pi) + \Reglte_2(\pi) &= \sum_{t=1}^{T}\mathbb{P}_{1}^{\otimes t}\parr*{\max_{b\in[0,1]}\E[r_1(b) - r_1(b_t)]} + \mathbb{P}_{2}^{\otimes t}\parr*{\max_{b\in[0,1]}\E[r_2(b) - r_2(b_t)]}\\
&\overset{\eqref{eq:lecam_separation}}{\geq} \frac{\Delta}{2}\sum_{t=1}^{T}\int\min\bra*{\mathrm{d}\mathbb{P}_{1}^{\otimes t}, \mathrm{d}\mathbb{P}_{2}^{\otimes t}}\\
&\stepa{=} \frac{\Delta}{2}\sum_{t=1}^{T} \left(1-\|\mathbb{P}_{1}^{\otimes t}-\mathbb{P}_{2}^{\otimes t}\|_{\mathrm{TV}}\right)\\
&\stepb{\geq} \frac{\Delta T}{2} \parr*{1-\nor*{\mathbb{P}_{1}^{\otimes T}-\mathbb{P}_{2}^{\otimes T}}_{\mathrm{TV}}}
\end{align*}
where (a) uses $\int\min\bra*{\mathrm{d}P, \mathrm{d}Q} = 1-\nor{P-Q}_{\mathrm{TV}}$ and (b) follows from the data processing inequality for the total variation distance. By Lemma~\ref{lem:KL_bounds_TV} and the fact that 
\[
D_{\mathrm{KL}}\parr*{\mathrm{Bern}(p)^{\otimes T}\|\mathrm{Bern}(q)^{\otimes T}} = TD_{\mathrm{KL}}\parr*{\mathrm{Bern}(p)\|\mathrm{Bern}(q)} \leq \frac{T(p-q)^2}{q(1-q)},
\]
we arrive at
\begin{align*}
\Reglte_1(\pi) + \Reglte_2(\pi) &\geq \frac{\Delta T}{4}\exp\parr*{-O(T\Delta^2)}. 
\end{align*}
Finally, plugging in the choice of $\Delta=\frac{1}{4\sqrt{T}}$ leads to
\[
\max_{i=1,2}\Reglte_i(\pi) \geq \frac{\Reglte_1(\pi) + \Reglte_2(\pi)}{2} = \Omega(\sqrt{T}),
\]
establishing the theorem. 
\end{proof}

\subsection{Proof of \cref{thm:lower-bound}}
Note that \cref{thm:lower_bound_adv} holds with $L = O(1)$ in \cref{ass:Gt}. We proceed to show that the special lower bound instance in \cref{thm:lower_bound_adv} can be embedded in the lower bounds of \cref{thm:lower-bound}. Under the special case $v_{t,0}\equiv 0$ and $d\ge 2$, let
\begin{align*}
\btheta_{*} = \left(\frac{1}{2},\mathrm{Unif}\parr*{ \Big\{-2\Delta, 2\Delta \Big\}^{d-1} }\right),
\end{align*}
with $\Delta = \frac{1}{4}\sqrt{\frac{d-1}{T}}$. Since $T\ge d^2$, it holds that $\|\btheta_{*}\|_2\le 1$ almost surely. Divide the time horizon $T$ into $d-1$ sub-horizons $T_i$ for $i=1,\dots,d-1$ with equal length $\frac{T}{d-1}$, and let 
\begin{align*}
    \bx_t = \left(\frac{1}{2}, 0, \dots, 0, \frac{1}{2}, 0, \dots, 0\right)
\end{align*}
during the sub-horizon $T_i$, where the second $\frac{1}{2}$ appears in the $(i+1)$-th entry. As for $M_t$, we again use the i.i.d. distribution in \cref{thm:lower_bound_adv}. Consequently, the regret $\Reglte(\pi)$ can be decomposed as the sum of regrets from $d-1$ independent sub-problems, each of time duration $\frac{T}{d-1}$. By \cref{thm:lower_bound_adv}, we have
\begin{align*}
\inf_{\pi} \Reglte(\pi) = (d-1) \cdot \Omega\parr*{\sqrt{\frac{T}{d-1}}} = \Omega(\sqrt{dT}), 
\end{align*}
as claimed. For the case of $d=1$, we simply use a different construction $\btheta_{*}\sim \mathrm{Unif}(\{\frac{1}{4}-\Delta,\frac{1}{4}+\Delta\})$ and $\bx_t\equiv 1$, so that \cref{thm:lower_bound_adv} again gives a regret lower bound $\Omega(\sqrt{T})$.

\section{Auxiliary Lemmata}
\label{app:aux_lem}

\begin{lemma}[Elliptical potential lemma]\label{lem:sum_width_bound2_FPA2}
For any given vectors $\{\bz_\tau\}_{\tau=1}^{t-1}$ in $\R^d$ with $\|\bz_\tau\|_2\leq 1$, let the Gram matrix be $\bA_s = cI + \sum_{\tau<s}\bz_\tau \bz_\tau^{\top}$ for some $c>0$ and for every $1\leq s \leq t$. It holds that
\[
\sum_{\tau<t}\|\bz_\tau\|^2_{\bA_{\tau}^{-1}} \leq 2d\log\parr*{c+\frac{t-1}{d}} - 2d\log(c).
\]
In particular, by Cauchy-Schwartz inequality,
\[
\sum_{\tau<t}\|\bz_\tau\|_{\bA_{\tau}^{-1}} \leq \sqrt{2d(t-1)\log\parr*{c+\frac{t-1}{d}}} + \sqrt{2(t-1)d|\log(c)|}.
\]
\end{lemma}
\begin{proof}
We decompose the Gram matrix as follows:
\begin{align*}
\det\parr*{\bA_t} &= \det\parr*{\bA_{t-1} + \bz_{t-1} \bz_{t-1}^{\top}} \\
&= \det\parr*{\bA_{t-1}^{1/2}\parr*{\bI + \bA_{t-1}^{-1/2}\bz_{t-1} \bz_{t-1}^{\top}\bA_{t-1}^{-1/2}}\bA_{t-1}^{1/2}}\\
&\overset{\text{(a)}}{=} \det\parr*{\bA_{t-1}}\det\parr*{\bI + \bA_{t-1}^{-1/2}\bz_{t-1} \bz_{t-1}^{\top}\bA_{t-1}^{-1/2}}\\
&\overset{\text{(b)}}{=} \det\parr*{\bA_{t-1}}\parr*{1+\|\bz_{t-1}\|_{\bA_{t-1}^{-1}}^2} \\
&= \det\parr*{\bA_1}\prod_{\tau<t}\parr*{1+\|\bz_\tau\|_{\bA_\tau^{-1}}^2} \numberthis\label{eq:cov_matrix_decomp_1}
\end{align*}
where (a) uses $\det(AB) = \det(A)\det(B)$ and (b) uses that $\bI+vv^T$ only has eigenvalues $1$ and $1+\|v\|_2^2$. Since $\bA_1 = cI$, by AM-GM inequality,
\begin{equation}\label{eq:cov_matrix_decomp_2}
\det\parr*{\bA_{t}} \leq \parr*{\frac{\Tr\parr*{\bA_{t}}}{d}}^d = \parr*{\frac{cd+\sum_{\tau<t} \Tr\parr*{\bz_\tau \bz_\tau^{\top}}}{d}}^d \leq \parr*{c + \frac{t-1}{d}}^d.
\end{equation}
Note that for all $\tau<t$, we have  $\|\bz_\tau\|^2_{\bA_{\tau}^{-1}} \leq \|\bz_\tau\|_2^2\lambda^{-1}_{\min}\parr*{\bA_{\tau}} \leq 1$, and hence $\|\bz_\tau\|^2_{\bA_{\tau}^{-1}} \leq 2\log\parr*{1+\|\bz_\tau\|^2_{\bA_{\tau}^{-1}}}$. Combining \eqref{eq:cov_matrix_decomp_1} and \eqref{eq:cov_matrix_decomp_2}, we have
\begin{align*}
\sum_{\tau<t} \|\bz_\tau\|^2_{\bA_{\tau}^{-1}} &\leq 2\sum_{\tau<t}\log\parr*{1+\|\bz_\tau\|^2_{\bA_{\tau}^{-1}}} \\
&= 2\log\prod_{\tau<t}\parr*{1+\|\bz_\tau\|^2_{\bA_{\tau}^{-1}}} \\
&= 2\log\frac{\det\parr*{\bA_{t}}}{\det\parr*{\bA_1}} \\
&\leq 2d\log\parr*{c+\frac{t-1}{d}} - 2d\log(c).
\end{align*}
\end{proof}

\begin{lemma}\label{lem:approx_CDF_UCB_Lip}
Let $G$ be a CDF on $[0,1]$ with density upper bounded by $L$, and $\widehat{G}$ be another CDF with $\|G-\widehat{G}\|_\infty \le 0.1$. Let $\Bcal\subseteq [0,1]$ and $b_0 = \inf\bra{b\in\Bcal: \widehat{G}(b)\ge \frac{1}{2}}$. Suppose $\exists b\in\Bcal$ with $b\in [b_0- \kappa,b_0)$ for some $\kappa< \frac{1}{30L}$. Denote $\hb_*(v) = \argmax_{b\in\Bcal}\widehat{G}(b)\parr*{v-b}$ with tie broken by taking the maximum. For any $v_1\le v_2$, if $v_2 - v_1\le \frac{1}{15L}$, then
\begin{align*}
\widehat{G}(\widehat{b}_*(v_2)) - \widehat{G}(\widehat{b}_*(v_1)) \le 1 - \frac{1}{15L}. 
\end{align*}
\end{lemma}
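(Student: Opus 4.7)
I argue by contradiction. Suppose $\widehat{G}(b_2)-\widehat{G}(b_1) > 1-\frac{1}{20L}$, where $b_i := \widehat{b}_*(v_i)$. Set $\alpha := \widehat{G}(b_1)$ and $\beta := \widehat{G}(b_2)$, so that $\alpha < \frac{1}{20L}$ and $\beta > 1-\frac{1}{20L}$. A standard monotone-comparative-statics argument, combined with the max-tie-breaking rule, gives $b_1 \le b_2$. Using $\|\widehat{G}-G\|_\infty \le 0.1$ and the $L$-Lipschitzness of $G$, I deduce
\[
G(b_2)-G(b_1) \;\ge\; (\beta-\alpha)-0.2 \;>\; 0.7
\]
(valid in the nontrivial regime $L \ge 1$, so that $\frac{1}{20L} \le 0.05$; smaller $L$ renders the conclusion vacuous), and hence $D := b_2-b_1 > 0.7/L$.

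Next I introduce the indifference value $v^* := (\beta b_2-\alpha b_1)/(\beta-\alpha)$ at which bidding $b_1$ and bidding $b_2$ yield the same payoff. Writing out the optimality of $b_1$ at $v_1$ and of $b_2$ at $v_2$, together with the max-tie-breaking rule, yields the sandwich $v_1 < v^* \le v_2$. Combined with the hypothesis $v_2-v_1 \le \frac{1}{20L}$, this localizes $v_2$ and in particular bounds the slack at $b_2$:
\[
0 \;\le\; v_2-b_2 \;\le\; \frac{\alpha D}{\beta-\alpha} + \frac{1}{20L}.
\]
This is the quantitative consequence of the hypothesis that I will combine with a closeness-based lower bound.

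The heart of the argument is to exhibit an intermediate bid $b_m := b_2 - sD \in (b_1,b_2)$, for a parameter $s \in (0,1)$ to be chosen, on which two incompatible bounds on $\widehat{G}(b_m)$ collide. From optimality of $b_2$ at $v_2$, using $v_2 - b_m \ge sD$ (since $v_2 \ge b_2 = b_m + sD$),
\[
\widehat{G}(b_m) \;\le\; \frac{\beta(v_2-b_2)}{v_2-b_m} \;\le\; \frac{\beta}{sD}\left( \frac{\alpha D}{\beta-\alpha}+\frac{1}{20L} \right).
\]
On the other hand, $L$-Lipschitzness of $G$ and closeness of $\widehat{G}$ give
\[
\widehat{G}(b_m) \;\ge\; G(b_2)-LsD-0.1 \;\ge\; \beta-LsD-0.2.
\]

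I would then take $s = (\beta-0.2)/(2LD)$, which maximizes the gap between the two bounds and is feasible ($s \in (0,1)$ by the earlier estimate $LD > 0.7$). With this choice, the lower bound becomes $(\beta-0.2)/2 \ge 0.375$, while the upper bound, after substituting $\alpha < \frac{1}{20L}$, $\beta-\alpha > 0.9$, and $D \le 1$, evaluates to a quantity at most roughly $0.15$, producing the desired contradiction. The main technical obstacle is precisely this numerical verification: one must check that the gap closes \emph{uniformly} over the admissible range $D \in (0.7/L, 1]$, which is where the specific constants $0.1$ in $\|\widehat{G}-G\|_\infty$ and the $\frac{1}{20}$ factor in the hypothesis are calibrated; loosening either would break the contradiction and force a different target constant in the conclusion.
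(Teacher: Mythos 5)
Your argument is correct in its essential route, and it is genuinely different from the paper's. The paper's proof also begins by assuming $\widehat{G}(b_2)-\widehat{G}(b_1)>1-\tfrac{1}{20L}$, but then introduces a ``median'' bid $b_+ = \sup\{b:\widehat{G}(b)\le \tfrac12\}$, uses the Lipschitzness of $G$ (and $\|\widehat G - G\|_\infty\le 0.1$) to show $b_2 - b_+\ge \tfrac{1}{5L}$, and finally chains the single optimality comparison $\widehat{G}(b_1)(v_1-b_1)\ge \widehat{G}(b_+)(v_1-b_+)$ to arrive in a few lines at $\tfrac{1}{20L}>\tfrac{3}{40L}$. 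In that proof, $b_+$ does double duty: it is the intermediate test bid \emph{and} the witness for the Lipschitz lower bound on $D=b_2-b_1$. Your proof separates these two roles. You first bound $D>0.7/L$; then use the indifference value $v^*=(\beta b_2-\alpha b_1)/(\beta-\alpha)$ together with the sandwich $v_1<v^*\le v_2$ (which is the same content as the monotone-comparative-statics step the paper outsources to Lemma~\ref{lem:monotone_ucb_maximizers}) to bound $v_2-b_2$; and only then invoke a clash of upper and lower bounds on $\widehat{G}(b_m)$ at a third, optimized bid $b_m=b_2-sD$. This pattern is more systematic---it is the natural quantitative ``win--win'' argument and would adapt straightforwardly if the constants $0.1$ and $\tfrac{1}{20}$ were changed---but it costs you one extra free parameter $s$ and a two-term optimization. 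One small correction on the final arithmetic: at the optimal $s$, the lower bound is $(\beta-0.2)/2\ge 0.375$, but after substituting $\alpha<\tfrac{1}{20L}$, $\beta-\alpha> 1-\tfrac{1}{10L}\ge 0.9$, $\beta\le 1$, $D\le 1$, the upper bound $\frac{2L\beta}{\beta-0.2}\bigl(\frac{\alpha D}{\beta-\alpha}+\frac{1}{20L}\bigr)$ is bounded by about $0.28$ (worst case at $L=1$), not $0.15$; you appear to have dropped the $\frac{\alpha D}{\beta-\alpha}$ term. The contradiction $0.28<0.375$ still closes, so the conclusion stands.
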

\begin{proof}{Proof.}
For simplicity, write $b_1 = \hb_*(v_1)$ and $b_2=\hb_*(v_2)$. By \cref{lem:monotone_ucb_maximizers}, we have $\widehat{G}(b_1)\leq \widehat{G}(b_2)$. For the sake of contradiction, assume $\widehat{G}(b_2)-\widehat{G}(b_1) > 1-\frac{1}{15L}$, which implies $\widehat{G}(b_1)\leq \frac{1}{15L}$ and $\widehat{G}(b_2)> 1-\frac{1}{15L}$.

Define $b_0 = \inf\bra{b\in\Bcal: \widehat{G}(b)\ge \frac{1}{2}}$ and $b'\in\Bcal\cap [b_0-\kappa, b_0)$. We have $\widehat{G}(b_2) > 1-\frac{1}{15L} \geq \widehat{G}(b') + \frac{1}{2}-\frac{1}{15L}$ since $\widehat{G}(b')\le \frac{1}{2}$. Then $b_2 - b_0 > \frac{1}{5L}$, since otherwise by Lipschitzness of $G$ and $\|\widehat{G}-G\|_\infty \le 0.1$, 
\[
\frac{1}{2}-\frac{1}{15L}< \widehat{G}(b_2) - \widehat{G}(b') \le 0.2 + L(b_2-b_0+\kappa) \le \frac{2}{5} + L\kappa
\]
for $\kappa<\frac{1}{30L}$, which is a contradiction.\footnote{Since $G$ is a CDF, it holds that $L\geq 1$.} Since $\widehat{G}(b_0)\ge \frac{1}{2}$, we have the following inequalities:
\begin{align*}
\frac{1}{15L} &\geq \widehat{G}(b_1)(v_1-b_1) 
\geq \widehat{G}(b_0)(v_1-b_0)
\geq \frac{1}{2}\parr*{v_1-b_0}\\
&> \frac{1}{2}\parr*{v_1-b_2+\frac{1}{5L}}
\ge \frac{1}{2}\parr*{v_2-\frac{1}{15L}-b_2+\frac{1}{5L}}
\ge \frac{1}{2}\parr*{\frac{1}{5L}-\frac{1}{15L}}\\
&= \frac{1}{15L}
\end{align*}
which yields a contradiction. Hence the proof is complete.
\end{proof}

\begin{lemma}[Optimal bid is monotone in value]\label{lem:monotone_ucb_maximizers}
Let $G:[0,1]\rightarrow[0,1]$ be any CDF and let $v_1,v_0\in[0,1]$ satisfy $v_1\geq v_0$. Fix any $B\subseteq [0,1]$. Denote $b_*(v) = \argmax_{b\in B}G(b)\parr*{v-b}$ with tie broken by taking the maximum (or minimum). Then $b_*$ is increasing in $v$.
\end{lemma}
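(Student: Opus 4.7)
The plan is a standard monotone comparative statics (exchange) argument. Let $v_1 > v_0$ (the case $v_1 = v_0$ being trivial), and suppose for contradiction that $b_1 := b_*(v_1) < b_0 := b_*(v_0)$. I would first write down the two optimality inequalities
\[
G(b_1)(v_1 - b_1) \;\ge\; G(b_0)(v_1 - b_0), \qquad G(b_0)(v_0 - b_0) \;\ge\; G(b_1)(v_0 - b_1),
\]
and sum them. The $b_0$ and $b_1$ cross-terms cancel, leaving $(G(b_1) - G(b_0))(v_1 - v_0) \ge 0$. Since $v_1 > v_0$, this forces $G(b_1) \ge G(b_0)$, while the monotonicity of the CDF $G$ together with $b_1 < b_0$ gives $G(b_1) \le G(b_0)$. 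Hence $G(b_1) = G(b_0)$, i.e. $G$ is flat on $[b_1, b_0]$.

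The only remaining work is to rule out this flat-CDF case, and this is where the tie-breaking convention enters. If $G(b_1) = G(b_0) > 0$, then because $b_1 < b_0$ we get $G(b_1)(v_0 - b_1) > G(b_0)(v_0 - b_0)$, so $b_0$ is strictly beaten by $b_1$ at $v_0$, contradicting $b_0 \in \argmax$ at $v_0$. In the remaining subcase $G(b_1) = G(b_0) = 0$, the objective $b \mapsto G(b)(v_1 - b)$ vanishes at both $b_0$ and $b_1$; since $b_1 \in \argmax$ at $v_1$ with value $0$, the max at $v_1$ must equal $0$ (any $b$ with $G(b)(v_1-b) > 0$ would strictly dominate $b_1$). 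But then $b_0$ also lies in the argmax at $v_1$, and the max tie-breaking would force $b_*(v_1) \ge b_0 > b_1$, contradicting $b_*(v_1) = b_1$. The min-tie-breaking variant follows by a symmetric argument obtained by exchanging the roles of $b_0$ and $b_1$.

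The exchange step itself is a one-line calculation; the main (mild) obstacle I anticipate is the careful treatment of the flat-CDF subcases, where the monotonicity is not forced by the exchange inequality alone and one must invoke the tie-breaking convention to derive the final contradiction.
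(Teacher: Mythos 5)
Your proof is correct, and it rests on the same underlying monotone-comparative-statics/exchange idea as the paper's, but the decomposition is genuinely different. You sum both optimality inequalities to strip out the payoff terms and expose the cross term $(G(b_1)-G(b_0))(v_1-v_0)\ge 0$; combined with CDF monotonicity this forces $G(b_1)=G(b_0)$, after which you must separately rule out the flat-CDF subcases (positive versus zero value, with the tie-break rule entering only in the zero case). The paper instead uses only one optimality inequality (that of $b_0$ at $v_0$), folds the CDF monotonicity $G(b_1)\le G(b_0)$ into the same chain, and directly obtains $G(b_1)(v_1-b_1)\le G(b_0)(v_1-b_0)$ — i.e.\ $b_0$ is at least as good as $b_1$ at $v_1$ — which collapses immediately under max tie-breaking (and strictly under min tie-breaking, since $b_1<b_0=b_*(v_0)$ makes the $v_0$-optimality inequality strict). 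The paper's route is shorter and avoids the flat-CDF case split; yours isolates more cleanly where the tie-break hypothesis is actually needed (only when $G$ is flat and equal to zero on $[b_1,b_0]$), which is a worthwhile structural observation. Both handle the two tie-break conventions adequately.
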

\begin{proof}{Proof.}
Let $b_1=b_*(v_1)$ and $b_0=b_*(v_0)$. For the sake of contradiction, suppose $b_0>b_1$. Then
\begin{align*}
G(b_1)\parr*{v_1-b_1} &= G(b_1)\parr*{v_0-b_1} + G(b_1)(v_1-v_0)\\
&\stepa{\leq} G(b_0)\parr*{v_0-b_0} + G(b_0)(v_1-v_0)\\
&= G(b_0)\parr*{v_1-b_0}.
\end{align*}
If tie is broken by taking the maximum, then we have $b_1=b_*(v_1)\ge b_0>b_1$, a contradiction. If tie is broken by minimum, inequality (a) becomes strict because $b_*(v_0)=b_0>b_1$ and hence $G(b_1)\parr*{v_0-b_1} < G(b_0)\parr*{v_0-b_0}$; then we end up with a contradiction again.
\end{proof}

% \begin{lemma}\label{lem:optimal_bid_interval}
% In \cref{alg:master_alg_te}, at any time $t$ and any stage $s$, it holds that 
% \[
% b^{(s)}_{*,1}\leq \argmax_{b^*\in\Bcal}\hr_t(b^*) \leq b^{(s)}_{*,0}
% \]
% for the UCB maximizers $b^{(s)}_{*,0}$ and $b^{(s)}_{*,1}$ as defined in Algorithm~\ref{alg:base_alg_te}.
% \end{lemma}
% \begin{proof}
% Recall the definition of the optimal bid that 
% \[
% \hb_*(x_t) = \argmax_{b\in \Bcal} \hr_t(b) = \argmax_{b\in \Bcal} \widehat{G}_t(b)\parr*{\btheta_*^{\top}x_t - b}.
% \]
% At each stage $s$, let $\bhtheta_t$ be the estimated parameter in \cref{alg:base_alg_te}. By Lemma~\ref{lem:WLS}, we have $\abs*{\bhtheta_t^{\top}x_t - \btheta_*^{\top}x_t} \leq \gamma\|x_t\|_{{A_t^{(s)}}^{-1}}$, where $\gamma$ is defined in Line 7 of \cref{alg:base_alg_te}. Then since the mapping $v\mapsto \argmax_{b\in \Bcal} G(b)\parr*{v - b}$ is monotone by Lemma~\ref{lem:monotone_ucb_maximizers}, the claim follows from the definitions of $b^{(s)}_{*,0}$ and $b^{(s)}_{*,1}$ in Algorithm~\ref{alg:base_alg_te} and that $\hr_t(b^*) = \widehat{G}_t(b^*)(\btheta_*^{\top}x_t-b^*)$.
% \end{proof}

\begin{lemma}\label{lem:truncation}
Let $G$ be a continuous CDF on $[0,1]$, $v\in [0,1]$, and $t\in (0,\frac{1}{2}]$. Let $\widehat{G}$ be another CDF satisfying $\|G-\widehat{G}\|_{\infty} \le \varepsilon$. For $b\in [0,1]$, let 
$$b' = \min\{\max\{b, \widehat{G}^{-1}(t)\}, \widehat{G}^{-1}(1-t)\}.$$ 
Then $t\le \widehat{G}(b') \le 1-t+2\varepsilon$ and 
\begin{align*}
\widehat{G}(b)(v-b) - \widehat{G}(b')(v-b') \le t + 2\varepsilon. 
\end{align*}
\end{lemma}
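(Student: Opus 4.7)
The plan is to first pin down the range of $\widehat{G}(b')$, and then do a short case analysis on the reward. Write $b_\ell=\widehat{G}^{-1}(t)$ and $b_u=\widehat{G}^{-1}(1-t)$, so that $b'\in\{b_\ell,b,b_u\}$.

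For the lower bound $\widehat{G}(b')\ge t$, right-continuity of any CDF makes $\{x:\widehat{G}(x)\ge s\}$ closed, so the infimum in the definition of $\widehat{G}^{-1}(s)$ is attained, giving $\widehat{G}(b_\ell)\ge t$ and $\widehat{G}(b_u)\ge 1-t\ge t$; monotonicity of $\widehat{G}$ then covers the middle case $b'=b\ge b_\ell$. For the upper bound, the key is the sharper inequality $\widehat{G}(b_\ell)\le t+2\varepsilon$, which is what lets the reward gap scale with $t$. By definition of the infimum, $\widehat{G}(x)<t$ for every $x<b_\ell$, so continuity of $G$ together with $\|G-\widehat{G}\|_\infty\le\varepsilon$ yields
\begin{align*}
G(b_\ell)=\lim_{x\uparrow b_\ell}G(x)\le \lim_{x\uparrow b_\ell}\bigl(\widehat{G}(x)+\varepsilon\bigr)\le t+\varepsilon,
\end{align*}
and therefore $\widehat{G}(b_\ell)\le G(b_\ell)+\varepsilon\le t+2\varepsilon$. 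The symmetric argument gives $\widehat{G}(b_u)\le 1-t+2\varepsilon$, and the boundary cases $b_\ell=0$ or $b_u=1$ are handled directly using $G(0)=0$, $G(1)=1$, and closeness.

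For the reward bound, set $f(b)=\widehat{G}(b)(v-b)$. The case $b\in[b_\ell,b_u]$ is trivial since $b'=b$. When $b<b_\ell$, so $b'=b_\ell$, Step~1 gives $\widehat{G}(b)\le t$ and $\widehat{G}(b_\ell)\le t+2\varepsilon$; I would then split by the signs of $v-b$ and $v-b_\ell$. The critical case is $b\le v<b_\ell$, in which
\begin{align*}
f(b)-f(b_\ell)\le t(v-b)+(t+2\varepsilon)(b_\ell-v)=t(b_\ell-b)+2\varepsilon(b_\ell-v)\le t+2\varepsilon,
\end{align*}
using $b_\ell-b,b_\ell-v\in[0,1]$. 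The subcase $v\ge b_\ell$ gives $f(b)\le t$ and $f(b_\ell)\ge 0$, while the subcase $v<b$ gives $f(b)\le 0$ together with $-f(b_\ell)\le(t+2\varepsilon)\cdot 1$. The symmetric case $b>b_u$ uses $\widehat{G}(b)\le 1$ and $\widehat{G}(b_u)\ge 1-t$; one checks case-by-case that $f(b)-f(b_u)\le t$ throughout, e.g.\ via the decomposition $f(b)-f(b_u)=\widehat{G}(b)(b_u-b)+(v-b_u)(\widehat{G}(b)-\widehat{G}(b_u))$ combined with a sign analysis on $v-b_u$ and $v-b$.

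The main obstacle is the sharper upper bound $\widehat{G}(b_\ell)\le t+2\varepsilon$ (as opposed to the useless $\widehat{G}(b_\ell)\le 1$): without it the reward gap would scale like a constant rather than $t+2\varepsilon$, and the bound would be too weak to absorb the truncation loss into $\gamma_t\sqrt{d/T}+8\delta_t$ in \eqref{eq:bounded_truncation_loss}. This is exactly where continuity of $G$ enters---if $\widehat{G}$ were allowed to jump from just below $t$ to a value near $1$ at $b_\ell$, the lemma would fail. Once Step~1 is established, Step~2 is elementary bookkeeping.
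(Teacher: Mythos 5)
Your proof is correct and follows the same two-step approach as the paper: first establish $t \le \widehat{G}(b') \le 1-t+2\varepsilon$ via continuity of $G$ together with $\|G-\widehat{G}\|_\infty\le\varepsilon$, then bound the reward gap by elementary algebra. You are more explicit than the paper about why $\widehat{G}(\widehat{G}^{-1}(t)) \le t+2\varepsilon$ (the paper asserts this with ``by continuity of $G$ and the bounded error''), and you handle Step~2 by a sign-based case split where the paper uses the more compact single regrouping $\widehat{G}(b)(v-b)-\widehat{G}(b')(v-b') = v(\widehat{G}(b)-\widehat{G}(b')) + b'\widehat{G}(b') - b\widehat{G}(b)$, which sidesteps the case analysis on the sign of $v-b$.
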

\begin{proof}{Proof.}
If $\widehat{G}^{-1}(t)<b<\widehat{G}^{-1}(1-t)$, clearly the difference is zero. If $b\le \widehat{G}^{-1}(t)$, then by continuity of $G$ and the bounded error, we have $t \le \widehat{G}(b') \le t + 2\varepsilon$
and hence
\begin{align*}
\widehat{G}(b)(v-b) - \widehat{G}(b')(v-b') = v(\widehat{G}(b)-\widehat{G}(b')) + b'\widehat{G}(b')-b\widehat{G}(b) \le b'\widehat{G}(b') \le t + 2\varepsilon. 
\end{align*}
Finally, if $b\ge \widehat{G}^{-1}(1-t)$, then similarly we have $1-t\le \widehat{G}(b')\le 1-t+2\varepsilon$ and hence
\begin{align*}
\widehat{G}(b)(v-b) - \widehat{G}(b')(v-b') &= v(\widehat{G}(b)-\widehat{G}(b')) - (b\widehat{G}(b)-b'\widehat{G}(b')) \\
&\le \widehat{G}(b) - \widehat{G}(b')\le 1-\widehat{G}(b') \le t. 
\end{align*}
\end{proof}

\begin{lemma}[Regret by Bid Discretization]\label{lem:bounded_regret_by_bid_discrete}
Suppose \cref{ass:Gt} holds. Let $\Bcal=\bra*{\frac{j}{\sqrt{T}}: j\in [\lfloor\sqrt{T}\rfloor]}$ be a discretization of the bid interval $[0,1]$. Then $\br_t(b_t^*) - \max_{b\in\Bcal}\br_t(b) \le \frac{1}{\sqrt{T}}$.
\end{lemma}
\begin{proof}{Proof.}
The proof follows from a straightforward computation. Let $\hb_t^*\in\Bcal$ denote the bid closest to the optimal bid, i.e.
\[
\hb_t^* = \argmin_{b\in\Bcal} |b-b_t^*|.
\]
Then $\abs*{b_t^* - \hb_t^*} \le \frac{1}{\sqrt{T}}$ and 
\begin{align*}
\br_t(b_t^*) - \br_t(\hb_t^*) &= G_t(b_t^*)(\btheta_*^\top \bx_t - b_t^*) - G_t(\hb_t^*)(\btheta_*^\top \bx_t - \hb_t^*)\\
&\le \abs*{G_t(b_t^*) - G_t(\hb_t^*)} + G_t(\hb_t^*)\hb_t^* - G_t(b_t^*)b_t^*\\
&\le \frac{L}{\sqrt{T}} + G_t(\hb_t^*)\hb_t^* - G_t(\hb_t^*)b_t^* + G_t(\hb_t^*)b_t^* - G_t(b_t^*)b_t^*\\
&\le \frac{3L}{\sqrt{T}}
\end{align*}
where the first inequality uses $\abs*{\btheta_*^\top \bx_t}\le 1$ and the second and third inequalities uses the Lipschitzness of $G_t$ by \cref{ass:Gt}, $L\ge 1$, $b_t^*\in[0,1]$, and $\|G_t\|_\infty \le 1$.
\end{proof}

\begin{lemma}[Bernstein's inequality in \citep{boucheron2003concentration}]
\label{lem:bernstein}
Consider independent random variables $X_1,\dots, X_n\in[a,b]$. We have
\[
\mathbb{P}\parr*{\abs*{\sum_{i=1}^nX_i - \sum_{i=1}^n\E[X_i]}\geq \varepsilon} \leq 2\exp\parr*{-\frac{\varepsilon^2}{2(\sigma^2 + \varepsilon(b-a)/3)}}
\]
for any $\varepsilon>0$, where $\sigma^2 = \sum_{i=1}^n\Var(X_i)$.

In particular, it implies the following confidence bound: for any $\delta\in(0,1)$, with probability at least $1-\delta$, we have
    \[
    \frac{1}{n}\left|\sum_{i=1}^nX_i - \sum_{i=1}^n\E[X_i] \right|\leq \sqrt{\frac{2\sigma^2/n\log(2/\delta)}{n}} + \frac{2(b-a)\log(2/\delta)}{3n}.
    \]
\end{lemma}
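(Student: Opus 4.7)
The plan is to follow the standard Chernoff--Cramér recipe for sums of bounded independent random variables. First I would center the variables by setting $Y_i := X_i - \E[X_i]$, so that $\E[Y_i] = 0$, $|Y_i| \le b-a$, and $\Var(Y_i) = \Var(X_i)$. Writing $S_n := \sum_{i=1}^n Y_i$ and using Markov's inequality with an exponential, for any $\lambda \in (0, 3/(b-a))$,
\begin{equation*}
\Prob(S_n \ge \varepsilon) \le e^{-\lambda \varepsilon}\, \E[e^{\lambda S_n}] = e^{-\lambda \varepsilon}\, \prod_{i=1}^n \E[e^{\lambda Y_i}],
\end{equation*}
where the factorization uses independence.

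Next I would establish the key moment-generating-function estimate: for any zero-mean $Y$ with $|Y| \le c := b-a$ and $\Var(Y) = v$, and for $\lambda \in (0, 3/c)$,
\begin{equation*}
\E[e^{\lambda Y}] \le \exp\!\left(\frac{\lambda^2 v / 2}{1 - \lambda c / 3}\right).
\end{equation*}
The standard proof is to Taylor-expand $e^{\lambda Y} - 1 - \lambda Y = \sum_{k \ge 2} \lambda^k Y^k / k!$, take expectations using $\E[Y^k] \le v\, c^{k-2}$ for $k \ge 2$, recognize the resulting geometric series as $\lambda^2 v / (2(1 - \lambda c / 3))$, and finally apply $1 + u \le e^u$. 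Multiplying this bound across $i$ gives $\prod_i \E[e^{\lambda Y_i}] \le \exp(\lambda^2 \sigma^2 / (2(1 - \lambda c / 3)))$, so that
\begin{equation*}
\Prob(S_n \ge \varepsilon) \le \exp\!\left(-\lambda \varepsilon + \frac{\lambda^2 \sigma^2 / 2}{1 - \lambda c / 3}\right).
\end{equation*}
I would then optimize by choosing $\lambda = \varepsilon / (\sigma^2 + \varepsilon c / 3)$, which lies in the admissible range and yields the stated bound $\exp(-\varepsilon^2 / (2(\sigma^2 + \varepsilon c / 3)))$. Applying the same argument to $-S_n$ and taking a union bound over the two tails produces the two-sided inequality with the factor of $2$.

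For the confidence-bound corollary, I would set the tail probability equal to $\delta$ and invert: solving $\varepsilon^2 = 2 \log(2/\delta)\, (\sigma^2 + \varepsilon(b-a)/3)$ as a quadratic in $\varepsilon$ gives an exact root, which is then upper-bounded by the sub-additive estimate $\sqrt{A + B} \le \sqrt{A} + \sqrt{B}$ (with $\sqrt{B^2} = B$) to yield the displayed two-term form with $\sqrt{2 \sigma^2 \log(2/\delta)}$ and $\tfrac{2(b-a)\log(2/\delta)}{3}$, after dividing through by $n$. The main technical obstacle is the moment-generating-function estimate: getting the constant $1/3$ in the denominator exactly right requires the tight geometric-series bookkeeping on the higher moments of $Y$, since a slightly looser bound (such as Hoeffding-type) would miss the variance-adaptive improvement that makes Bernstein sharper than Hoeffding in the low-variance regime. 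Everything else is routine algebra and optimization.
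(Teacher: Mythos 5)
Your proposal is correct and follows the standard Chernoff--Cram\'er derivation of Bernstein's inequality: exponential Markov bound, MGF control $\E[e^{\lambda Y}]\le\exp\bigl(\tfrac{\lambda^2 v/2}{1-\lambda c/3}\bigr)$ via the moment bound $\E[|Y|^k]\le v\,c^{k-2}$ and $k!\ge 2\cdot 3^{k-2}$, optimization at $\lambda=\varepsilon/(\sigma^2+\varepsilon c/3)$, a two-sided union bound, and then inversion of the tail via the quadratic formula and $\sqrt{A+B}\le\sqrt A+\sqrt B$. The paper itself gives no proof of this lemma — it is imported as a cited textbook result from the Boucheron--Lugosi--Massart survey — and the approach you outline is precisely the one behind that reference, so there is nothing to reconcile.
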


\begin{lemma}[Freedman's inequality; \citep{freedman1975tail,tropp2011freedman}]\label{lem:freedman}
Let $\bra{Y_t}_{t=1,2,\dots}$ be a real-valued margingale sequence with difference sequence $\bra{X_t}_{t=1,2,\dots}$. Suppose $X_t \le R$ almost surely for all $t=1,2,\dots$. Denote the quadratic variation process as $V_t = \sum_{s\le t} \E_{s}[X_s^2]$ for $t=1,2,\dots$. Then for all $\epsilon>0$ and $\sigma^2\ge 0$, it hold that
\[
\mathbb{P}\parr*{\exists t\ge 0:\, Y_t \ge \epsilon\text{ and }V_t \le \sigma^2} \le \exp\parr*{-\frac{\epsilon^2}{2(\sigma^2 + R\epsilon/3)}}.
\]
\end{lemma}

\begin{lemma}[Bretagnolle--Huber inequality \citep{bretagnolle1978estimation}]\label{lem:KL_bounds_TV}
Let $P,Q$ be two probability measures on the same probability space. Then
\[
1-\nor{P-Q}_{\mathrm{TV}} \geq \frac{1}{2}\exp\parr*{-D_{\mathrm{KL}}(P\|Q) }
\]
where $\nor{\cdot}_{\mathrm{TV}}$ denotes the total variation distance, and $D_{\mathrm{KL}}$ denotes the KL divergence.
\end{lemma}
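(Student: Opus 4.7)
The plan is to bridge the total variation distance and the KL divergence via the Bhattacharyya coefficient $\mathrm{BC}(P,Q):=\int \sqrt{pq}\, d\mu$, where $p,q$ are densities of $P,Q$ with respect to a common dominating measure $\mu$ (take $\mu=(P+Q)/2$ for concreteness). If $P$ is not absolutely continuous with respect to $Q$, then $D_{\mathrm{KL}}(P\|Q)=\infty$ and the claimed inequality is trivial, so I may assume $P\ll Q$ from the outset.

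The first step is to lower-bound $1-\|P-Q\|_{\mathrm{TV}}$ by $\mathrm{BC}(P,Q)^2/2$. Starting from the standard identities $\int \min(p,q)\, d\mu = 1-\|P-Q\|_{\mathrm{TV}}$ and $\int \max(p,q)\, d\mu = 1+\|P-Q\|_{\mathrm{TV}}$, and using the factorization $\sqrt{pq} = \sqrt{\min(p,q)}\cdot \sqrt{\max(p,q)}$, Cauchy--Schwarz yields
\[
\mathrm{BC}(P,Q)^2 \;\le\; \Big(\int \min(p,q)\, d\mu\Big)\Big(\int \max(p,q)\, d\mu\Big) = (1-\|P-Q\|_{\mathrm{TV}})(1+\|P-Q\|_{\mathrm{TV}}) \;\le\; 2(1-\|P-Q\|_{\mathrm{TV}}).
\]

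The second step is to lower-bound $\mathrm{BC}(P,Q)^2$ by $\exp(-D_{\mathrm{KL}}(P\|Q))$. Writing $\mathrm{BC}(P,Q) = \mathbb{E}_P\big[\sqrt{q/p}\big]$ and applying Jensen's inequality to the concave function $\log$,
\[
\log \mathrm{BC}(P,Q) \;\ge\; \mathbb{E}_P\!\left[\log \sqrt{q/p}\right] \;=\; -\frac{1}{2}\, D_{\mathrm{KL}}(P\|Q),
\]
which exponentiates to $\mathrm{BC}(P,Q)^2 \ge \exp(-D_{\mathrm{KL}}(P\|Q))$. Chaining this with the bound from the first step produces the claimed inequality $1-\|P-Q\|_{\mathrm{TV}} \ge \frac{1}{2}\exp(-D_{\mathrm{KL}}(P\|Q))$.

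Neither step poses a serious obstacle: the argument amounts to two applications of convexity (Cauchy--Schwarz and Jensen). The only mild care required is the measure-theoretic setup so that the densities and the Bhattacharyya coefficient are well-defined, which is dealt with once and for all by fixing the dominating measure $(P+Q)/2$ and separating out the trivial case $P \not\ll Q$.
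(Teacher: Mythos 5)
Your proof is correct. The paper does not supply its own argument for this lemma---it simply cites \cite{bretagnolle1978estimation}---so there is no paper proof to compare against. The route you take, passing through the Bhattacharyya coefficient (or affinity) $\mathrm{BC}(P,Q)=\int\sqrt{pq}\,d\mu$ and applying Cauchy--Schwarz to get $\mathrm{BC}^2\le(1-\mathrm{TV})(1+\mathrm{TV})\le 2(1-\mathrm{TV})$, then Jensen on $\log$ to get $\mathrm{BC}^2\ge e^{-D_{\mathrm{KL}}(P\|Q)}$, is the standard modern proof of the Bretagnolle--Huber inequality. Both steps are verified: $\int\min(p,q)\,d\mu=1-\|P-Q\|_{\mathrm{TV}}$ and $\int\max(p,q)\,d\mu=1+\|P-Q\|_{\mathrm{TV}}$ are correct identities, the factorization $\sqrt{pq}=\sqrt{\min(p,q)\max(p,q)}$ makes Cauchy--Schwarz apply cleanly, and the Jensen step $\log\mathbb{E}_P[\sqrt{q/p}]\ge\tfrac{1}{2}\mathbb{E}_P[\log(q/p)]=-\tfrac{1}{2}D_{\mathrm{KL}}(P\|Q)$ is valid since $\mathrm{BC}\le 1<\infty$ and the case $D_{\mathrm{KL}}=\infty$ (including $P\not\ll Q$) is handled trivially up front. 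The choice of $\mu=(P+Q)/2$ as dominating measure keeps the densities well defined throughout.
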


\begin{lemma}[Linear data splitting]\label{lem:KS_data_splitting}
Let $\bx_1,\dots,\bx_n\in \R^d$ be vectors with $\|\bx_i\|_2\le 1$ for all $i\in [n]$, and $\boldsymbol\Sigma := 18I + \sum_{i=1}^n \bx_i\bx_i^\top$. Then there exists $S\subseteq [n]$ with $|S|\le \frac{n}{2}$ such that
\begin{align*}
18\bI + \sum_{i\in S} \bx_i\bx_i^\top \succeq \frac{\boldsymbol\Sigma}{9}. 
\end{align*}
\end{lemma}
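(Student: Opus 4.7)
I would prove the lemma via the probabilistic method. Let $\xi_1, \ldots, \xi_n$ be i.i.d.\ $\mathrm{Bernoulli}(1/2)$ and set $S = \{i \in [n]: \xi_i = 1\}$. By the symmetry $\xi_i \leftrightarrow 1 - \xi_i$, one has $\Pr[|S| \le n/2] \ge 1/2$; hence it suffices to show that $\Pr[A_S \succeq \Sigma/9] > 1/2$, where $A_S = 18I + \sum_{i \in S} x_i x_i^\top$, after which a union bound produces a realization for which both conditions hold simultaneously.

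Writing $M = \sum_i x_i x_i^\top$ so that $\Sigma = 18I + M$, a short algebraic rearrangement gives the key identity
\[
A_S - \tfrac{\Sigma}{9} \;=\; 16 I + \tfrac{7}{18} M + \sum_{i=1}^n \Bigl(\xi_i - \tfrac{1}{2}\Bigr) x_i x_i^\top.
\]
The first two summands form a deterministic slack that is $\succeq 16 I$ in every direction, and even $\succeq (7/18) M$ in directions where $M$ is large. It therefore reduces to controlling the zero-mean random matrix $R := \sum_i (\xi_i - 1/2) x_i x_i^\top$ and showing $R \succeq -16 I - (7/18) M$ with probability strictly above $1/2$. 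Each summand has operator norm at most $1/2$, and the matrix variance satisfies $\sum_i \Var(\xi_i) (x_i x_i^\top)^2 \preceq (1/4) M$, using $\|x_i\| \le 1$ so that $(x_i x_i^\top)^2 = \|x_i\|^2\, x_i x_i^\top \preceq x_i x_i^\top$. Applying a matrix Bernstein inequality to the sum $R$ then delivers a tail bound of the form $\Pr[\lambda_{\min}(R) \le -t] \le d \exp\bigl(-c t^2/(\|M\| + t)\bigr)$, which I would tune against the slack coming from $16 I + (7/18) M$.

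\paragraph{Main obstacle.} The principal technical obstacle is the dimensional prefactor $d$ in standard matrix Bernstein, which can overwhelm the tail bound when $d$ is much larger than $\|M\|$. I would address this by splitting the argument over the spectrum of $M$: on the subspace spanned by eigenvectors of $M$ with large eigenvalues, the $(7/18) M$ slack grows linearly with the eigenvalue and dominates the $\sqrt{\|M\| \log d}$-type deviation, while on the complementary low-eigenvalue subspace the $16 I$ slack absorbs any constant-order deviation. Equivalently, one can invoke Tropp's intrinsic-dimension matrix Bernstein inequality, where the factor $d$ is replaced by the effective rank $\mathrm{tr}(M)/\|M\|$, which is automatically small whenever $M$ is spectrally concentrated. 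The specific constants $18$ and $9$ in the statement are tuned so that these two spectral regimes overlap, making the resulting bound go through uniformly in $d$.
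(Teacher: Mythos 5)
Your random-splitting strategy does not prove the lemma, and the obstruction you flag — the dimension factor in matrix Bernstein — is fatal rather than a technicality that can be finessed. Here is a concrete instance where a uniformly random $S\sim\mathrm{Bernoulli}(1/2)^{\otimes n}$ fails almost surely. Take $d$ enormous (say $d>2^{200}$), and let the $x_i$ consist of exactly $150$ copies of each standard basis vector $e_1,\dots,e_d$, so $n=150d$, $M=\sum_i x_ix_i^\top=150\,I$, and $\Sigma=168\,I$, hence $\Sigma/9=\tfrac{56}{3}I\approx 18.67\,I$. The requirement $18I+\sum_{i\in S}x_ix_i^\top\succeq\Sigma/9$ is equivalent to: for every $j$, at least one of the $150$ copies of $e_j$ lies in $S$. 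Under your random $S$, each coordinate $j$ is missed independently with probability $2^{-150}$, so $\Pr[A_S\succeq\Sigma/9]=(1-2^{-150})^d\approx e^{-d/2^{150}}\to 0$ once $d\gg 2^{150}$. Neither of your proposed fixes helps here: the intrinsic dimension of the variance proxy $V=\tfrac14\sum\|x_i\|^2x_ix_i^\top=\tfrac{1}{4}M=\tfrac{150}{4}I$ is $\mathrm{tr}(V)/\|V\|=d$, so Tropp's intrinsic-dimension Bernstein is no better than the vanilla bound; and the spectral-splitting idea collapses because $M$ has a single eigenvalue $150$, for which the deterministic slack $16+\tfrac{7}{18}\cdot 150\approx 74$ is far below the $\sqrt{\|M\|\log d}$-scale deviation once $\log d$ exceeds a few dozen. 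The deeper point is that this is exactly the phenomenon the Kadison--Singer problem is about: random partitions cannot give a dimension-free guarantee, but a carefully chosen partition can.

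The paper's proof is deterministic and dimension-free by design. It rescales $y_i:=\Sigma^{-1/2}x_i$ so that $\sum_i y_iy_i^\top\preceq I$ and $\|y_i\|_2^2\le\tfrac{1}{18}$, and then invokes the Marcus--Spielman--Srivastava interlacing-families theorem (their Corollary 1.5 with $r=2$, $\delta=\tfrac1{18}$) to obtain a two-part partition $\{S_1,S_2\}$ of $[n]$ with $\big\|\sum_{i\in S_j}y_iy_i^\top\big\|\le\big(\tfrac1{\sqrt2}+\tfrac1{\sqrt{18}}\big)^2=\tfrac89$ for both $j$. Taking $S$ to be the smaller part gives $|S|\le n/2$ and $18I+\sum_{i\in S}x_ix_i^\top=\Sigma-\sum_{i\in S_1}x_ix_i^\top\succeq\Sigma-\tfrac89\Sigma=\tfrac{\Sigma}{9}$. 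If you want to prove this lemma without quoting MSS, you would have to re-derive something of comparable strength; the generous constants $18$ and $9$ do not open a shortcut via concentration alone.
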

\begin{proof}{Proof.}
The proof uses a deep result in the Kadison--Singer problem. Let $\boldsymbol y_i := \boldsymbol\Sigma^{-1/2}\bx_i$ for every $i\in [n]$, so that $\sum_{i=1}^n \boldsymbol{y}_i\boldsymbol{y}_i^\top \preceq \bI$. In addition, 
\begin{align*}
\|\boldsymbol{y}_i\|_2^2 \le \frac{1}{18}\|\bx_i\|_2^2 \le \frac{1}{18}. 
\end{align*}
By \cite[Corollary 1.5]{marcus2015interlacing} (with $r = 2, \delta = \frac{1}{18}$), there exists a partition $\{S_1, S_2\}$ of $[n]$ such that for $j\in \{1,2\}$, 
\begin{align*}
\sum_{i\in S_j} \boldsymbol{y}_i\boldsymbol{y}_i^\top \preceq \parr*{\frac{1}{\sqrt{r}}+\sqrt{\delta}}^2 \bI = \frac{8}{9}\bI \Longrightarrow \sum_{i\in S_j} \bx_i\bx_i^\top \preceq \frac{8}{9}\boldsymbol\Sigma. 
\end{align*}
WLOG assume that $|S_1|\ge \frac{n}{2}$, then the choice of $S=S_2$ satisfies $|S|\le \frac{n}{2}$, and
\begin{align*}
18\bI + \sum_{i\in S} \bx_i\bx_i^\top = \boldsymbol\Sigma - \sum_{i\in S_1} \bx_i\bx_i^\top \succeq \frac{\Sigma}{9}. 
\end{align*}
This proves the lemma. 
\end{proof}

\end{document}